\newtheorem{prop}{Proposition}
\newcommand{\R}{\mathbb{R}}
\newcommand{\e}{\begin{equation}}
\newcommand{\ee}{\end{equation}}
\newcommand{\en}{\begin{equation*}}
\newcommand{\een}{\end{equation*}}
\newcommand{\eqn}{\begin{eqnarray}}
\newcommand{\eeqn}{\end{eqnarray}}
\newcommand{\bmat}{\begin{bmatrix}}
\newcommand{\emat}{\end{bmatrix}}
\DeclareMathAlphabet\mathbfcal{OMS}{cmsy}{b}{n}
\newcommand{\mb}{\mathbf}
\newcommand{\mc}{\mathcal}
\newcommand{\bb}{\mathbb}
\newcommand{\vct}[1]{\boldsymbol{#1}}
\newcommand{\mtx}[1]{\boldsymbol{#1}}
\newcommand{\trace}{\operatorname{trace}}
\newcommand{\diag}{\operatorname{diag}}
\def \st {\operatorname*{s.t.\ }}
\newcommand{\wt}{\widetilde}
\newcommand{\ol}{\overline}
\newcommand{\NC}{$\mc {NC}$}
\newcommand{\norm}[2]{\left\| #1 \right\|_{#2}}
\newcommand{\innerprod}[2]{\left\langle #1,  #2 \right\rangle}
\newcommand{\vy}{\vct{y}}
\newcommand{\mH}{\mtx{H}}
\newcommand{\mW}{\mtx{W}}
\newcommand{\mDelta}{\mtx{\Delta}}
\newlength{\imgwidth}
\newcommand{\twoCol}[2]{\ifthenelse{\boolean{twoColVersion}} {#1} {#2} }
\pgfplotsset{compat=1.15}
\newtheorem{theorem}{Theorem}[section]
\newtheorem{lemma}[theorem]{Lemma}
\renewcommand{\mathbf}{\boldsymbol}
\newcommand{ \Brac }[1]{\left\lbrace #1 \right\rbrace}
\newcommand{ \brac }[1]{\left[ #1 \right]}
\newcommand{ \paren }[1]{ \left( #1 \right) }
\DeclareMathOperator{\ddiag}{ddiag}
\DeclareMathOperator{\grad}{grad}
\DeclareMathOperator{\Hess}{Hess}
\newcommand{\mr}{\mathrm}
\newcommand{\Kmod}{K\mbox{\textup{ mod }} 2}
\title{Neural Collapse with Normalized Features:\\ A Geometric Analysis over the Riemannian Manifold}
\author[1]{Can Yaras\footnote{The first two authors contributed to this work equally.}}
\newcommand\CoAuthorMark{\footnotemark[\arabic{footnote}]}
\author[1]{Peng Wang\protect\CoAuthorMark}
\author[2]{Zhihui Zhu}
\author[1]{Laura Balzano}
\author[1]{Qing Qu}
\affil[1]{Department of Electrical Engineering \& Computer Science, University of Michigan} 
\affil[2]{Department of Computer Science \& Engineering, Ohio State University}
\date{}
\begin{document}

\maketitle

\begin{abstract}
When training overparameterized deep networks for classification tasks, it has been widely observed that the learned features exhibit a so-called ``neural collapse'' phenomenon. More specifically, for the output features of the penultimate layer, for each class the within-class features converge to their means, and the means of different classes exhibit a certain tight frame structure, which is also aligned with the last layer's classifier. As feature normalization in the last layer becomes a common practice in modern representation learning, in this work we theoretically justify the neural collapse phenomenon for normalized features. Based on an unconstrained feature model, we simplify the empirical loss function in a multi-class classification task into a nonconvex optimization problem over the Riemannian manifold by constraining all features and classifiers over the sphere. In this context, we analyze the nonconvex landscape of the Riemannian optimization problem over the product of spheres, showing a benign global landscape in the sense that the only global minimizers are the neural collapse solutions while all other critical points are strict saddles with negative curvature. Experimental results on practical deep networks corroborate our theory and demonstrate that better representations can be learned faster via feature normalization. The code for our experiments can be found at \url{https://github.com/cjyaras/normalized-neural-collapse}.

\end{abstract}

\section{Introduction}\label{sec:intro}
Despite the tremendous success of deep learning in engineering and scientific applications over the past decades, the underlying mechanism of deep neural networks (DNNs) still largely remains mysterious. Towards the goal of understanding the learned deep representations, a recent line of seminal works \cite{papyan2020prevalence,han2021neural,fang2021exploring,zhu2021geometric,zhou2022optimization} presents an intriguing phenomenon that persists across a range of canonical classification problems during the terminal phase of training. Specifically, it has been widely observed that last-layer features (i.e., the output of the penultimate layer) and last-layer linear classifiers of a trained DNN exhibit 
simple but elegant mathematical structures, in the sense that
\begin{itemize}[leftmargin=*]
     \item \textbf{(NC1) Variability Collapse:} the individual features of each class concentrate to their class-means.
    \item \textbf{(NC2) Convergence to Simplex ETF:} the class-means have the same length and are maximally distant. In other words, they form a Simplex Equiangular Tight Frame (ETF).
    \item \textbf{(NC3) Convergence to Self-Duality:} the last-layer linear classifiers perfectly match their class-means.
\end{itemize}
Such a phenomenon is referred to as \emph{Neural Collapse} (\NC) \cite{papyan2020prevalence}, which has been shown empirically to persist across a broad range of canonical classification problems, on different loss functions (e.g., cross-entropy (CE)~\cite{papyan2020prevalence,zhu2021geometric,fang2021exploring}, mean-squared error (MSE)~\cite{tirer2022extended,zhou2022optimization}, and supervised contrasive (SC) losses \cite{graf2021dissecting}), on different neural network architectures (e.g., VGG~\cite{simonyan2014very}, ResNet \cite{he2016deep}, and DenseNet \cite{huang2017densely}), and on a variety of standard datasets (such as MNIST \cite{lecun2010mnist}, CIFAR \cite{krizhevsky2009learning}, and ImageNet \cite{deng2009imagenet}, etc). Recently, in independent lines of research, many works are devoted to learning  maximally compact and separated features; see, e.g., \cite{wen2016discriminative,liu2016large,ranjan2017l2,wang2017normface,liu2017sphereface,wang2018cosface,deng2019arcface,pernici2019maximally,pernici2021regular}.
This has also been widely demonstrated in a number of recent works \cite{cohen2020separability,mamou2020emergence,doimo2020hierarchical,naitzat2020topology,recanatesi2019dimensionality,frosst2019analyzing,hofer2020topologically}, including state-of-the-art natural language models (such as BERT, RoBERTa, and GPT) \cite{mamou2020emergence}. 

\begin{figure}[t]
            \centering
            \includegraphics[width=0.8\textwidth]{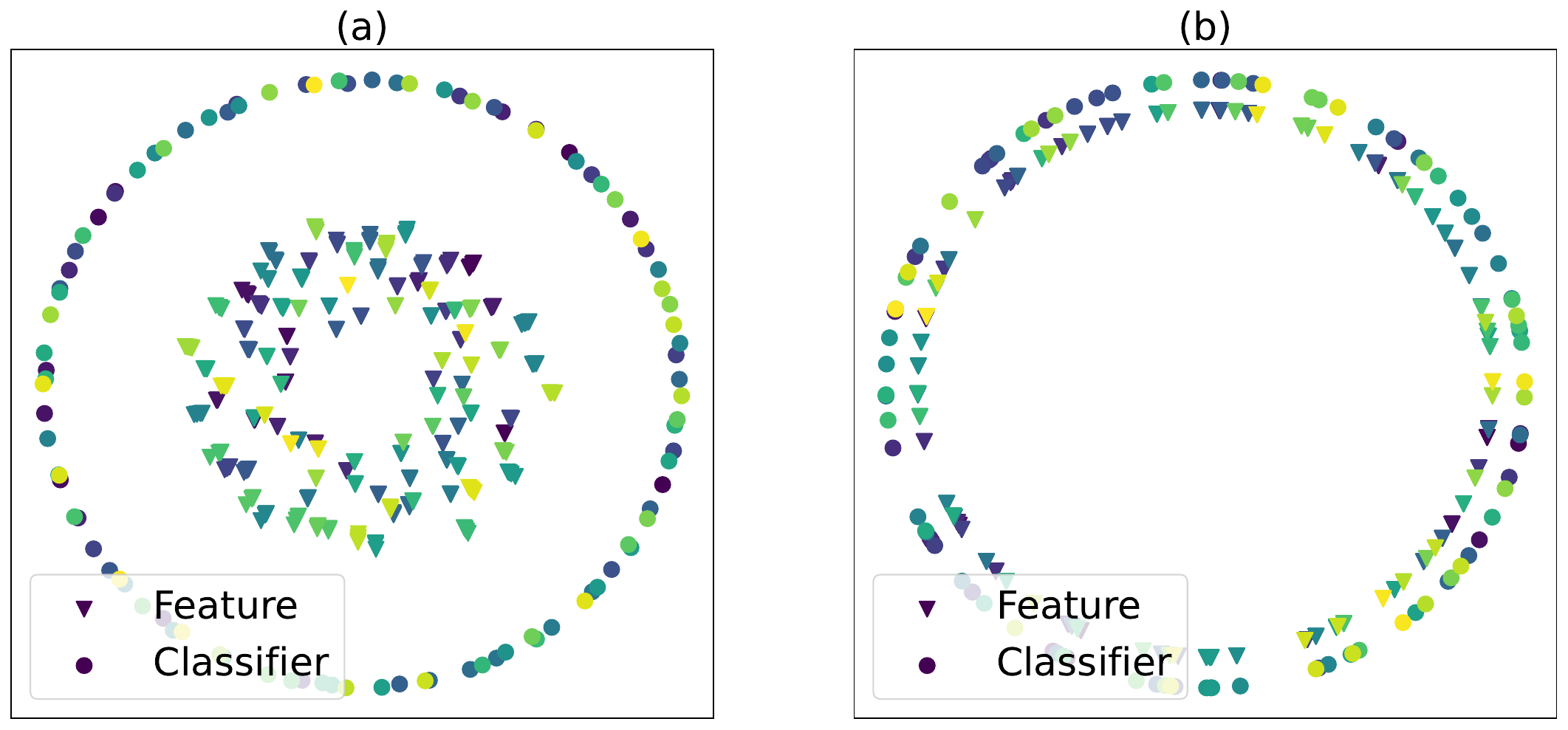}
            \caption{\textbf{Comparison of features found with and without normalization.} $K=100$ classes with $n=5$ samples per class. Features and classifiers are found through optimizing the cross-entropy loss with a UFM, where features are embedded in 2-dimensional space, i.e., $d=2$. (a) No normalization of features or classifiers. (b) Features and classifiers are constrained to the unit sphere (features are scaled down for visualization purposes). }
            \label{fig:low_dim_features}
\end{figure}
\begin{table}[t]
    \centering
    \renewcommand{\arraystretch}{1.1}
    \scalebox{1.1}{
    \begin{tabular}{|l|c|c|} \hline
         & \textbf{Average CE Loss} & \textbf{Average Accuracy} \\ \hline
        No Normalization & $1.63 \pm 0.03$ & $49.9\% \pm 2.39\%$ \\ \hline
        Normalization & $3.84 \pm 0.00$ & $100.0\% \pm 0.00\%$ \\ \hline
    \end{tabular}}
    \caption{Average UFM feature loss and accuracy over 10 trials with and without normalization to sphere, with the same set-up as in \Cref{fig:low_dim_features}.}
    \label{tab:low_dim_acc}
    \renewcommand{\arraystretch}{1}
\end{table}

\paragraph{Motivations \& contributions.}

In this work, we further demystify why \NC\ happens in network training with a common practice of feature normalization (i.e., normalizing the last-layer features on the unit hypersphere), mainly motivated by the following reasons:
\begin{itemize}[leftmargin=*]
    \item \emph{Feature normalization is a common practice in training deep networks.} 
    Recently, many existing results demonstrated that training with feature normalization often improves the quality of learned representation with better class separation \cite{ranjan2017l2,liu2017sphereface,wang2018cosface,deng2019arcface,chan2020deep,yu2020learning,wang2020understanding,graf2021dissecting}. Such a representation is closely related to the discriminative representation in literature; see, e.g., \cite{ranjan2017l2,liu2017sphereface,wang2018cosface,yu2022towards}. As illustrated in \Cref{fig:low_dim_features} and \Cref{tab:low_dim_acc}, experimental results visualized in low-dimensional space show that features learned with normalization are more uniformly distributed over the sphere and hence are more linearly separable than those learned without normalization. In particular, it has been shown that the learned representations with larger class separation usually lead to improved test performances; see, e.g., \cite{khosla2020supervised,graf2021dissecting}. Moreover, it has been demonstrated that discriminative representations can also improve robustness to mislabeled data \cite{chan2020deep,yu2020learning}, and has become a common practice in recent advances on (self-supervised) pretrained models \cite{chen2020simple,wang2020understanding}.
    \item \emph{A common practice of theoretically studying \NC\ with norm constraints.} Due to these practical reasons, many existing theoretical studies on \NC\ consider formulations with both the norms of features and classifiers constrained \cite{lu2020neural,wojtowytsch2020emergence,fang2021exploring,graf2021dissecting,ji2021unconstrained}. Based upon assumptions of unconstrained feature models \cite{mixon2020neural,zhu2021geometric,fang2021exploring}, these works show that the only global solutions satisfy \NC\ properties for a variety of loss functions (e.g., MSE, CE, SC losses, etc). Nonetheless, they only focused on the global optimality conditions without looking into the nonconvex landscapes, and therefore failed to explain why these \NC\;solutions can be efficiently reached by classical training algorithms such as stochastic gradient descent (SGD).
\end{itemize}
In this work we study the global nonconvex landscape of training deep networks with norm constraints on the features and classifiers. We consider the commonly used CE loss and formulate the problem as a Riemannian optimization problem over products of unit spheres (i.e., the oblique manifold). Our study is also based upon the assumption of the so-called \emph{unconstrained feature model} (UFM) \cite{mixon2020neural,zhu2021geometric,zhou2022optimization} or \emph{layer-peeled model} \cite{fang2021layer}, where the last-layer features of the deep network are treated as free optimization variables to simplify the nonlinear interactions across layers. The underlying reasoning is that modern deep networks are often highly overparameterized with the capacity of learning any representations \cite{lu2017expressive,hornik1991approximation,shaham2018provable}, so that the last-layer features can approximate, or interpolate, any point in the feature space. \\~\\
Assuming the UFM, we show that the Riemannian optimization problem has a benign global landscape, in the sense that the loss with respect to (w.r.t.) the features and classifers is a strict saddle function \cite{ge2015escaping,sun2015nonconvex} over the Riemannian manifold. More specifically, we prove that \emph{every} local minimizer is a global solution satisfying the \NC\ properties, and \emph{all} the other critical points exhibit directions with negative curvature. Our analysis for the manifold setting is based upon a nontrivial extension of recent studies for the \NC\ with penalized  formulations \cite{zhu2021geometric,zhou2022optimization,tirer2022extended,han2021neural}, which could be of independent interest. Our work brings new tools from Riemannian optimization for analyzing optimization landscapes of training deep networks with an increasingly common practice of feature normalization. At the same time, we empirically demonstrate the advantages of the Riemannian/constrained formulation over its penalized counterpart for training deep networks -- faster training and higher quality representations. Lastly, under the UFM we believe that the benign landscape over the manifold could hold for many other popular training losses beyond CE, such as the (supervised) contrastive loss \cite{khosla2020supervised}. We leave this for future exploration.

\paragraph{Prior arts and related works on \NC.} The empirical \NC\ phenomenon has inspired a recent line of theoretical studies on understanding why it occurs ~\cite{graf2021dissecting,fang2021layer,lu2020neural,mixon2020neural,tirer2022extended,zhou2022optimization,zhu2021geometric}. Like ours, most of these works studied the problem under the UFM. In particular, despite the nonconvexity, recent works showed that the \emph{only} global solutions are \NC\ solutions for a variety of nonconvex training losses (e.g., CE \cite{zhu2021geometric,fang2021layer,lu2020neural}, MSE \cite{tirer2022extended,zhou2022optimization}, SC losses \cite{graf2021dissecting}) and different problem formulations (e.g., penalized, constrained, and unconstrained) \cite{zhu2021geometric,zhou2022optimization,tirer2022extended,han2021neural,lu2020neural}. Recently, this study has been extended to deeper models with the MSE training loss \cite{tirer2022extended}. More surprisingly, it has been further shown that the nonconvex losses under the UFM have benign global optimization landscapes, in the sense that every local minimizer satisfies \NC\ properties and the remaining critical points are strict saddles with negative curvature. Such results have been established for both CE and MSE losses \cite{zhu2021geometric,zhou2022optimization}, where they considered the unconstrained formulations with regularization on both features and classifiers. We should also mention that the benign global optimization landscapes of many other problems in neural networks have been widely found in the literature; see, e.g., \cite{zhang2021expressivity,laurent2018deep,sun2020global,liu2022loss,soltanolkotabi2018theoretical}.  

Moreover, there is a line of recent works investigating the benefits of \NC\ on generalizations of deep networks. The work \cite{galanti2022on} shows that \NC\ also happens on test data drawn from the same distribution asymptotically, but less collapse for finite samples \cite{hui2022limitations}. Other works \cite{hui2022limitations,papyan2020traces} demonstrated that the variability collapse of features is actually happening progressively from shallow to deep layers, and \cite{ben2022nearest} showed that test performance can be improved when enforcing variability collapse on features of intermediate layers. The works \cite{xie2022neural,yang2022we} showed that fixing the classifier as a simplex ETF improves test performance on imbalanced training data and long-tailed classification problems. We refer interested readers to a recent survey on this emerging topic \cite{kothapalli2022neural}.

\paragraph{Notation.} Let $\R^n$ be the $n$-dimensional Euclidean space and $\|\cdot\|_2$ be the Euclidean norm. We write matrices in bold capital letters such as $\mb A$, vectors in bold lower-case such as $\mb a$, and scalars in plain letters such as $a$. Given a matrix $\mb A \in \R^{d\times K}$, we denote its $k$-th column by $\mb a_k$, its $i$-th row by $\mb a^i$, its $(i,j)$-th element by $a_{ij}$, and let $\|\mb A\|$ be its spectral norm. We use $\diag(\mb A)$ to denote a vector that consists of diagonal elements of $\mb A$, and we use $\ddiag(\mb A)$ to denote a diagonal matrix composed by only the diagonal entries of $\mb A$. Given a positive integer $n$, we denote the set $\{1,\dots,n\}$ by $[n]$. We denote the unit hypersphere in $\R^d$ by $\bb S^{d-1} := \{\mb x \in \R^d: \|\mb x\|_2 = 1\}$. 






\section{Nonconvex Formulation with Spherical Constraints}\label{sec:formulation}
In this section, we review the basic concepts of deep neural networks and introduce notation that will be used throughout the paper. Based upon this, we formally introduce the problem formulation over the Riemannian manifold under the assumption of the UFM.

\subsection{Basics of Deep Neural Networks}\label{subsec:basics}

In this work, we focus on the multi-class (e.g., $K$ classes) classification problem. Given input data $\mb x \in \bb R^D$, the goal of deep learning is to learn a deep hierarchical representation (or feature) $\mb h(\mb x)=\phi_{\mb \theta}(\mb x) \in \bb R^d $ of the input along with a linear classifier\footnote{We write $\mb W = \mb W_L^\top$ in the transposed form for the simplicity of analysis.} $\mb W \in \bb R^{d \times K}$ such that the output $\psi_{\mb \Theta}(\mb x) = \mb W^\top \mb h(\mb x)$ of the network fits the input $\mb x$ to an one-hot training label $\mb y\in \bb R^K$.
More precisely, in vanilla form an $L$-layer fully connected deep neural network can be written as
\begin{align}\label{eq:func-NN}
    \psi_{\mb \Theta}(\mb x) \;=\;  \underbrace{ \mb W_L}_{ \text{\bf linear classifier}\;\mb W = \mb W_L^\top } \;   \; \underbrace{\sigma\paren{ \mb W_{L-1} \cdots \sigma \paren{\mb W_1 \mb x + \mb b_1} + \mb b_{L-1} }}_{\text{ \bf feature}\;\; \mb h\;=\; \phi_{\mb \theta}(\mb x)}  \;+\; \mb b_L,
\end{align}
where each layer is composed of an affine transformation, represented by some weight matrix $\mb W_k$, and bias $\mb b_k$, followed by a nonlinear activation $\sigma(\cdot)$, and $\mb \Theta = \Brac{ \mb W_k, \mb b_k }_{k=1}^L$ and $\mb \theta = \Brac{ \mb W_k, \mb b_k }_{k=1}^{L-1}$ denote the weights for \emph{all} the network parameters and those up to the last layer, respectively. 
Given training samples $\{(\mb x_{k,i},\mb y_k)\} \subseteq \R^D \times \R^K$ drawn from the same data distribution $\mc D$, we learn the network parameters $\mb \Theta$ via minimizing the empirical risk over these samples,
\begin{align}\label{eqn:dl-ce-loss}
    \min_{\mb \Theta} \; \sum_{k=1}^K \sum_{i=1}^{n_k} \mc L_{\mathrm{CE}} \paren{ \psi_{\mb \Theta}(\mb x_{k,i}),\vy_k }, \quad \text{s.t.}\quad \mb \Theta \in \mc C,
\end{align}
where $\vy_k \in \bb R^K $ is a one-hot vector with only the $k$-th entry being $1$ and the remaining ones being $0$ for all $k \in [K]$, $\mb x_{k,i} \in \bb R^D$ is the $i$-th sample in the $k$-th class, $\Brac{n_k}_{k=1}^K$ denotes the number of training samples in each class, and the set $\mc C$ denotes the constraint set of the network parameters $\mb \Theta$ that we will specify later. In this work, we study the most widely used CE loss of the form
\begin{align*}
    \mc L_{\mathrm{CE}}(\mb z, \mb y_k) \;:=\; - \log \paren{  \frac{ \exp(z_k) }{ \sum_{\ell=1}^K \exp(z_{\ell}) } }.
\end{align*}

\subsection{Riemannian Optimization over the Product of Spheres}
        
For the $K$-class classification problem, let us consider a simple case where the number of training samples in each class is balanced (i.e., $n=n_1=n_2= \cdots =n_K$) and let $N=Kn$, and we assume that all the biases $\Brac{\mb b_k}_{k=1}^L$ are zero with the last activation function $\sigma(\cdot)$ before the output to be linear. Analyzing deep networks $\psi_{\mb \Theta}(\mb x)$ is a tremendously difficult task mainly due to the \emph{nonlinear interactions} across a large number of layers. 
To simplify the analysis, we assume the so-called \emph{unconstrained feature model} (UFM) following the previous works \cite{ji2021unconstrained,mixon2020neural,graf2021dissecting,zhu2021geometric}. More specifically, we simplify the nonlinear interactions across layers by treating the last-layer features $\mb h_{k,i} = \phi_{\mb \theta}(\mb x_{k,i}) \in \bb R^d$ as \emph{free} optimization variables,
where the underlying reasoning is that modern deep networks are often highly overparameterized to approximate any continuous function \cite{lu2017expressive,hornik1991approximation, shaham2018provable}. Concisely, we rewrite all the features in a matrix form as 
\begin{align*}
    \mb H \;=\; \begin{bmatrix}
    \mb H_1 & \mb H_2 & \cdots & \mb H_K
    \end{bmatrix} \in \bb R^{d \times N}, \; \mb H_k \;=\; \begin{bmatrix}
    \mb h_{k,1} & \mb h_{k,2} & \cdots & \mb h_{k,n}
    \end{bmatrix} \in \bb R^{d \times n}, \;\forall\ k \in [K],
\end{align*}
and correspondingly denote the classifier $\mb W$ by
\begin{align*}
    \mb W \;=\; \begin{bmatrix}
    \mb w_1 & \mb w_2 & \cdots & \mb w_K 
    \end{bmatrix} \in \bb R^{ d \times K},\quad \mb w_k \in \bb R^d,\quad \forall\ k \in [K].
\end{align*}Based upon the discussion in \Cref{sec:intro}, we assume that both the features $\mb H$ and the classifiers $\mb W$ are normalized,\footnote{In practice, it is a common practice to normalize the output feature $\mb h_o$ by its norm, i.e., $\mb h = \mb h_o /\norm{\mb h_o}{2}$, so that $\norm{\mb h}{2}=1$.} in the sense that $\|\mb h_{k,i} \|_2 = 1$, $\|\mb w_k\|_2 = \tau$, for all $k \in [K]$ and all $i \in [n]$, where $\tau>0$ is a temperature parameter. 
As a result, we obtain a \emph{constrained} formulation over a Riemannian manifold 
\begin{align}\label{P:1}
     \min_{\mb W , \mb H } \; \frac{1}{N} \sum_{k=1}^K \sum_{i=1}^{n} \mc L_{\mathrm{CE}} \paren{  \mb W^\top \mb h_{k,i}, \mb y_k } \quad \text{s.t.}\; \|\mb w_k\|_2 \;=\; \tau,\; \|\mb h_{k,i}\|_2 \;=\;1,\ \forall\ i \in [n],\;\forall\ k \in [K].
\end{align}
Since the temperature parameter $\tau$ can be absorbed into the loss function, we replace $\mb w_k$ by $\tau \mb w_k$ and change the original constraint into $\|\mb w_k\|_2 = 1$ for all $k \in [K]$. 
In particular, the product of spherical constraints forms an \emph{oblique manifold} \cite{boumal2022intromanifolds} embedded in Euclidean space, 
\begin{align*}
    \mc {OB}(d,K) \;:=\; \Brac{ \mb Z \in \bb R^{d \times K} \mid \mb z_k \in \bb S^{d-1},\;\forall\ k \in [K] }.
\end{align*}
Consequently, we can rewrite Problem \eqref{P:1} as a Riemannian optimization problem over the oblique manifold w.r.t. $\mb W$ and $\mb H$:
\begin{align} 
     \min_{\mb W, \mb H} &\; f(\mb W,\mb H) \;:=\; \frac{1}{N} \sum_{k=1}^K \sum_{i=1}^{n} \mc L_{\mathrm{CE}} \paren{ \tau \mb W^\top \mb h_{k,i}, \mb y_k }, \label{eq:ce-loss func} \\
     \;\text{s.t.} & \quad \mb H \in \mc {OB}(d,N),\;  \mb W \in\; \mc {OB}(d,K). \nonumber 
\end{align}
In \Cref{sec:results}, we will show that all global solutions of Problem \eqref{eq:ce-loss func} satisfy \NC\ properties, and its objective function  is a strict saddle function \cite{jin2017escape,du2017gradient} of $(\mb W,\mb H)$ over the oblique manifold so that the \NC\ solution can be efficiently achieved. 

\begin{wrapfigure}{R}{0.37\textwidth}
\centering	\includegraphics[height=1.8in]{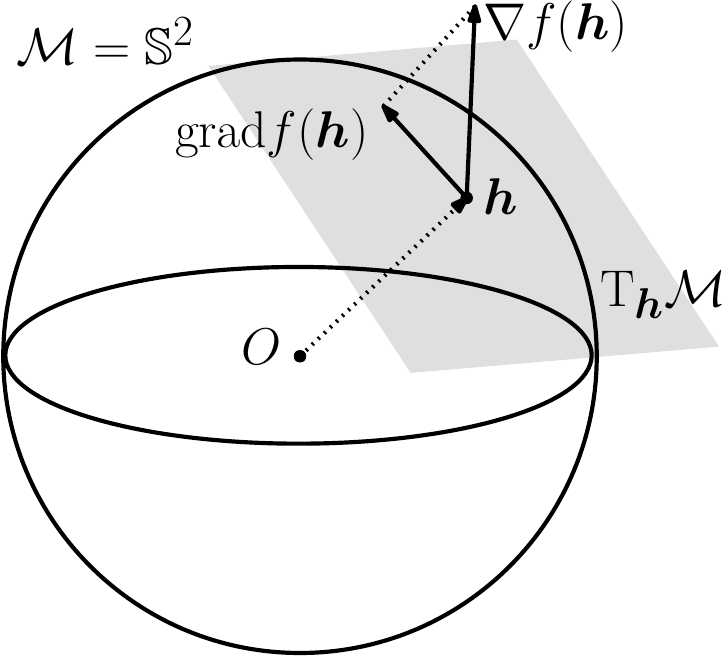}
	\caption{\textbf{An illustration of the Riemannian gradient of $f(\mb h)$ on a simple manifold $\mc {OB}(3,1)$.}
	}\label{fig:manifold}
\end{wrapfigure}
\paragraph{Riemannian derivatives over the oblique manifold.} In \Cref{sec:results}, we will use tools from Riemannian optimization to characterize the global optimality condition and the geometric properties of the optimization landscape of Problem \eqref{eq:ce-loss func}. Before that, let us first briefly introduce some basic derivations of the Riemannian gradient and Hessian, defined on the tangent space of the oblique manifold. For more technical details, we refer the readers to \Cref{subsec:Rie tools}. 
According to \cite[Chapter 3 \& 5]{boumal2022intromanifolds} and \cite{hu2020brief,absil2009optimization}, we can calculate the Riemannian gradients and Hessian of Problem \eqref{eq:ce-loss func} as follows. Since those quantities are defined on the tangent space, according to \cite[Section 3.1]{hu2020brief} and the illustration in \Cref{fig:manifold}, we first obtain the tangent space to $\mc {OB}(d,K)$ at $\mb W$ as
\begin{align*}
    \mathrm{T}_{\mb W} \mc {OB} (d,K)  \;=\; \Brac{ \mb Z \in \bb R^{d \times K} \mid \diag \paren{ \mb W^\top \mb Z } = \mb 0 }.
\end{align*}
Note that the tangent space contains all $\mb Z$ such that $\mb z_k$ is orthogonal to $\mb w_k$ for all $k$. When $K = 1$, it reduces to the tangent space to the unit sphere $\bb S^{d-1}$. \\
~\\
Analogously, we can derive the tangent space for $\mb H$ with a similar form. Let us define
\begin{align*}
    \mb M := \tau\mb W^\top\mb H,\ g(\mb M) := f(\mb W, \mb H). 
\end{align*}
First, the Riemannian gradient of $f(\mb W, \mb H)$ of Problem \eqref{eq:ce-loss func} is basically the projection of the ordinary Euclidean gradient $\nabla f(\mb W, \mb H)$ onto its tangent space, i.e., $\grad_{\mb W} f(\mb W,\mb H) = \mc P_{\mathrm{T}_{\mb W} \mc {OB} (d,K)}( \nabla_{\mb W} f(\mb W,\mb H) ) $ and $\grad_{\mb H} f(\mb W,\mb H) = \mc P_{\mathrm{T}_{\mb H} \mc {OB} (d,N)}( \nabla_{\mb H} f(\mb W,\mb H) ) $. More specifically, we have 
\begin{align}
        \grad_{\mb W } f(\mb W,\mb H) \;&=\; \tau \mb H \nabla g(\mb M)^\top  - \tau \mb W \ddiag \paren{ \mb W^\top \mb H \nabla g(\mb W)^\top }\label{eq:rigrad W}, \\ 
        \grad_{\mb H} f(\mb W,\mb H) \;&=\;\tau \mb W \nabla g(\mb M) - \tau\mb H  \ddiag \paren{ \mb H^\top \mb W \nabla g(\mb M)  }. \label{eq:rigrad H}
\end{align}
Second, for any $\mb \Delta = (\mb \Delta_{\mb W} , \mb \Delta_{\mb H} ) \in \bb R^{d \times K} \times \bb R^{d \times N}$, we compute the Hessian bilinear form of $f(\mb W,\mb H)$ along the direction $\mb \Delta$ by
\begin{align}\label{eq:euc-hessian}
    \nabla^2 f(\mb W,\mb H)[ \mb \Delta,\mb \Delta ] \;=\;&  \nabla^2g(\mb M) \brac{ \tau\left(\mW^\top \mDelta_{\mH} + \mDelta_{\mW}^\top \mH \right), \tau\left(\mW^\top \mDelta_{\mH} + \mDelta_{\mW}^\top \mH\right) } \nonumber \\
    &+ 2\tau \innerprod{\nabla g(\mb M)}{\mDelta_{\mW}^\top \mDelta_{\mH}}.
\end{align}
We compute the Riemannian Hessian bilinear form of $f(\mb W,\mb H)$ along any direction $\mb \Delta \in \mathrm{T}_{ \mb W }  \mc {OB}(d,K )  \times \mathrm{T}_{ \mb H}\mc {OB} (d,N) $ by
\begin{align}\label{eq:bilinear Hess}
    \Hess f(\mb W, \mb H)[\mb \Delta,\mb \Delta] \;=\;&  \nabla^2 f(\mb W,\mb H)[ \mb \Delta,\mb \Delta ] - \langle \mb \Delta_{\mb W}\ddiag\left(\mb M\nabla g(\mb M)^\top\right), \mb \Delta_{\mb W}  \rangle \notag\\
    & - \langle \mb \Delta_{\mb H}\ddiag\left(\mb M^\top\nabla g(\mb M)\right), \mb \Delta_{\mb H}  \rangle,
\end{align}
where the extra terms besides $\nabla^2 f(\mb W,\mb H)[ \mb \Delta,\mb \Delta ]$ represent the curvatures induced by the oblique manifold. We refer to \Cref{app:hessderivation} for the derivations of \eqref{eq:euc-hessian} and \eqref{eq:bilinear Hess}.
In the following section, we will use the Riemannian gradient and Hessian to characterize the optimization landscape of Problem \eqref{eq:ce-loss func}.

\section{Main Theoretical Analysis}\label{sec:results}
In this section,  we first characterize the structure of the global solution set of Problem \eqref{eq:ce-loss func}. Based upon this, we analyze the global landscape of Problem \eqref{eq:ce-loss func} via characterizing its Riemannian derivatives.

\subsection{Global Optimality Condition}

For the feature matrix $\mb H$, let us denote the class mean for each class by
\begin{align}\label{eqn:H-mean}
    \ol{\mb h}_k\;:=\; \frac{1}{n} \sum_{i=1}^n \mb h_{k,i},\ \forall\ k \in [K],\quad \text{and}\quad \ol{\mb H} \;:=\; \begin{bmatrix}
     \ol{\mb h}_1 & \cdots & \ol{\mb h}_K
     \end{bmatrix} \in \bb R^{d \times K}.
\end{align}
Based upon this, we show any global solution of Problem \eqref{eq:ce-loss func} exhibits \NC\ properties in the sense that it satisfies ({\bf NC1}) variability collapse, ({\bf NC2}) convergence to simplex ETF, and ({\bf NC3}) convergence to self-duality.

\begin{theorem}[Global Optimality Condition]\label{thm:ce optim}
Suppose that the feature dimension is no smaller than the number of classes, i.e., $d \ge K$, and the training labels are balanced in each class, i.e.,  $n = n_1 = \cdots =n_K$. 
Then for the CE loss $f(\mb W, \mb H)$ in Problem \eqref{eq:ce-loss func}, it holds that
\begin{align*}
             f(\mb W,\mb H) \, &\;\geq \; \log \left( 1+(K-1)\exp\left(-\frac{K\tau}{K-1}\right) \right)
\end{align*}
for all $\mb W = [\mb w_1, \dots, \mb w_K] \in \mc {OB}(d,K)$ and $\mb H = [\mb h_{1,1},\dots, \mb h_{K,n}] \in \mc {OB}(d,N)$. In particular, equality holds if and only if 
    \begin{itemize}[leftmargin=*]
        \item{({\bf NC1})} \textbf{Variability collapse:} $\mb h_{k,i} = \ol{\mb h}_k,\ \forall\ i\in [n]$;  
        \item{({\bf NC2})} \textbf{Convergence to simplex ETF:} $\{ \ol{\mb h}_k \}_{k=1}^{K}$ form a sphere-inscribed simplex ETF in the sense that 
        \begin{align*}
            \ol{\mb H}^\top \ol{\mb H}  \;=\;\frac{1}{K-1}  \paren{ K \mb I_K - \mb 1_K \mb 1_K^\top },\quad \ol{\mb H} \in \mc {OB}(d,K).
        \end{align*}
        \item{({\bf NC3})} \textbf{Convergence to self-duality:} $\mb w_k = \ol{\mb h}_k,\ \forall\ k \in [K]$.
    \end{itemize}
\end{theorem}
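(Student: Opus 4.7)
My strategy is a two-level Jensen/AM--GM sandwich followed by a trace Cauchy--Schwarz bound that uses the sphere constraints. Write $\mb M = \tau \mb W^\top \mb H$ with columns $\mb m_{k,i}$ and expand the CE loss as
\begin{equation*}
\mc L_{\mathrm{CE}}(\mb m_{k,i},\mb y_k) \;=\; \log\Bigl(1 + \sum_{\ell\neq k}\exp\bigl(\tau(\mb w_\ell-\mb w_k)^\top \mb h_{k,i}\bigr)\Bigr).
\end{equation*}
First I would apply AM--GM to the $K-1$ exponentials inside each log, giving
\begin{equation*}
\mc L_{\mathrm{CE}}(\mb m_{k,i},\mb y_k) \;\geq\; \log\Bigl(1 + (K-1)\exp\Bigl(\tfrac{\tau}{K-1}\sum_{\ell\neq k}(\mb w_\ell-\mb w_k)^\top \mb h_{k,i}\Bigr)\Bigr),
\end{equation*}
with equality iff $\mb w_\ell^\top\mb h_{k,i}$ is constant over $\ell\neq k$. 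Then, since $x\mapsto\log(1+(K-1)e^x)$ is convex, a second application of Jensen's inequality after averaging over $(k,i)$ yields
\begin{equation*}
f(\mb W,\mb H)\;\geq\;\log\Bigl(1+(K-1)\exp\Bigl(\tfrac{\tau}{N(K-1)}\, S\Bigr)\Bigr),\qquad S := \sum_{k,i}\sum_{\ell\neq k}(\mb w_\ell-\mb w_k)^\top \mb h_{k,i},
\end{equation*}
with equality iff $\sum_{\ell\neq k}(\mb w_\ell-\mb w_k)^\top\mb h_{k,i}$ is constant in $(k,i)$.

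Since $\log(1+(K-1)e^x)$ is monotone, the problem reduces to the uniform lower bound $S \geq -NK$. Using the class-mean matrix $\ol{\mb H}$ from \eqref{eqn:H-mean} and the identity $\sum_{\ell\neq k}(\mb w_\ell-\mb w_k)^\top\mb h_{k,i} = \mb 1^\top \mb W^\top\mb h_{k,i} - K\mb w_k^\top\mb h_{k,i}$, one gets
\begin{equation*}
S \;=\; n\bigl[\mb 1_K^\top \mb W^\top\ol{\mb H}\mb 1_K \;-\; K\,\mathrm{tr}(\mb W^\top\ol{\mb H})\bigr] \;=\; -n\,\mathrm{tr}\bigl(\mb W^\top \ol{\mb H}\,(K\mb I_K-\mb 1_K\mb 1_K^\top)\bigr).
\end{equation*}
The main technical step, and the one I expect to require the most care, is bounding this trace from above by $K^2$. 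I would use that $K\mb I_K-\mb 1_K\mb 1_K^\top = K\mb P$, where $\mb P$ is the orthogonal projector onto $\mb 1_K^\perp$, so $\mathrm{tr}(\mb W^\top\ol{\mb H}(K\mb I-\mb 1\mb 1^\top)) = K\,\mathrm{tr}((\mb W\mb P)^\top(\ol{\mb H}\mb P))$. Then by the trace Cauchy--Schwarz inequality and $\mb P\preceq\mb I_K$,
\begin{equation*}
K\,\mathrm{tr}((\mb W\mb P)^\top(\ol{\mb H}\mb P))\;\leq\; K\,\|\mb W\mb P\|_F\,\|\ol{\mb H}\mb P\|_F \;\leq\; K\sqrt{\|\mb W\|_F^2}\sqrt{\|\ol{\mb H}\|_F^2}\;\leq\; K\cdot\sqrt{K}\cdot\sqrt{K}=K^2,
\end{equation*}
where $\|\mb W\|_F^2=K$ from $\|\mb w_k\|_2=1$, and $\|\ol{\mb H}\|_F^2=\sum_k\|\ol{\mb h}_k\|_2^2\leq K$ since $\|\ol{\mb h}_k\|_2\leq \frac{1}{n}\sum_i\|\mb h_{k,i}\|_2=1$. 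This yields $S\geq -NK$ and hence the claimed lower bound.

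For the equality characterization, I would track the equality conditions through the chain. The Cauchy--Schwarz step forces $\mb W\mb P = c\,\ol{\mb H}\mb P$; the two Frobenius-norm bounds being tight force $\|\ol{\mb h}_k\|_2=1$ for every $k$, which combined with $\|\mb h_{k,i}\|_2=1$ and strict convexity of $\|\cdot\|_2^2$ on the sphere yields $\mb h_{k,i}=\ol{\mb h}_k$ for all $i$, i.e., (NC1); tightness of the upper bounds $\|\mb W\mb P\|_F^2\leq K$ and $\|\ol{\mb H}\mb P\|_F^2\leq K$ further requires $\mb W\mb 1_K=\mb 0$ and $\ol{\mb H}\mb 1_K=\mb 0$, so $\mb W=\mb W\mb P$ and $\ol{\mb H}=\ol{\mb H}\mb P$, and then $c=1$ (from matching Frobenius norms) gives $\mb w_k=\ol{\mb h}_k$, i.e., (NC3). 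The inner AM--GM equality forces $\mb w_\ell^\top\ol{\mb h}_k = \mb w_\ell^\top\mb w_k$ to be constant in $\ell\neq k$; combined with $\sum_\ell \mb w_\ell=\mb 0$ and $\|\mb w_k\|_2=1$, this constant must equal $-1/(K-1)$, giving $\ol{\mb H}^\top\ol{\mb H} = (K\mb I_K-\mb 1_K\mb 1_K^\top)/(K-1)$, which is (NC2); the outer Jensen step's equality condition is then automatic since $S/N=-K$ at such configurations. Finally, existence of a feasible simplex ETF in $\mathbb{R}^d$ requires $d\geq K$, which is the stated hypothesis, so the bound is attained and the three NC properties are simultaneously necessary and sufficient.
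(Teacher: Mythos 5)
Your proposal is correct, and it takes a genuinely different route from the paper's proof. The paper first splits the objective as $f(\mb W,\mb H)=\frac{1}{n}\sum_i \bar f(\mb W,\mb H^i)$ to reduce to one sample per class (\Cref{lem:f bar}), then invokes a linearized lower bound on the CE loss borrowed from \cite[Lemma D.5]{zhu2021geometric}, namely $(1+c_1)(K-1)(\mc L_{\mathrm{CE}}-c_2)\ge \tau(\sum_\ell \mb w_\ell^\top \mb q_k - K\mb w_k^\top\mb q_k)$ with auxiliary constants $c_1,c_2$, and closes with Young's inequality $\mb u^\top\mb v\ge -\frac{c_3}{2}\|\mb u\|_2^2-\frac{1}{2c_3}\|\mb v\|_2^2$; the claimed optimal value only emerges after substituting the equality-attaining choice of $c_1$. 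You instead keep all $n$ samples together, preserve the functional form $\log(1+(K-1)e^x)$ throughout via AM--GM inside the log followed by Jensen on the convex increasing map $x\mapsto\log(1+(K-1)e^x)$, and reduce the whole problem to the single linear-algebraic bound $\mathrm{tr}\bigl(\mb W^\top\ol{\mb H}(K\mb I_K-\mb 1_K\mb 1_K^\top)\bigr)\le K^2$, dispatched by trace Cauchy--Schwarz through the centering projector. What your route buys is self-containedness and the absence of free parameters to tune: the target value $\log(1+(K-1)\exp(-\frac{K\tau}{K-1}))$ appears directly, and (NC1) drops out of tightness of $\|\ol{\mb h}_k\|_2\le 1$ rather than from a separate per-sample reduction lemma. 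What the paper's linearization buys is uniformity with the analysis of the regularized/penalized formulations in \cite{zhu2021geometric,zhou2022optimization}, where the Young's-inequality step with a tunable $c_3$ is what balances feature norms against classifier norms when they are penalized rather than constrained to the sphere. One small point worth making explicit when writing this up: in the equality analysis you should state that $\|\ol{\mb h}_k\|_2=1$ together with $\|\mb h_{k,i}\|_2=1$ forces $\mb h_{k,1}=\cdots=\mb h_{k,n}$ because equality in the triangle inequality for unit vectors requires them to coincide (strict convexity of the ball), which you do gesture at; everything else in your equality chain is airtight.
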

Compared to the unconstrained regularized problems in \cite{zhou2022optimization, zhu2021geometric}, it is worth noting that the regularization parameters influence the structure of global solutions, while the temperature parameter $\tau$ only affects the optimization landscape but not the global solutions. On the other hand, our result is closely related to \cite[Theorem 1]{graf2021dissecting} (i.e., spherical constraints vs. that of ball constraints). In fact, our problem and that in \cite{graf2021dissecting} share the same global solution set. Moreover, our proof follows similar ideas as those in a line of recent works \cite{graf2021dissecting,fang2021layer,lu2020neural,tirer2022extended,zhou2022optimization,zhu2021geometric}, and we refer the readers to \Cref{app:thm global} for the proof. It should be noted that we do not claim originality of this result compared to previous works; instead our major contribution lies in the following global landscape analysis.

\begin{figure}[t]
    \centering
    \includegraphics[width=0.9\textwidth]{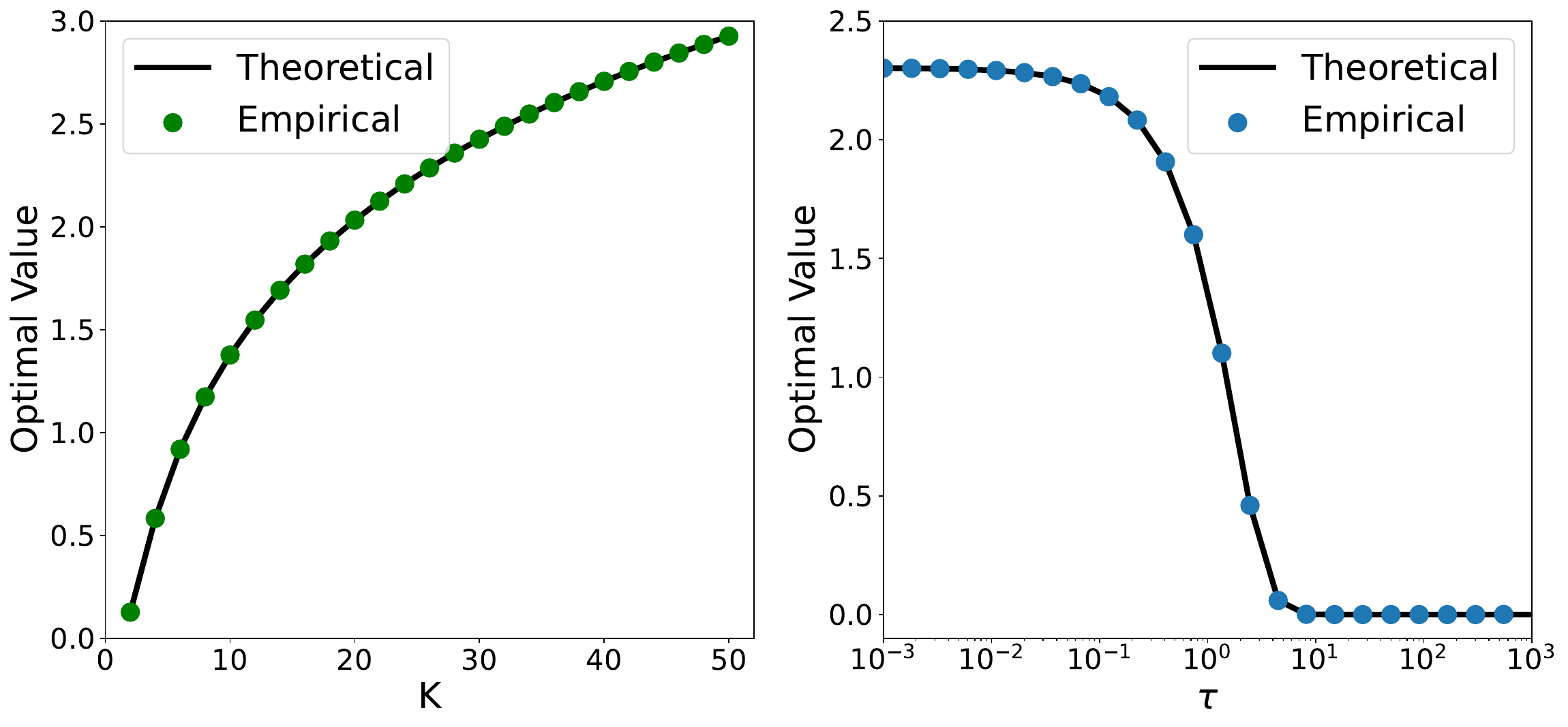}
    \caption{\textbf{Global optimization of \eqref{eq:ce-loss func} under UFM with $d=100$ and $n=5$.} Theoretical line refers to lower bound (global minimum) from \Cref{thm:ce optim}. Empirical values found using gradient descent with random initialization. Left: Lower bound against number of classes $K$ while fixing $\tau=1$. Right: Lower bound against temperature $\tau$ while fixing $K=10$. The same empirical values are achieved over many trials due to the benign global landscape.}
    \label{fig:lower_bound}
\end{figure}

\subsection{Global Landscape Analysis}

Due to the nonconvex nature of Problem \eqref{eq:ce-loss func}, the characterization of global optimality in \Cref{thm:ce optim} alone is not sufficient for guaranteeing efficient optimization to those desired global solutions. Thus, we further study the global landscape of Problem \eqref{eq:ce-loss func} by characterizing all the \emph{Riemannian critical points} $(\mb W,\mb H) \in \mc {OB}(d,K) \times \mc {OB}(d,N)$ satisfying
\begin{align*}
    \grad_{\mb H} f(\mb W,\mb H) \;=\; \mb 0,\quad \grad_{\mb W} f(\mb W,\mb H) \;=\; \mb 0.
\end{align*}
We now state our major result below.
\begin{theorem}[Global Landscape Analysis]\label{thm:ce landscape}
Assume that the number of training samples in each class is balanced, i.e., $n = n_1 = \cdots = n_K$. If the feature dimension is larger than the number of classes, i.e., $d > K$, and the temperature parameter satisfies $\tau < 2(d-2)(1+(\Kmod)/K)^{-1}$, then the function $f(\mb W,\mb H)$ is a strict saddle function that has no spurious local minimum, in the sense that 
\begin{itemize}[leftmargin=*]
    \item Any Riemannian critical point $(\mb W, \mb H)$ of Problem \eqref{eq:ce-loss func} that is not a local minimizer is a Riemannian strict saddle point with negative curvatures, in the sense that the Riemannian Hessian $\mathrm{Hess} f(\mb W, \mb H)$ at the critical point $(\mb W,\mb H)$ is non-degenerate, and there exists a direction $\mb \Delta = (\mb \Delta_{\mb W},\mb \Delta_{\mb H}) \in \mathrm{T}_{ \mb W }  \mc {OB}(d,K )  \times \mathrm{T}_{ \mb H}\mc {OB} (d,N) $ such that
    \begin{align*}
        \Hess f(\mb W, \mb H)[\mb \Delta,\mb \Delta] \;<\; 0.
    \end{align*}
    In other words, $\lambda_{\mathrm{min}}\left(\mathrm{Hess} f(\mb W, \mb H)\right) < 0$ at the corresponding Riemannian critical point.
    \item Any local minimizer of Problem \eqref{eq:ce-loss func} is a global minimizer of the form shown in \Cref{thm:ce optim}. 
\end{itemize}
\end{theorem}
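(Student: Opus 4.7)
The overall plan is to enumerate all Riemannian critical points $(\mb W, \mb H) \in \mc{OB}(d,K) \times \mc{OB}(d,N)$ via the first-order equations $\grad_{\mb W} f(\mb W,\mb H) = \vzero$ and $\grad_{\mb H} f(\mb W,\mb H) = \vzero$, and then show that every such critical point is either of the neural-collapse form described in \Cref{thm:ce optim} (hence a global minimizer) or admits an explicit tangent direction $\mb \Delta$ of strictly negative Riemannian Hessian evaluated via \eqref{eq:bilinear Hess}. This parallels the route taken for the penalized Euclidean case in \cite{zhu2021geometric,zhou2022optimization}, but the spherical constraint introduces both new manifold curvature terms in the Hessian and a delicate interplay between $\tau$, $d$, and $K$ that must be controlled.

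First I would derive structural consequences of the critical-point equations. Reading \eqref{eq:rigrad W} and \eqref{eq:rigrad H} columnwise, the $k$-th column of $\mb H \nabla g(\mb M)^\top$ must be a scalar multiple of $\mb w_k$, and the $(k,i)$-th column of $\mb W \nabla g(\mb M)$ a scalar multiple of $\mb h_{k,i}$. Combining this with the explicit softmax-residual form of $\nabla g(\mb M)$ and the unit-norm constraints, I expect to prove a within-class alignment lemma showing $\mb h_{k,i} = \ol{\mb h}_k$ for all $i \in [n]$ at any Riemannian critical point. This reduces the problem to critical points parametrized by $(\mb W, \ol{\mb H}) \in \mc{OB}(d,K) \times \mc{OB}(d,K)$ satisfying an analogous reduced system, which in turn forces strong alignment between the column spans of $\mb W$ and $\ol{\mb H}$. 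Any such reduced critical point that attains the lower bound of \Cref{thm:ce optim} is NC by that theorem, so every non-NC critical point must exhibit a concrete structural defect: either $\ol{\mb H}$ fails to form a sphere-inscribed simplex ETF, or the self-duality $\mb w_k = \ol{\mb h}_k$ fails.

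Next, for each non-NC critical point, I would construct a tangent perturbation of the form $\mb \Delta_{\mb W} = \mb u \mb a^\top$ and $\mb \Delta_{\mb H} = \mb v \mb b^\top$ with $\mb u, \mb v \in \Span(\mb W, \ol{\mb H})^\perp$. The orthogonal complement is nontrivial because $d > K$, and this choice automatically yields $\diag(\mb W^\top \mb \Delta_{\mb W}) = \vzero$ and $\diag(\mb H^\top \mb \Delta_{\mb H}) = \vzero$, i.e., $\mb \Delta$ is tangent. Plugging into \eqref{eq:euc-hessian}, the convex $\nabla^2 g(\mb M)$ term vanishes since $\mb W^\top \mb \Delta_{\mb H} + \mb \Delta_{\mb W}^\top \mb H = \mb 0$ by orthogonality, leaving only the crossed bilinear
\begin{equation*}
2\tau \langle \nabla g(\mb M), \mb \Delta_{\mb W}^\top \mb \Delta_{\mb H}\rangle \;=\; 2\tau\, (\mb u^\top \mb v)\,(\mb a^\top \nabla g(\mb M)^\top \mb b),
\end{equation*}
which can be made strictly negative by choosing $\mb a, \mb b$ aligned with an off-diagonal structure in $\nabla g(\mb M)$ that is nonzero precisely when the NC equalities fail. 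The manifold-curvature corrections in \eqref{eq:bilinear Hess} contribute a positive term bounded by $\tau \cdot \|\ddiag(\mb M \nabla g(\mb M)^\top)\|_\infty$ and its $\mb H$-analog; the quantitative threshold $\tau < 2(d-2)(1+(\Kmod)/K)^{-1}$ is then exactly what forces the negative Euclidean cross term to dominate uniformly, with the $\Kmod$ factor coming from the parity-dependent optimal choice of the off-diagonal pair.

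The hard part will be the quantitative balancing in this last step, together with the sharp enumeration it requires. Unlike the Euclidean penalized analyses of \cite{zhu2021geometric,zhou2022optimization}, the manifold curvature contribution $\langle \mb \Delta_{\mb W} \ddiag(\mb M \nabla g(\mb M)^\top), \mb \Delta_{\mb W}\rangle$ need not be small and can in principle overwhelm the negative Euclidean bilinear term. The argument therefore requires a critical-point-independent upper bound on $\|\mb M \nabla g(\mb M)^\top\|_\infty$, produced by a case analysis over the structural defects above, including degenerate configurations where some columns of $\mb W$ coincide or $\ol{\mb H}$ is rank-deficient. A secondary difficulty is ensuring that the within-class alignment lemma is robust under such degeneracies so that the reduction to $(\mb W, \ol{\mb H}) \in \mc{OB}(d,K) \times \mc{OB}(d,K)$ is valid at \emph{every} critical point and not only in generic position.
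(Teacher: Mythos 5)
Your plan has a genuine gap at its foundation: the ``within-class alignment lemma'' you propose---that $\mb h_{k,i} = \ol{\mb h}_k$ at \emph{every} Riemannian critical point---is false, and the rest of the argument leans on it. The first-order conditions only force $\mb H\mb g^k = \alpha_k \mb w_k$ and $\mb W\mb g_i = \beta_i\mb h_i$ for scalars $\alpha_k,\beta_i$; they do not collapse features within a class. A concrete counterexample: take $\mb W = \mb w\mb 1_K^\top$ for any $\mb w\in\bb S^{d-1}$. Then every logit vector $\tau\mb W^\top\mb h_{k,i}$ is a multiple of $\mb 1_K$, the softmax is uniform, $\mb 1_K^\top\nabla g(\mb M)=\mb 0$ gives $\mb W\nabla g(\mb M)=\mb 0$, and the $\mb W$-equation only constrains the \emph{class means} (each $\ol{\mb h}_k$ minus the global mean must be parallel to $\mb w$), leaving the individual $\mb h_{k,i}$ essentially free on the sphere. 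This is exactly why the critical-point set here is much larger than in the penalized analyses you cite, and it breaks your second step as well: with $\mb H$ possibly of full rank $d$, you cannot choose $\mb u,\mb v$ orthogonal to $\Span(\mb W,\ol{\mb H})$ and conclude either that $\mb \Delta_{\mb H}=\mb v\mb b^\top$ is tangent (tangency needs $\mb v\perp\mb h_{k,i}$ for every sample with $b_{k,i}\neq 0$, not just $\mb v\perp\ol{\mb h}_k$) or that $\mb W^\top\mb\Delta_{\mb H}+\mb\Delta_{\mb W}^\top\mb H=\mb 0$ (which needs $\mb H^\top\mb u=\mb 0$, generally impossible when $d\le N$). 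Note also that $\dim\Span(\mb W,\ol{\mb H})$ can be as large as $2K$, so $d>K$ alone does not even guarantee the orthogonal complement you want is nontrivial.

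The paper's proof is organized around exactly this obstruction. It splits the non-global critical points by whether some $\beta_i=0$. If so, a convexity/extreme-point argument on the sphere forces $\mb W=\mb w\mb 1_K^\top$ and $\mb\beta=\mb 0$; there one cannot kill the positive $\nabla^2 g$ term in \eqref{eq:euc-hessian}, so instead one picks $\mb a$ in the span of the two smallest singular directions of $\mb H$ (so that $\|\mb H^\top\mb a\|_2^2$ is small but nonzero) together with an alternating-sign $\mb u$, and the hypothesis $\tau<2(d-2)(1+(\Kmod)/K)^{-1}$ is precisely what makes the negative $O(\tau)$ cross term $2\tau\langle\nabla g(\mb M),\mb\Delta_{\mb W}^\top\mb\Delta_{\mb H}\rangle$ dominate the positive $O(\tau^2)\|\mb H^\top\mb a\|_2^2$ quadratic term; the manifold-curvature corrections in \eqref{eq:bilinear Hess} vanish in this case because $\mb\beta=\mb 0$ and $\sum_k\alpha_k=\sum_i\beta_i=0$. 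If instead all $\beta_i\neq 0$, then $\mb W\mb g_i=\beta_i\mb h_i$ shows any $\mb a\perp\mathrm{col}(\mb W)$ (available since $d>K$) automatically satisfies $\mb H^\top\mb a=\mb 0$, the quadratic term vanishes, and the curvature corrections are beaten using a separate characterization of global optimality at critical points ($\alpha_k\le-\sqrt n\,\|\nabla g(\mb M)\|$ and $\beta_i\le-\|\nabla g(\mb M)\|/\sqrt n$ if and only if global), with the top singular pair of $\nabla g(\mb M)$ supplying the direction. So the $\tau$ threshold controls the Euclidean quadratic term in the degenerate case, not the manifold-curvature terms as you conjectured, and the case split plus the $(\mb\alpha,\mb\beta)$ optimality characterization are the missing ideas your proposal would need.
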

For the details of the proof, we refer readers to \Cref{app:thm landscape}.
The second bullet point naturally follows from \Cref{thm:ce optim} and the first bullet point. The major challenge of our analysis is showing the first bullet, i.e., to find a negative curvature direction $\mb \Delta$ for $\Hess f(\mb W, \mb H)$. Our key observation is that the set of non-global critical points can be partitioned into two separate cases. In the first case, the last two terms of \eqref{eq:bilinear Hess} vanish, and we show that the second term of \eqref{eq:euc-hessian} is negative and dominates the first term for an appropriate direction. We require $\tau$ to not be too large, since the first term is $O(\tau^2)$, whereas the second term is $O(\tau)$. In the second case, using the assumption that $d > K$ we can find a rank-one direction that makes the first term of \eqref{eq:euc-hessian} vanishing. In this case, we similarly show that the second term of \eqref{eq:euc-hessian} is negative but instead dominates the last two terms of \eqref{eq:bilinear Hess}. In the following, we discuss the implications, relationship, and limitations of our results in \Cref{thm:ce landscape}.
\begin{itemize}[leftmargin=*]
    \item \emph{Efficient global optimization to \NC\ solutions.} Our theorem implies that the \NC\ solutions can be efficiently reached by Riemannian first-order methods (e.g., Riemannian stochastic gradient descent) with random initialization \cite{jin2017escape,criscitiello2019efficiently,sun2019escaping} for solving Problem \eqref{eq:ce-loss func}; see \Cref{fig:lower_bound} for a demonstration. For training practical deep networks, this can be efficiently implemented by normalizing last-layer features when running SGD. 
    \item \emph{Relation to existing works on \NC.} Most existing results have only studied the global minimizers under the UFM \cite{graf2021dissecting,fang2021layer,lu2020neural,tirer2022extended}, which has limited implication for optimization. On the other hand, our landscape analysis is based upon a nontrivial extension of that with the unconstrained problem formulation \cite{zhu2021geometric,zhou2022optimization}. Compared to those works, Problem \eqref{eq:ce-loss func} is much more challenging for analysis, due to the fact that the set of critical points of our problem is essentially much larger than that of \cite{zhu2021geometric,zhou2022optimization}. Moreover, we empirically demonstrate the advantages of the manifold formulation over its regularized counterpart, in terms of representation quality and training speed.
    \item \emph{Assumptions on the feature dimension $d$ and temperature parameter $\tau$.} Our current result requires that $d > K$, which is the same requirement in \cite{zhu2021geometric,zhou2022optimization}. Furthermore, through numerical simulations we conjecture that the global landscape also holds even when $d\ll K$, while the global solutions are uniform over the sphere \cite{lu2020neural} rather than being simplex ETFs (see \Cref{fig:low_dim_features}). The analysis on $d\ll K$ is left for future work.
    On the other hand, the required upper bound on $\tau$ is for the ease of analysis and it holds generally in practice,\footnote{For instance, a standard ResNet-18 \cite{he2016deep} model trained on CIFAR-10 \cite{krizhevsky2009learning} has $d=512$ and $K=10$. In the same setting, we assume $\tau < 1020$, which is far larger than any useful setting of the temperature parameter (see \Cref{sec:tau}).} but we conjecture that the benign landscape holds without it.

    \item \emph{Relation to other Riemannian nonconvex problems.} Our result joins a recent line of work on the study of global nonconvex landscapes over Riemannian manifolds, such as orthogonal tensor decomposition \cite{ge2015escaping}, dictionary learning \cite{qu2014finding,sun2016complete1,sun2015complete,qu2020geometric,qu2020finding}, subspace clustering \cite{wang2022convergence}, and sparse blind deconvolution \cite{kuo2019geometry,qu2020exact,lau2020short,zhang2019structured}. For all these problems constrained over a Riemannian manifold, it can be shown that they exhibit ``equivalently good'' global minimizers due to symmetries and intrinsic low-dimensional structures, and the loss functions are usually strict saddles \cite{ge2015escaping,sun2015nonconvex,zhang2020symmetry}. As we can see, the global minimizers (i.e., simplex ETFs) of our problem here also exhibit a similar rotational symmetry, in the sense that $\mb W^\top \mb H = (\mb Q\mb W)^\top \paren{ \mb Q \mb H}$ for any orthogonal matrix $\mb Q$. Additionally, our result show that the tools from Riemmanian optimization can be powerful for the study of deep learning.
\end{itemize}

\section{Experiments}\label{sec:experiments}
In this section, we support our theoretical results in previous sections and provide further motivation with experimental results on practical deep network training. In \Cref{sec:validation_exp}, we validate the assumption of UFM introduced in \Cref{sec:formulation} for analyzing \NC, by demonstrating that \NC\ occurs for increasingly overparameterized deep networks. In \Cref{sec:fast_train}, we further motivate feature normalization with empirical results showing that feature normalization can lead to faster training and better collapse than the unconstrained counterpart with regularization. This occurs not only with the UFM but also with practical overparameterized networks. In \Cref{sec:cg_trm}, we demonstrate that, independent of the algorithm used, the feature normalized UFM has faster training and collapse than its regularized counterpart. In \Cref{sec:generalization}, we show that feature normalization leads to better generalization and test feature collapse on practical deep networks. In \Cref{sec:tau}, we investigate the effect of the temperature parameter $\tau$ on training dynamics for both the UFM and practical deep networks. Finally, in \Cref{sec:other-losses}, we empirically explore the global landscape of other commonly used loss functions for deep learning classification tasks. Before that, we introduce some basics of the experimental setup and metrics for evaluating \NC.

\paragraph{Network architectures, datasets, and training details.} In our experiments, we use ResNet \cite{he2016deep} architectures for the feature encoder. For the normalized network, we project the output of the encoder onto the sphere of radius $\tau$ (as done in \cite{graf2021dissecting}) and also project the weight classifiers to the unit sphere after each optimization step to maintain constraints. In all experiments, we set $\tau=1$. For the regularized UFM and network, we use a weight and feature decay of $10^{-4}$ (using the loss in \cite{zhu2021geometric}). We do not use a bias term for the classifier for either architecture. For all experiments, we use the CIFAR dataset\footnote{Both CIFAR10 and CIFAR100 are publicly available and are licensed under the MIT license.} \cite{krizhevsky2009learning}, where we use CIFAR100 for all experiments except for the experiment in \Cref{sec:validation_exp}, where we use CIFAR10. In all experiments, we train the networks using SGD with a batch size of $128$ and momentum $0.9$ with an initial learning rate of $0.05$, and we decay the learning rate by a factor of $0.1$ after every $40$ epochs - these hyperparameters are chosen to be the same as those in \cite{zhu2021geometric} for fair comparisons. All networks are trained on Nvidia Tesla V100 GPUs with 16G of memory.

\paragraph{Neural collapse metrics.} For measuring different aspects of neural collapse as introduced in \Cref{sec:intro}, we adopt similar \NC\ metrics from \cite{papyan2020prevalence,zhu2021geometric,zhou2022optimization}, given by
\begin{align*}
    \mathcal{NC}_1 &:= \frac{1}{K}\mbox{trace}(\mb \Sigma_W \mb \Sigma_B^{\dagger}), \\
    \mathcal{NC}_2 &:= \left\|\frac{\mb W^\top \mb W}{\|\mb W^\top \mb W\|_F} - \frac{1}{\sqrt{K-1}}\left(\mb I_K - \mb 1_K \mb 1_K^\top\right)\right\|_F, \\
    \mathcal{NC}_3 &:= \left\|\frac{\mb W^\top \overline{\mb H}}{\|\mb W^\top \overline{\mb H}\|_F } - \frac{1}{\sqrt{K-1}}\left(\mb I_K - \mb 1_K \mb 1_K^\top\right)\right\|_F,
\end{align*}
where $\mb \Sigma_W$ and $\mb \Sigma_B$ are the within-class and between-class covariance matrices (see \cite{papyan2020prevalence,zhu2021geometric} for more details), $\mb \Sigma_B^\dagger$ denotes pseudo inverse of $\mb \Sigma_B$, and $\overline{\mb H}$ is the centered class mean matrix in \eqref{eqn:H-mean}. More specifically, $\mc {NC}_1$ measures {\bf NC1} (i.e., within class variability collapse), $\mc {NC}_2$ measures {\bf NC2} (i.e., the convergence to the simplex ETF), and $\mc {NC}_3$ measures {\bf NC3} (i.e., the duality collapse).

\subsection{Validation of the UFM for training networks with feature normalization}\label{sec:validation_exp}

In \Cref{sec:formulation}, our study of the Riemannian optimization problem \eqref{eq:ce-loss func} is based upon the UFM, where we assume $\mb H$ is a free optimization variable. Here, we justify this assumption by showing that \NC\ happens for training overparameterized networks even when the training labels are completely random. By using random labels, we disassociate the input from their class labels, by which we can characterize the approximation power of the features of overparmeterized models. To show this, we train ResNet-18 with varying widths (i.e., the number of feature maps resulting from the first convolutional layer) on CIFAR10 with random labels, with normalized features and classifiers. \\~\\
As shown in \Cref{fig:nc_width}, we observe that increasing the width of the network allows for perfect classification on the training data even when the labels are random. Furthermore, increasing the network width also leads to better \NC, measured by the decrease in each $\mathcal{NC}$ metric. This corroborates that (\emph{i}) our assumption of UFM is reasonable given that \NC\ seems to be independent of the input data, and (\emph{ii}) \NC\ happens under the our constraint formulation \eqref{eq:ce-loss func} on practical networks.

\begin{figure}[t]
    \centering
    \includegraphics[width=\textwidth]{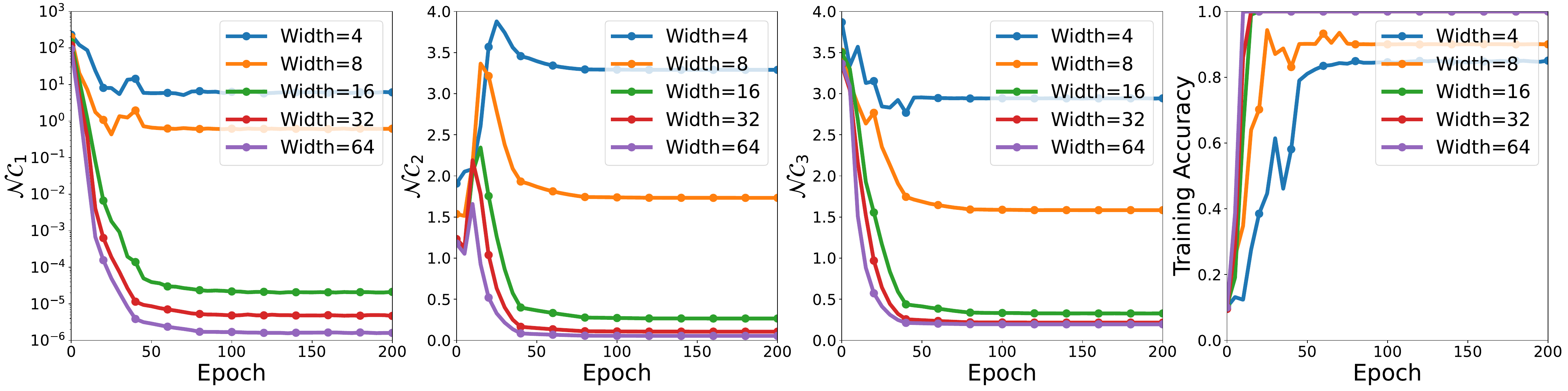}
    \caption{\textbf{Validation of UFM on ResNet with varying network width.} $\mathcal{NC}$ metrics and training accuracy of ResNet-18 networks of various widths on CIFAR10 with $n=200$ over 200 epochs.}
    \label{fig:nc_width}
\end{figure}

\subsection{Feature normalization for improved training speed and representation quality}\label{sec:fast_train}
We now investigate the benefits of using feature normalization for improving training speed and representation quality. First, we optimize Problem \eqref{eq:ce-loss func} and compare it to the regularized counterpart in \cite{zhu2021geometric} in the UFM. The results are shown in \Cref{fig:faster_training_ufm}.hao We can see that normalizing features over the sphere consistently results in reaching perfect classification and greater feature collapse (i.e., smaller $\mc {NC}_1$) quicker than penalizing the features. To demonstrate that these behaviors are reflected in training practical deep networks, we train both ResNet-18 and ResNet-50 architectures on a reduced CIFAR100 \cite{krizhevsky2009learning} dataset with $N=3000$ total samples, comparing the training accuracy and metrics of \NC\ with and without feature normalization. The results are shown in \Cref{fig:faster_training}. \\
\begin{figure}[t]
    \centering
    \includegraphics[width=1\textwidth]{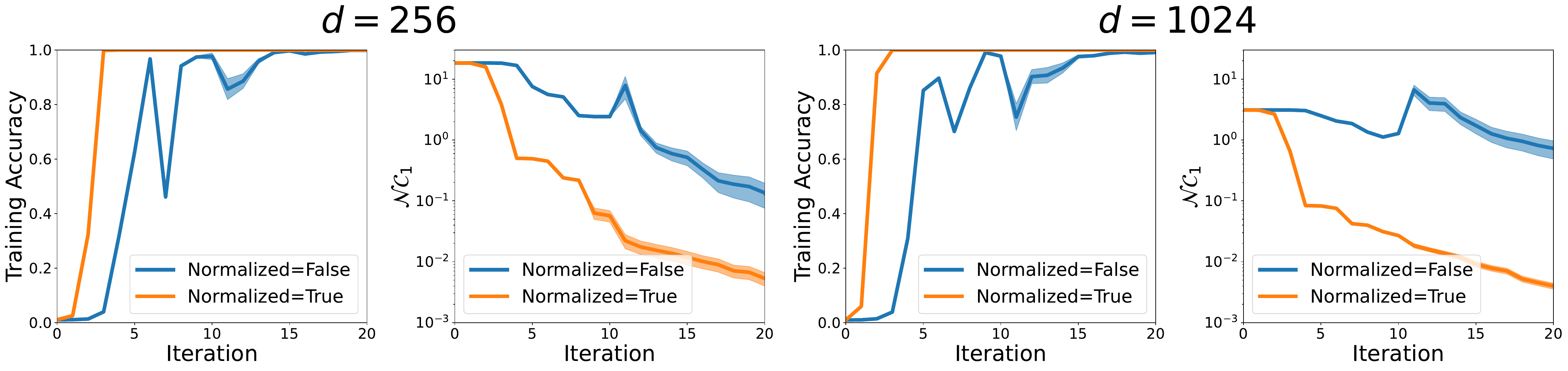}
    \caption{\textbf{Faster training/feature collapse of UFM with feature normalization.} Average (deviation denoted by shaded area) training accuracy and $\mathcal{NC}_1$ of UFM over 10 trials of (Riemmanian) gradient descent with backtracking line search. We set $K=100$ classes, $n=30$ samples per class.}
    \label{fig:faster_training_ufm}
\end{figure}

\begin{figure}[t]
    \centering
    \includegraphics[width=1\textwidth]{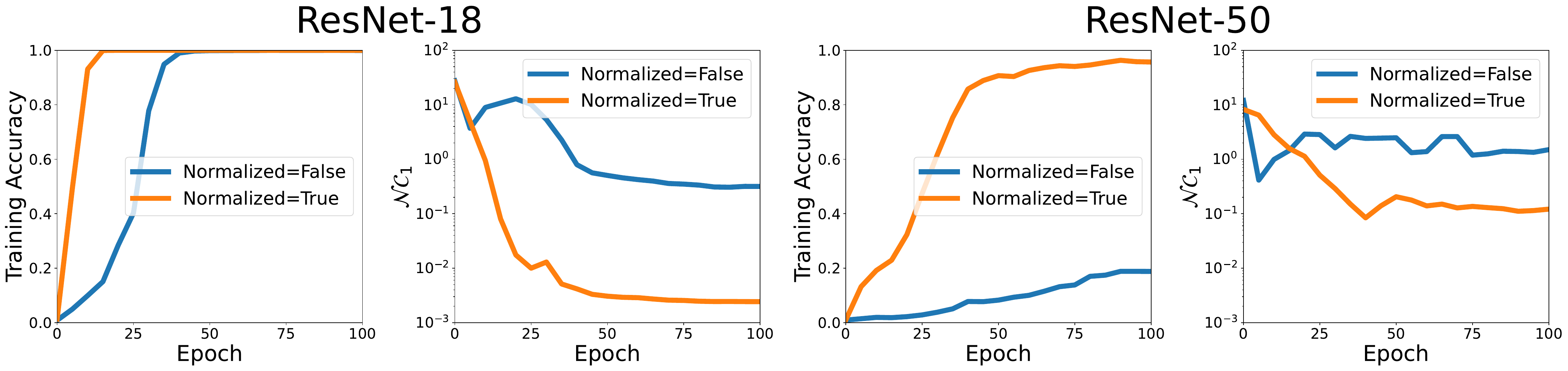}
    \caption{\textbf{Faster training/feature collapse with ResNet on CIFAR100 with feature normalization.} Training accuracy and $\mathcal{NC}_1$ of ResNet-18 and ResNet-50 on CIFAR100 with $n=30$ over 100 epochs.}
    \label{fig:faster_training}
\end{figure}
\noindent From \Cref{fig:faster_training} (left), we can see that for the ResNet-18 network, we reach perfect classification of the training data about 10-20 epochs sooner by using feature normalization compared to that of the unconstrained formulation. From \Cref{fig:faster_training} (right), training the ResNet-50 network without feature normalization for 100 epochs shows slow convergence with poor training accuracy, whereas using feature normalization arrives at above 90\% training accuracy in the same number of epochs. By keeping the size of the dataset the same and increasing the number of parameters, it is reasonable that optimizing the ResNet-50 network is more challenging due to the higher degree of overparameterization, yet this effect is mitigated by using feature normalization. \\~\\
At the same time, for both architectures, using feature normalization leads to greater feature collapse (i.e., smaller $\mc {NC}_1$) compared to that of the unconstrained counterpart. As shown in recent work \cite{papyan2020prevalence,zhou2022optimization,galanti2022on} , better \NC\ often leads to better generalization performance, as corroborated by \Cref{sec:generalization}. Last but not least, we believe the benefits of feature normalization are not limited to the evidence that we showed here, as it could also lead to better robustness \cite{chan2020deep,yu2020learning} that is worth further exploration.

\subsection{Neural collapse occurs independently of training algorithms under UFM}\label{sec:cg_trm}

In \Cref{sec:fast_train}, we demonstrated that optimizing Problem \eqref{eq:ce-loss func}, which corresponds to the feature normalized UFM, results in quicker training and feature collapse as opposed to the regularized UFM formulation, as shown in \Cref{fig:faster_training_ufm}. To show that this phenomenon is independent of the algorithm used, we additionally test the conjugate gradient (CG) method \cite{absil2009optimization} as well as the trust-region method (TRM) \cite{absil2009optimization} to solve \eqref{eq:ce-loss func} with the same set-up as in \Cref{fig:faster_training_ufm}. While the Riemannian conjugate gradient method is also a first order method like gradient descent, the Riemannian trust-region method is a second order method, so the convergence speed is much faster compared to Riemannian gradient descent or conjugate gradient method. The results are shown in \Cref{fig:faster_training_ufm_cg} and \Cref{fig:faster_training_ufm_trm} for the CG method and TRM respectively. \\~\\
We see that optimizing the feature normalized UFM with CG gives similar results to using GD, whereas optimizing the feature normalized UFM using TRM results in an even greater gap in convergence speed to the global solutions, when compared with optimizing the regularized counterpart using TRM. These results suggest that the benefits of feature normalization are not limited to vanilla gradient descent or even first order methods.
\begin{figure}[t]
    \centering
    \includegraphics[width=\textwidth]{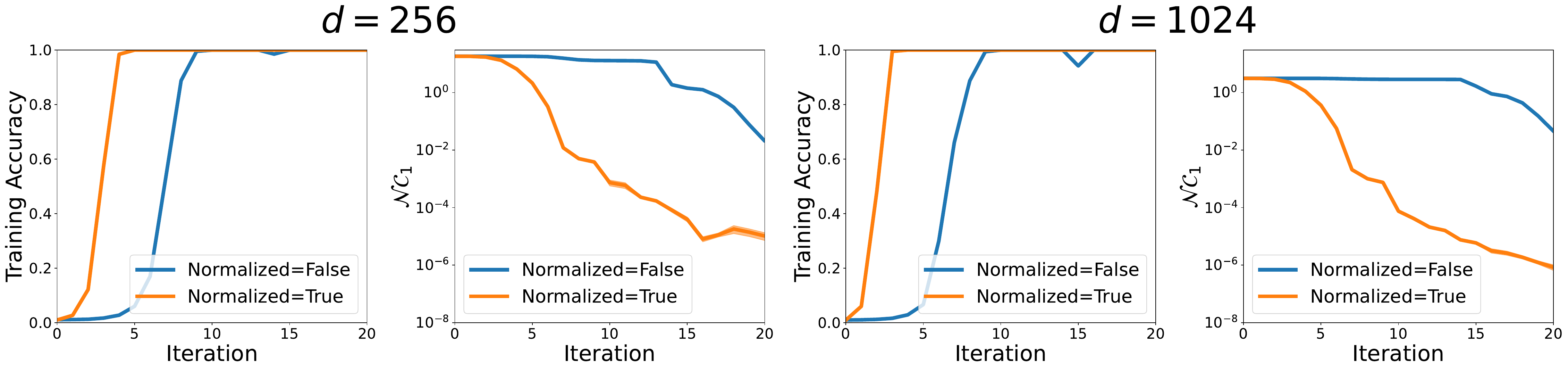}
    \caption{\textbf{Faster training/feature collapse of UFM with feature normalization with CG.} Average (deviation denoted by shaded area) training accuracy and $\mathcal{NC}_1$ of UFM over 10 trials of (Riemmanian) conjugate gradient method. We set $K=100$ classes, $n=30$ samples per class.}
    \label{fig:faster_training_ufm_cg}
\end{figure}

\begin{figure}[t]
    \centering
    \includegraphics[width=\textwidth]{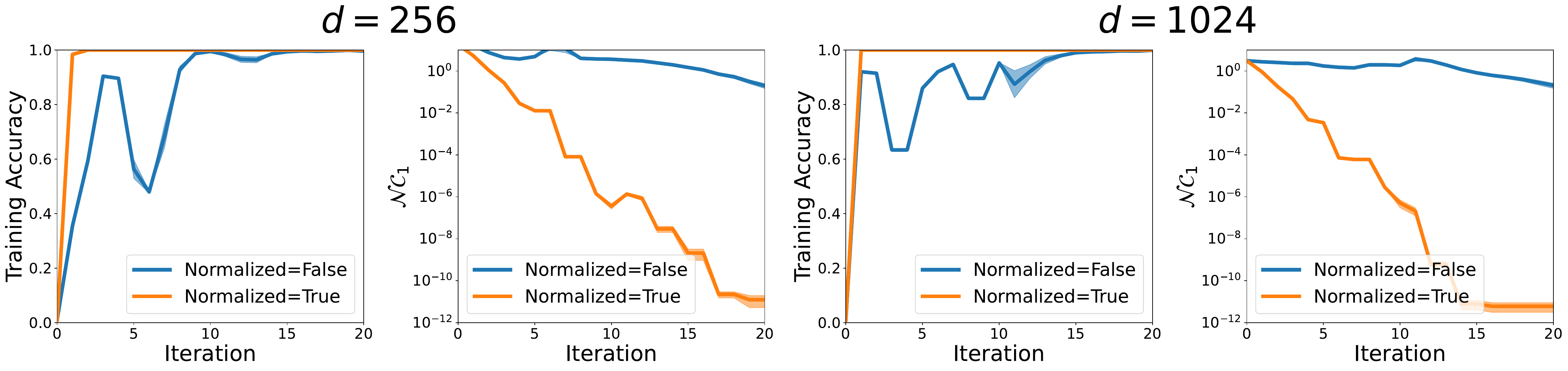}
    \caption{\textbf{Faster training/feature collapse of UFM with feature normalization with TRM.} Average (deviation denoted by shaded area) training accuracy and $\mathcal{NC}_1$ of UFM over 10 trials of (Riemmanian) trust-region method. We set $K=100$ classes, $n=30$ samples per class.}
    \label{fig:faster_training_ufm_trm}
\end{figure}

\subsection{Feature normalization generalizes better than regularization}\label{sec:generalization}

In \Cref{sec:fast_train}, we showed that using feature normalization over regularization improves training speed and feature collapse when training increasingly overparameterized ResNet models on a small subset of CIFAR100. We now demonstrate that feature normalization leads to better generalization than regularization. \\~\\
We train a ResNet-18 and ResNet-50 model on the \emph{entirety} of the CIFAR100 training split without any data augmentation for 100 epochs, and test the accuracy and $\mathcal{NC}_1$ metric on the standard test split. The results are shown in \Cref{tab:better_generalization}. We immediately see that using feature normalization gives both better test accuracy and test feature collapse than using regularization. Furthermore, the test generalization performance is coupled with the degree of feature collapse, supporting the claim that better \NC\ often leads to better generalization performance. Finally, as we have trained both ResNet architectures with the same set-up and number of epochs, there is a substantial drop in performance (both test accuracy and test $\mathcal{NC}_1$) of the regularized ResNet-50 model compared to the ResNet-18. However, using feature normalization, this effect is mostly mitigated, suggesting that feature normalization is more robust compared to regularization and effective for generalizing highly overparameterized models on fixed-size datasets.

\begin{table}[t]
    \centering
    \vspace{1em}
    \renewcommand{\arraystretch}{1.1}
    \scalebox{1.1}{
    \begin{tabular}{|l|c|c|c|c|} \hline
        & \multicolumn{2}{c|}{ResNet-18} & \multicolumn{2}{c|}{ResNet-50} \\ \hline
         & Test Accuracy & Test $\mathcal{NC}_1$ & Test Accuracy & Test $\mathcal{NC}_1$ \\ \hline
        Regularization & 55.3\% & 3.838 & 48.9\% & 4.486 \\ \hline
        Normalization & \textbf{58.6\%} & \textbf{3.143} & \textbf{56.4\%} & \textbf{3.127} \\ \hline
    \end{tabular}}
    \caption{\textbf{Better generalization and test feature collapse with ResNet on CIFAR100 with feature normalization.} Test accuracy and test $\mathcal{NC}_1$ of ResNet-18 and ResNet-50 on CIFAR100. }
    \label{tab:better_generalization}
    \renewcommand{\arraystretch}{1}
\end{table}

\subsection{Investigating the effect of the temperature parameter $\tau$}\label{sec:tau}
In \Cref{sec:formulation}, although the temperature parameter $\tau > 0$ does not affect the global optimality and critical points, it does affect the training speed of specific learning algorithms and hence test performance. In all experiments in \Cref{sec:experiments}, we set $\tau=1$ and have not discussed in detail the effects of $\tau$ in practice.\\~\\
However, as mentioned in \cite{graf2021dissecting}, $\tau$ has important side-effects on optimization dynamics and must be carefully tuned in practice. Hence, we now present a brief study of the temperature parameter $\tau$ when optimizing the problem \eqref{eq:ce-loss func} under the UFM and training a deep network in practice.
To begin, we consider the UFM formulation. We first note that $\tau$ does not affect the theoretical global solution or benign landscape of the UFM (although it does affect the attained theoretical lower bound, see \Cref{thm:ce optim} and \Cref{fig:lower_bound}). However, it does impact the rate of convergence to neural collapse as well as the attained numerical values of the $\mathcal{NC}$ metrics. To see this, we apply (Riemannian) gradient descent with backtracking line search to Problem \eqref{eq:ce-loss func} for various settings of $\tau$. These results are shown in \Cref{fig:tau_ufm}. 
\begin{figure}[t]
    \centering
    \includegraphics[width=\textwidth]{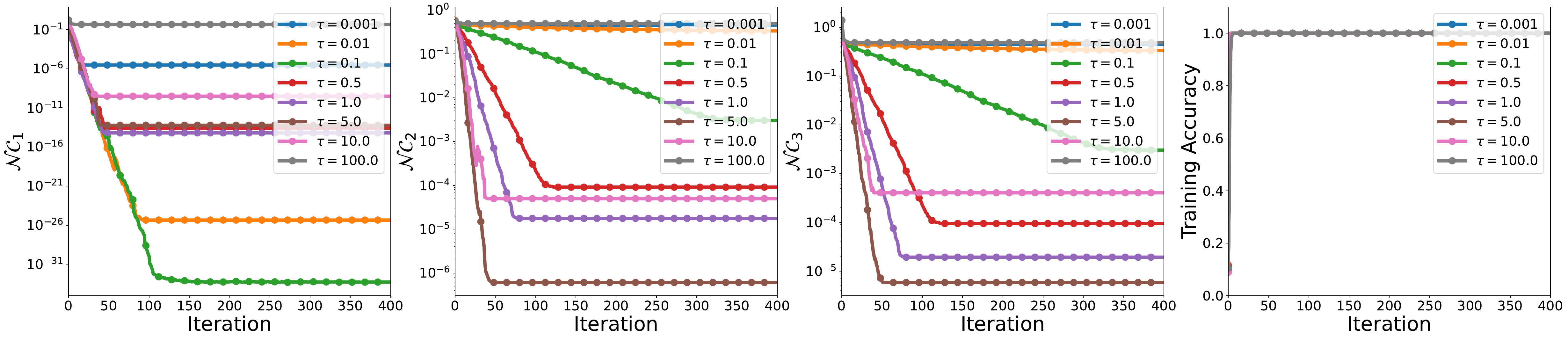}
    \includegraphics[width=\textwidth]{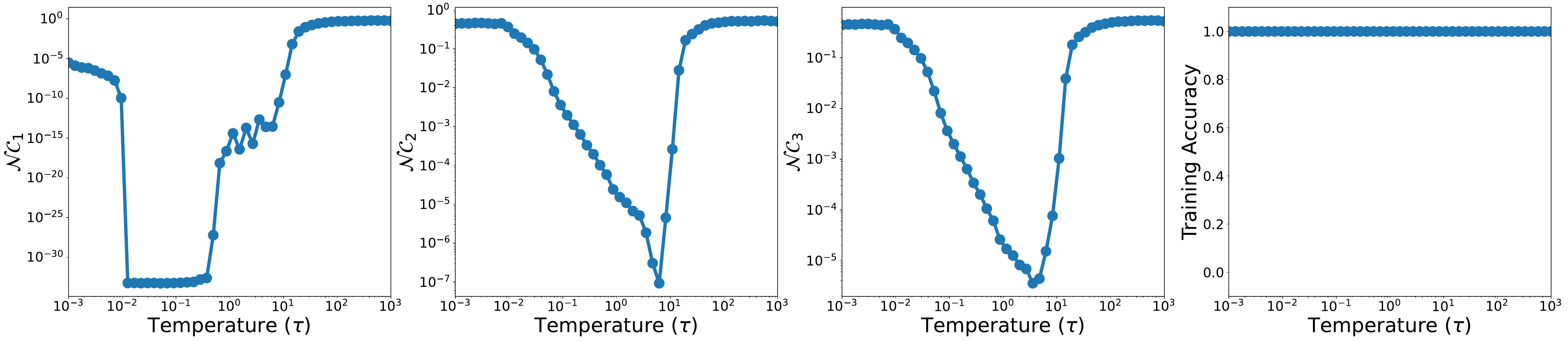}
    \caption{\textbf{Effect of temperature parameter on collapse and training accuracy of UFM.} $K = 10$ classes, $n = 5$ samples per class, $d = 32$. Top row: Average $\mathcal{NC}$ metrics and training accuracy of UFM over 20 trials for various settings of $\tau$ with respect to each iteration of (Riemannian) gradient descent. Bottom row: Final average $\mathcal{NC}$ metrics and training accuracy of UFM over 20 trials for various settings of $\tau$.}
    \label{fig:tau_ufm}
\end{figure}
First, it is evident that for all tested $\tau$ values, we achieve perfect classification in a similar number of iterations. Furthermore, the rate of convergence of $\mathcal{NC}_1$ is somewhat the same for most settings of $\tau$, and we essentially have feature collapse for most settings of $\tau$. On the other hand, it appears that the rate of convergence of $\mathcal{NC}_2$ and $\mathcal{NC}_3$ are dramatically affected by $\tau$, with values in the range of 1 to 10 yielding the greatest collapse. This aligns with the choice of the temperature parameter in the experimental section of \cite{graf2021dissecting}, where the equivalent parameter is set $\rho = 1/\sqrt{0.1} \approx 3.16$. \\~\\
We now look to the setting of training practical deep networks. We train a feature normalized ResNet-18 architecture on CIFAR-10 for various settings of $\tau$. The results are shown in \Cref{fig:tau}.
\begin{figure}[t]
    \centering
    \includegraphics[width=\textwidth]{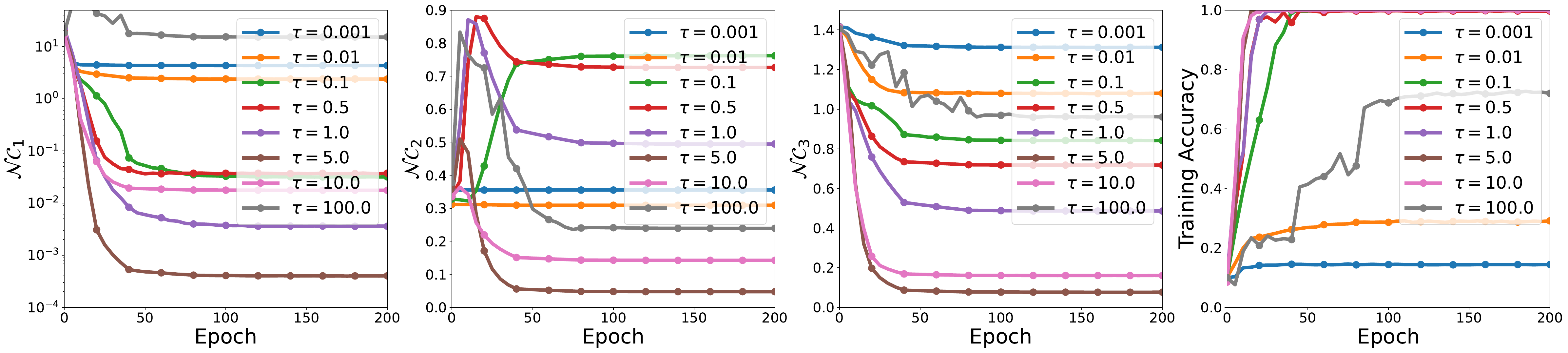}
    \includegraphics[width=\textwidth]{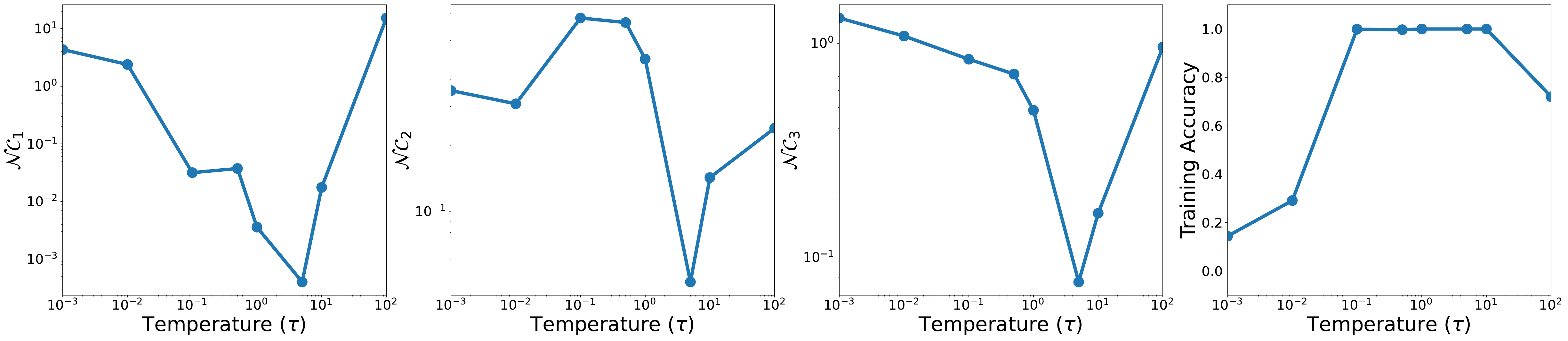}
    \caption{\textbf{Effect of temperature parameter on collapse and training accuracy of ResNet.} Top row: Average $\mathcal{NC}$ metrics and training accuracy of ResNet-18 on CIFAR-10 with $n=100$ for various settings of $\tau$ with respect to each epoch over 200 epochs. Bottom row: Final average $\mathcal{NC}$ metrics and training accuracy of ResNet-18 on CIFAR-10 with $n=100$ for various settings of $\tau$.}
    \label{fig:tau}
\end{figure}
One immediate difference from the UFM formulation is that we arrive at perfect classification of the training data for a particular range of values for $\tau$ (from about $0.1$ to $10$) but not for all settings of $\tau$. Within this range, we can see that values of $\tau$ around 1 to 10 lead to the fastest training, and values close to $\tau=5$ lead to the greatest collapse in all $\mathcal{NC}$ metrics, as was the case with the UFM. All this evidence suggests that $\tau=1$ is not the optimal setting of the temperature parameter for either the UFM or ResNet, particularly when measuring $\mathcal{NC}_2$ and $\mathcal{NC}_3$, and instead $\tau=5$ may perform better. In the practical experiments of the main text, however, we mainly focused on training speed and feature collapse, and for these purposes it appears that the $\tau$ parameter can be set in a fairly nonstringent manner.

\subsection{Exploration of benign global landscapes of other commonly-used losses}\label{sec:other-losses}
Although the cross-entropy (CE) loss studied in this work is arguably the most common loss function for deep classification tasks, it is not the only one. Some other commonly used loss functions include focal loss (FL) \cite{lin2017focal}, label smoothing (LS) \cite{szegedy2015rethinking}, and supervised contrastive (SC) loss \cite{khosla2020supervised}, each of which has demonstrated various benefits over vanilla CE. In this section, we briefly explore the empirical global landscape of these losses under the UFM with normalized features (and classifiers). Specifically, we consider the problem
\begin{align} 
     \min_{\mb W, \mb H} &\; f(\mb W,\mb H) \;:=\; \frac{1}{N} \sum_{k=1}^K \sum_{i=1}^{n} \mc L \paren{ \tau \mb W^\top \mb h_{k,i}, \mb y_k }, \label{eq:general loss func} \\
     \;\text{s.t.} & \quad \mb H \in \mc {OB}(d,N),\;  \mb W \in\; \mc {OB}(d,K). \nonumber 
\end{align}
where $\mathcal{L}$ is either the focal loss or label smoothing loss. \\~\\
First, we consider the focal loss, defined as
\begin{align*}
    \mathcal{L}_{\mathrm{FL}}(\mb z, \mb y_k) = -\left(1 - \frac{\exp(z_k)}{\sum_{\ell=1}^K \exp(z_{\ell})}\right)^{\gamma} \log\left(\frac{\exp(z_k)}{\sum_{\ell=1}^K \exp(z_{\ell})}\right)
\end{align*}
where $\gamma \geq 0$ is the \emph{focusing} parameter (with $\gamma = 0$, we recover ordinary CE). As seen in \Cref{fig:fl_lower_bound}, using gradient descent with random initialization on the focal loss, we achieve neural collapse over a range of settings of $K$ and $\tau$. Characterizing the global solutions of \eqref{eq:general loss func} for the focal loss and a general landscape analysis are left as future work.\\~\\
\begin{figure}[t]
    \centering
    \includegraphics[width=1\textwidth]{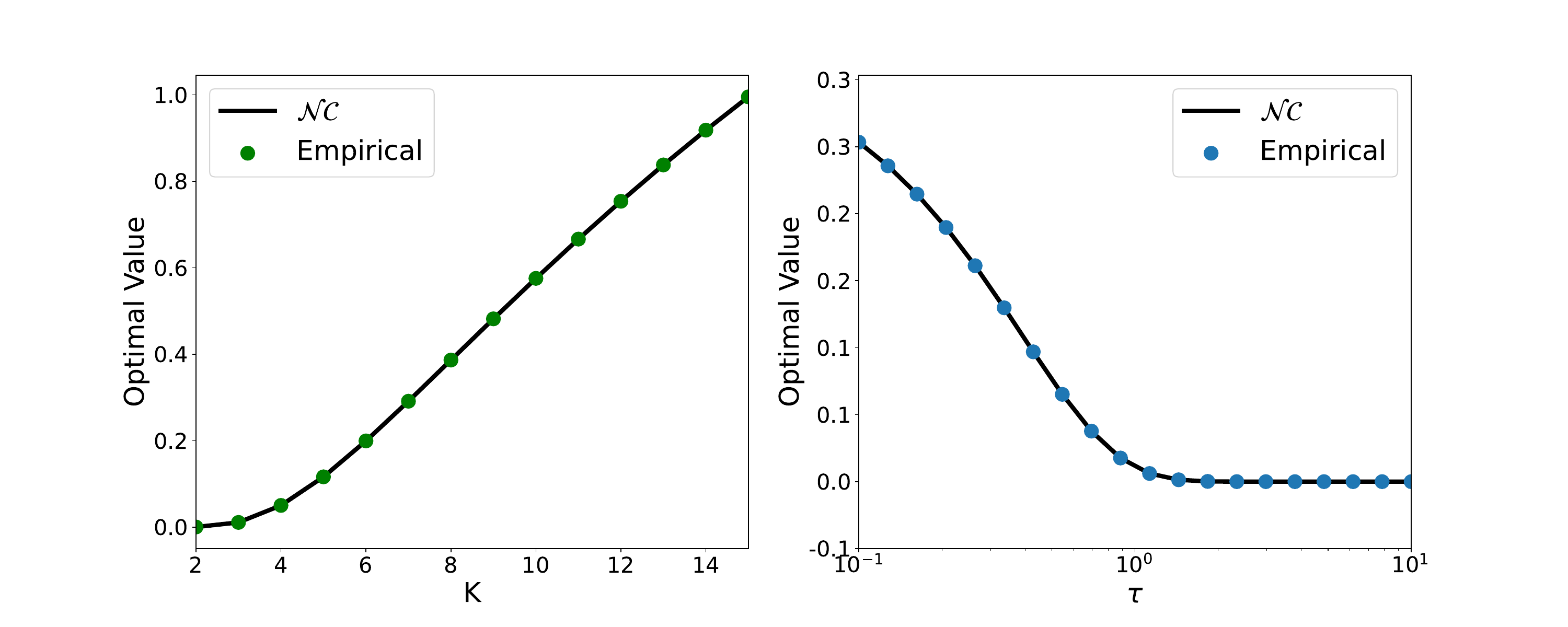}
    \caption{\textbf{Global optimization of focal loss (FL) with $\gamma = 3$ under UFM with $d=16$ and $n=3$.} Black line refers to theoretical value of \eqref{eq:general loss func} at $\mathcal{NC}$ solutions. Empirical values found using gradient descent with random initialization. Left: Lower bound against number of classes $K$ while fixing $\tau=1$. Right: Lower bound against temperature $\tau$ while fixing $K=3$. The same empirical values are achieved over many trials.}
    \label{fig:fl_lower_bound}
\end{figure}
Next, we consider the label smoothing loss, defined as
\begin{align*}
    \mathcal{L}_{\mathrm{LS}}(\mb z, \mb y_k) = - \left(1 - \frac{K-1}{K}\alpha\right) \log\left(\frac{\exp(z_k)}{\sum_{\ell=1}^K \exp(z_{\ell})}\right) - \frac{\alpha}{K}\sum_{j\neq k} \log\left(\frac{\exp(z_j)}{\sum_{\ell=1}^K \exp(z_{\ell})}\right)
\end{align*}
where $\alpha \geq 0$ is the \emph{smoothing} parameter (with $\alpha = 0$, we recover ordinary CE). As seen in \Cref{fig:ls_lower_bound}, for small enough $\tau$ we achieve neural collapse, but for larger $\tau$, we do not. In fact, the global solutions of the label smoothing loss are not neural collapse for large enough $\tau$. To see this, let $(\mb W_1, \mb H_1)$ denote a $\mathcal{NC}$ solution, and let $(\mb W_2, \mb H_2)$ denote a solution where $\mb W_2 = \mb a \mb 1_K^\top$ and $\mb H_2 = \mb a \mb 1_N^\top$, where $\mb a$ is unit-norm. It is easy to compute that
\begin{align*}
    f(\mb W_1, \mb H_1) = \log\left(1 + (K-1)\exp\left(-\frac{K\tau }{K-1}\right)\right) + \alpha \tau
\end{align*}
so $f(\mb W_1, \mb H_1) \rightarrow \infty$ as $\tau \rightarrow \infty$, whereas $f(\mb W_2, \mb H_2) = \log(K)$ is independent of $\tau$. Again, characterizing the global solutions of \eqref{eq:general loss func} for label smoothing and a general landscape analysis are left as future work.
\begin{figure}[t]
    \centering
    \includegraphics[width=1\textwidth]{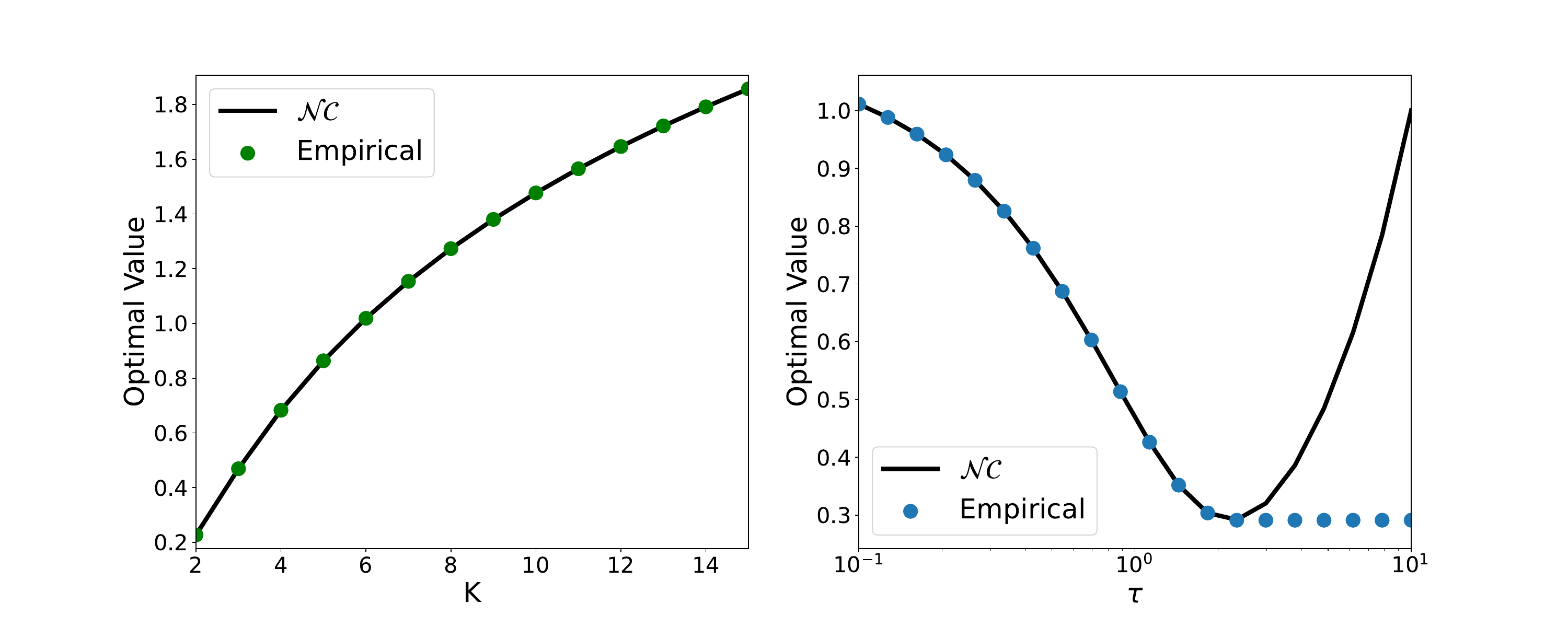}
    \caption{\textbf{Global optimization of label smoothing (LS) with $\alpha = 0.1$ under UFM with $d=16$ and $n=3$.} Black line refers to value of \eqref{eq:general loss func} at $\mathcal{NC}$ solutions. Empirical values found using gradient descent with random initialization. Left: Lower bound against number of classes $K$ while fixing $\tau=1$. Right: Lower bound against temperature $\tau$ while fixing $K=3$. The same empirical values are achieved over many trials.}
    \label{fig:ls_lower_bound}
\end{figure}
\\~\\
Finally, we look to the supervised contrastive loss. Unlike the other losses, we do not have classifier $\mb W$ when training, so we instead have the problem
\begin{align}
    \min_{\mb H} f(\mb H) := &-\frac{1}{N(n-1)} \sum_{i=1}^N \sum_{\substack{j \neq i \\ y_j = y_i}} \log \left( \frac{\exp(\tau^2 \mb h_i^\top \mb h_j)}{\sum_{\ell\neq i} \exp(\tau^2 \mb h_i^\top \mb h_{\ell})}\right) \label{eq:sc loss func} \\
     \;\text{s.t.} & \quad \mb H \in \mc {OB}(d,N) \nonumber.
\end{align}
We note that the loss as written above computes the loss over the entire dataset, as opposed to computing over all minibatches of a fixed size as in \cite{graf2021dissecting}. As seen in \Cref{fig:sc_lower_bound}, using gradient descent with random initialization on the supervised contrastive loss, we achieve neural collapse over a range of settings of $K$ and $\tau$. In fact, it is proven in \cite{graf2021dissecting} that the global minimizers of \eqref{eq:sc loss func} are $\mathcal{NC}$ solutions. However, an understanding of the global landscape requires further exploration and is left as future work.
\begin{figure}[t]
    \centering
    \includegraphics[width=1\textwidth]{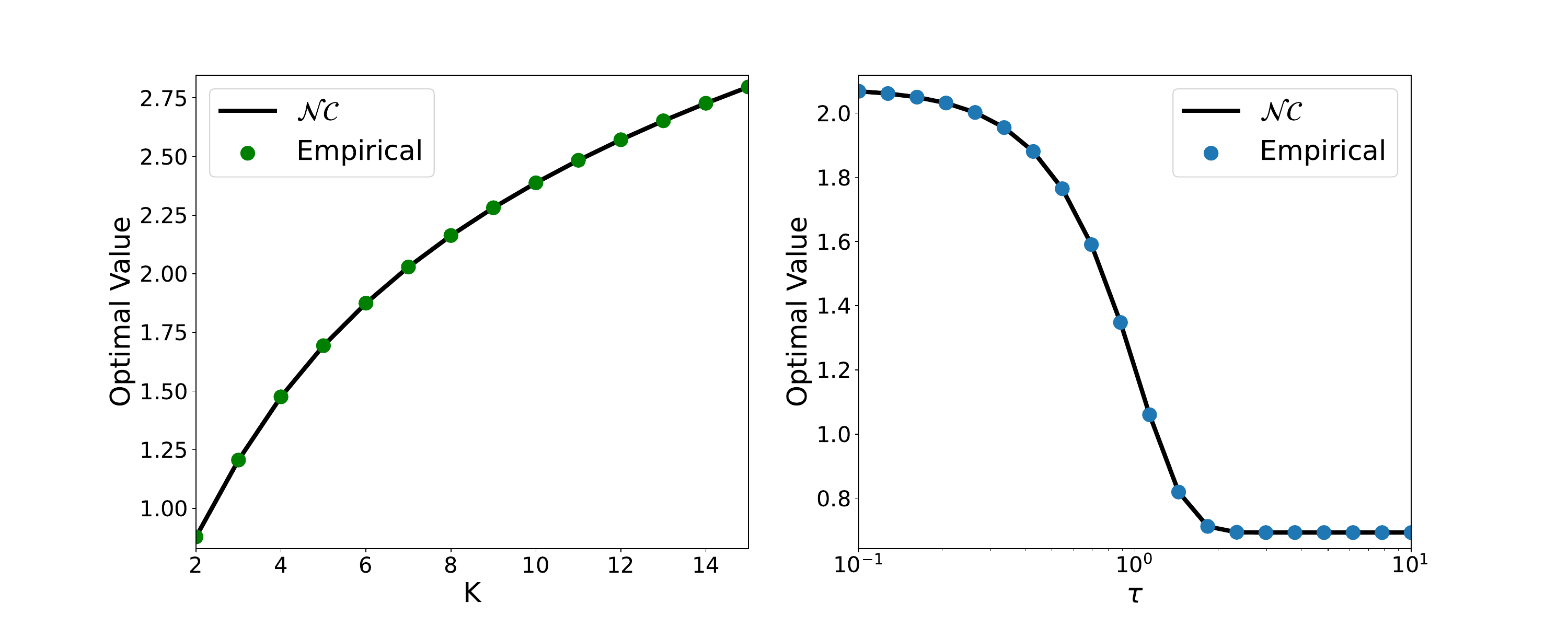}
    \caption{\textbf{Global optimization of supervised contrastive (SC) loss under UFM with $d=16$ and $n=3$.} Black line refers to value of \eqref{eq:sc loss func} at $\mathcal{NC}$ solutions. Empirical values found using gradient descent with random initialization. Left: Lower bound against number of classes $K$ while fixing $\tau=1$. Right: Lower bound against temperature $\tau$ while fixing $K=3$. The same empirical values are achieved over many trials.}
    \label{fig:sc_lower_bound}
\end{figure}

\section{Conclusion \& Discussion}\label{sec:conclusion}
In this work, motivated by the common practice of feature normalization in modern deep learning, we study the prevalence of the \NC\ phenomenon when last-layer features and classifiers are constrained on the sphere. Based upon the assumption of the UFM, we formulate the problem as a Riemannian optimization problem over the product of sphere (i.e., oblique manifold). We showed that the loss function is a strict saddle function over the manifold with respect to the last-layer features and classifiers, with no other spurious local minimizers. We demonstrated that this phenomenon occurs for overparameterized deep network training, and show the benefits of feature normalization in terms of training speed, learned representation quality, and generalization, both for the UFM and for practical deep networks on classification tasks. We conclude by placing our work in the context of existing literature, and we briefly discuss several exciting future research directions, motivated both by previous work as well as the work presented here.

\paragraph{Global optimality of \NC\ under UFM.} The seminal works \cite{papyan2020prevalence,han2021neural} inspired many recent theoretical studies of the \NC\ phenomenon. Because the training loss of a deep neural network is highly nonlinear, most works simplify the analysis by assuming unconstrained feature models (UFM) \cite{mixon2020neural,zhu2021geometric} or layer peeled models \cite{fang2021layer}. It basically assumes that the network has infinite expression power so that the features can be reviewed as free optimization variables. Based upon the UFM, \cite{lu2020neural} is the first work justifying the global optimality of \NC\ and uniformity based upon a CE loss with normalized features, although their study is quite simplified in the sense that they assume each class has only one training sample. The work \cite{fang2021layer} provided global optimality analysis for the CE loss with constrained features under more generic settings, and they also studied the case when the training samples are imbalanced in each class. However, this work constrains the sum of feature norms, whereas in our work, we constrain the features independently. The follow-up work \cite{ji2021unconstrained} extended the analysis to the unconstrained setting without any penalty. Additionally, motivated by the commonly used weight decay on network parameters, the work \cite{zhu2021geometric} justifies the global optimality of \NC\ for the CE loss under the unconstrained formulation, with penalization on both the features and classifiers. Its companion work \cite{zhou2022optimization} extended the analysis to the MSE loss. Under the same assumption, other work \cite{graf2021dissecting} studied both the CE and SC losses with features lying in the ball, proving that the only global solutions satisfy \NC\ properties. Our setting is most closely related to that in \cite{graf2021dissecting}, except we explicitly constrain the features on the sphere. Moreover, the work \cite{tirer2022extended} studied the setting beyond the simple UFM, showing that, even for a three-layer nonlinear network, the \NC\ solutions are the only global solutions with the MSE training loss. Motivated by feature normalization, as future work one could study a three-layer nonlinear network where the penultimate output is projected onto the sphere.

\paragraph{Benign global landscape and learning dynamics under UFM.} Since the training loss is highly nonconvex even under the UFM, merely studying global optimality is not sufficient for guaranteeing efficient global optimization. More recent works address this issue by investigating the global landscape properties and learning dynamics of specific training algorithms. More specifically, under the UFM, \cite{zhu2021geometric,zhou2022optimization} showed that the optimization landscapes of CE and MSE losses have benign global optimization landscapes, in the sense that every local minimizer satisfies \NC\ properties and the remaining critical points are strict saddles with negative curvatures. These works considered the unconstrained formulation with regularization on both features and classifiers. In comparison, our work studies the benign global landscape with features and classifiers constrained over the product of spheres - to our knowledge, this is the first work to study the global landscape of the constrained formulation. We have also empirically demonstrated benign landscapes for other losses such as focal loss and supervised constrastive loss, but further development is required for a theoretical analysis of the benign landscape. On the other hand, there is another line of works studying the implicit bias of learning dynamics under UFM \cite{mixon2020neural,rangamani2021dynamics,rangamani2022deep,poggio2020implicit,poggio2020explicit,ji2021unconstrained}, showing that the convergent direction is along the direction of the minimum-norm separation problem for both CE and MSE losses.

\paragraph{The empirical phenomena of \NC\ and feature engineering}
Although the seminal works \cite{papyan2020prevalence} and \cite{han2021neural} are the first to summarize the empirical prevalence of \NC\ for the commonly used CE and MSE losses respectively, the idea of designing features with intra-class compactness and inter-class separability has a richer history. More specifically, in the past many loss functions, such as center loss \cite{wen2016discriminative}, large-margin softmax (L-Softmax) loss \cite{liu2016large}, and its variants \cite{ranjan2017l2,liu2017sphereface,wang2018cosface,deng2019arcface} were designed with similar goals for the task of visual face recognition. Additionally, related works \cite{pernici2019maximally,pernici2021regular} introduced similar ideas of learning maximal separable features by fixing the linear classifiers with a simplex-shaped structure. Furthermore, the work \cite{zhou2022optimization} demonstrated that better collapse could potentially lead to better generalization, which is corroborated by our experiments with normalized features. However, learning neural collapsed features could also easily lead to overfitting \cite{zhang2016understanding} and vulnerability to data corruptions \cite{zhu2021geometric}. Additionally, the collapse of the feature dimension could cause the loss of intrinsic structure of input data, making the learned features less transferable \cite{kornblith2021why}. In self-supervised learning, recent works promote feature diversity and uniformity via contrastive learning \cite{chen2020simple,wang2020understanding}. In contrast, a line of recent work proposed to learn diverse while discriminative representation by designing a loss that maximizes the coding rate reduction \cite{yu2020learning,chan2021redunet}. Instead of collapsing the features to a single dimension, these works promote within-class diversity while maintaining maximum between-class separability. As such, it leads to better robustness and transferability. Since the coding rate is monotonic in the scale of the features, these works also normalize features on the sphere, making it a suitable problem to be studied over the manifold similar to our work. A geometric analysis of the coding rate reduction landscape is a topic of future research. 

\paragraph{Benefits of feature normalization}
The works \cite{ranjan2017l2,liu2017sphereface,wang2018cosface,deng2019arcface} first introduced feature normalization and demonstrated its advantages for learning more separable/discriminative features, which well motivates our study in this work. In fact, we demonstrated in \Cref{fig:low_dim_features} and \Cref{tab:low_dim_acc} that features learned on the sphere are more separable in low dimensions. However, as seen in our experimental results, the benefits of feature normalization extend beyond better separability. We have empirically demonstrated that using feature normalization leads to faster training and better generalization in some settings. Further investigation is necessary to gain a better understanding of the role of feature normalization in these phenomena.

\paragraph{Large number of classes $K \gg d$}
Lastly, in this work we have presented both theoretical and experimental results either in the case $d \geq K$ or $d > K$. In many applications, this is quite reasonable, since the number of classes isn't too large and we can pick $d$ accordingly in the network architecture. However, in the case of self-supervised constrastive learning, we do not have label information to assign training samples to common class labels, and therefore the number of classes $K$ can grow very large. In other applications such as recommendation systems \cite{covington2016deep} and document retrieval \cite{chang2019pre} the number of classes can also grow very large. In \Cref{fig:low_dim_features} and \Cref{tab:low_dim_acc}, we can see that feature normalization under the UFM gives better separability and classification accuracy when $K \gg d$, making feature normalization a good candidate for improving performance on these problems. However, our theory does not readily extend to this case. Although feature collapse is still a reasonable notion, we can not form a simplex ETF when $d < K+1$. Instead, we can measure the \emph{uniformity} of features over the sphere in place, and in fact the work \cite{wang2020understanding} shows that feature alignment (collapse) and uniformity are asymptotically minimized for contrastive loss. A finite sample analysis of the $K \gg d$ using the uniformity metric would be a meaningful extension to our work and of great interest for the aforementioned applications.

\section*{Acknowledgement}

Can Yaras and Qing Qu acknowledge support from U-M START \& PODS grants, NSF CAREER CCF 2143904, NSF CCF 2212066, NSF CCF 2212326, and ONR N00014-22-1-2529 grants. Peng Wang and Laura Balzano acknowledge support from ARO YIP W911NF1910027, AFOSR YIP FA9550-19-1-0026, and NSF CAREER CCF-1845076. Zhihui Zhu acknowledges support from NSF grants CCF-2240708 and CCF-2241298. We would like to thank Zhexin Wu (ETH), Yan Wen (Tsinghua), and Pengru Huang (UMich) for fruitful discussion through various stages of the work.


{\small
\bibliographystyle{abbrv}
\bibliography{biblio/optimization,biblio/learning}
}

\newpage 
\appendix

\newpage
\onecolumn
\par\noindent\rule{\textwidth}{1pt}
\begin{center}
{\Large \bf Appendix}
\end{center}
\vspace{-0.1in}
\par\noindent\rule{\textwidth}{1pt}

\paragraph{Organization of the appendices.}  The appendix is organized as follows. In \Cref{subsec:tools}, we provide some preliminary tools for analyzing our manifold optimization problem. Based upon this, the proof of \Cref{thm:ce optim} and the proof of \Cref{thm:ce landscape} are provided in \Cref{app:thm global} and \Cref{app:thm landscape}, respectively. 

\paragraph{Notations.} Before we proceed, let us first introduce the notations that will be used throughout the appendix. Let $\R^n$ denote $n$-dimensional Euclidean space and $\|\cdot\|_2$ be the Euclidean norm. We write matrices in bold capital letters such as $\mb A$, vectors in bold lower-case such as $\mb a$, and scalars in plain letters such as $a$. Given a matrix $\mb A \in \R^{d\times K}$, we denote by $\mb a_k$ its $k$-th column, $\mb a^i$ its $i$-th row, $a_{ij}$ its $(i,j)$-th element, and $\|\mb A\|$ its spectral norm. We use $\diag(\mb A)$ to denote a vector that consists of diagonal elements of $\mb A$ and $\ddiag(\mb A)$ to denote a diagonal matrix whose diagonal elements are the diagonal ones of $\mb A$. We use $\diag(\mb a)$ to denote a diagonal matrix whose diagonal is $\mb a$. Given a positive integer $n$, we denote by $[n]$ the set $\{1,\dots,n\}$. We denote the unit sphere in $\R^d$ by $\bb S^{d-1} := \{\mb x \in \R^d: \|\mb x\|_2 = 1\}$.




\section{Preliminaries}\label{subsec:tools}

In this section, we first review some basic aspects of the Riemannian optimization and then compute the derivative of the CE loss.

\subsection{Riemannian Derivatives} \label{subsec:Rie tools}

According to \cite[Chapter 3 \& 5]{boumal2022intromanifolds} and \cite{hu2020brief,absil2009optimization}, the tangent space of a general manifold $\mc M \subseteq \bb R^d$ at $\mb x $, denoted by $\mr{T}_{\mb x}\mc M$, is defined as the set of all vectors tangent to $\mc M$ at $\mb x$. Based on this, the Riemannian gradient $\grad f$ of a function $f$ at $\mb x$ is a unique vector in $\mr{T}_{\mb x}\mc M$ satisfying
\begin{align*}
    \innerprod{\grad f}{\mb \xi} \;=\; Df(\mb x) [\mb \xi],\quad \forall\ \mb \xi \in \mr{T}_{\mb x}\mc M.
\end{align*}
where $Df(\mb x) [\mb \xi]$ is the derivative of $f(\gamma(t))$ at $t=0$, $\gamma(t)$ is any curve on the manifold that satisfies $\gamma(0) = \mb x$ and $\dot{\gamma}(0) = \mb \xi$. The Riemannian Hessian $\Hess f(\mb x)$ is a mapping from the tangent space $\mr{T}_{\mb x}\mc M$ to the tangent space $\mr{T}_{\mb x}\mc M$ with
\begin{align*}
    \Hess f(\mb x)[\mb \xi] \;=\; \wt{\nabla}_{\mb \xi} \grad f(\mb x),
\end{align*}
where $\wt{\nabla}$ is the Riemannian connection. For a function $f$ defined on the manifold $\mc M$, if it can be extended smoothly to the ambient Euclidean space, we have 
\begin{align*}
    \grad f(\mb x) \;&=\; \mc P_{ \mr{T}_{\mb x}\mc M } \paren{  \nabla f(\mb x) } , \\
    \Hess f(\mb x)[\mb \xi] \;&=\; \mc P_{ \mr{T}_{\mb x}\mc M } \paren{ D\grad f(\mb x)[\mb \xi] }.
\end{align*}
where $D$ is the Euclidean differential, and $\mc P_{ \mr{T}_{\mb x}\mc M }$ is the projection on the tangent space $\mr{T}_{\mb x}\mc M$. According to \cite[Example 3.18]{boumal2022intromanifolds}, if $\mc M = \bb S^{p-1}$, then the tangent space and projection are
\begin{align*}
    \mr{T}_{\mb x} \bb S^{p-1} \;=\; \Brac{ \mb z \in \bb R^p  \mid \mb x^\top \mb z = 0 }, \quad \mc P_{\mr{T}_{\mb x} \bb S^{p-1}} \mb z \;=\; (\mb I - \mb x \mb x^\top)\mb z.
\end{align*}
Moreover, the oblique manifold $\mc M = \mc {OB}(p,q)$ is a product of $q$ unit spheres, and it is also a smooth manifold embedded in $\bb R^{p \times q}$, where
\begin{align*}
    \mc M = \mc {OB}(p,q) \;=\; \underbrace{\bb S^{p-1} \times \bb S^{p-1} \times \cdots \times \bb S^{p-1} }_{ q\; \text{times} } \;=\; \Brac{ \mb Z \in \bb R^{p \times q} \mid \diag\paren{ \mb Z^\top \mb Z } = \mb 1  }.
\end{align*}
Correspondingly, the tangent space $ \mc P_{\mathrm{T}_{\mb X} \mc {OB}(p, q) }$ is
\begin{align*}
    \mathrm{T}_{\mb X} \mc {OB}(p,q) \;=\; \mr{T}_{\mb x_1} \bb S^{p-1} \times \cdots \times \mr{T}_{\mb x_q} \bb S^{p-1}  \;&=\; \Brac{ \mb Z \in \bb R^{p \times q} \mid  \mb x_i^\top \mb z_i \;=\; 0 ,\; 1\leq i \leq q },\\
     \;&=\; \Brac{ \mb Z \in \bb R^{p \times q} \mid \diag\paren{ \mb X^\top \mb Z } = \mb 0 }
\end{align*}
and the projection operator $\mr{T}_{\mb x_1} \bb S^{p-1}$ is
\begin{align*}
    \mc P_{\mathrm{T}_{\mb X} \mc {OB}(p, q) }(\mb Z) \;&=\; \begin{bmatrix}
    \paren{\mb I - \mb x_1 \mb x_1^\top } \mb z_1 & \cdots & \paren{\mb I - \mb x_q \mb x_q^\top } \mb z_q
    \end{bmatrix}
    \\
    \;&=\;\mb Z - \mb X  \ddiag(\mb X^\top \mb Z).
\end{align*}

\subsection{Derivation of \eqref{eq:euc-hessian} and \eqref{eq:bilinear Hess}} \label{app:hessderivation}
We first derive \eqref{eq:euc-hessian}. Define the curve
\begin{align*}
    \phi(t) :&= f(\mb W + t \mb \Delta_{\mb W}, \mb H + t \mb \Delta_{\mb H}) \\
    &= g(\tau (\mb W + t \mb \Delta_{\mb W})^\top (\mb H + t \mb \Delta_{\mb H})) \\
    &= g(\tau \mb W^\top \mb H + \tau (\mb \Delta_{\mb W}^\top \mb H + \mb W^\top \mb \Delta_{\mb H})t + \tau \mb \Delta_{\mb W}^\top \mb \Delta_{\mb H}t^2) \\
    &= g(\mb M + \mb \delta(t))
\end{align*}
where $\mb M = \tau \mb W^\top \mb H$ and $\mb \delta(t) = \tau (\mb \Delta_{\mb W}^\top \mb H + \mb W^\top \mb \Delta_{\mb H})t + \tau \mb \Delta_{\mb W}^\top \mb \Delta_{\mb H}t^2$ satisfies
\begin{align*}
    \dot{\mb \delta}(t) &= \tau(\mb \Delta_{\mb W}^\top \mb H + \mb W^\top \mb \Delta_{\mb H}) + 2\tau \mb \Delta_{\mb W}^\top \mb \Delta_{\mb H}t  \\
    \ddot{\mb \delta}(t) &= 2 \tau \mb \Delta_{\mb W}^\top \mb \Delta_{\mb H}
\end{align*}
so by chain rule and product rule we have
\begin{align*}
    \dot{\phi}(t) = \left<\dot{\mb \delta}(t), \nabla g(\mb M + \mb \delta(t))\right>
\end{align*}
and
\begin{align*}
    \ddot{\phi}(t) = \left<\ddot{\mb \delta}(t), \nabla g(\mb M + \mb \delta(t))\right> +
    \nabla^2 g(\mb M + \mb \delta(t))[\dot{\mb \delta}(t), \dot{\mb \delta}(t)].
\end{align*}
Then since $\nabla^2 f(\mb W, \mb H)[\mb \Delta, \mb \Delta] = \ddot{\phi}(0)$, we have
\begin{align*}
    &\nabla^2 f(\mb W, \mb H)[\mb \Delta, \mb \Delta] \\
    =& \left<\ddot{\mb \delta}(0), \nabla g(\mb M + \mb \delta(0))\right> +
    \nabla^2 g(\mb M + \mb \delta(0))[\dot{\mb \delta}(0), \dot{\mb \delta}(0)] \\
    =& 2\tau \left<\mb \Delta_{\mb W}^\top \mb \Delta_{\mb H}, \nabla g(\mb M)\right> + \nabla^2 g(\mb M)[\tau(\mb \Delta_{\mb W}^\top \mb H + \mb W^\top \mb \Delta_{\mb H}), \tau(\mb \Delta_{\mb W}^\top \mb H + \mb W^\top \mb \Delta_{\mb H})]
\end{align*}
giving the result. \\~\\
Now we derive \eqref{eq:bilinear Hess}. First, we consider the general case of a function $f$ defined on the oblique manifold $\mc M = \mc {OB}(p,q)$, where $f$ can be smoothly extended to the ambient Euclidean space. We have
\begin{align*}
    \grad f(\mb X) &= \nabla f(\mb X) - \mb X \ddiag(\mb X^\top \nabla f(\mb X)).
\end{align*}
Then
\begin{align*}
    D\grad f(\mb X) [\mb U] &= \lim_{t\rightarrow 0} \mb \Delta(t)
\end{align*}
where
\begin{align*}
    \mb \Delta(t) = & \frac{\grad f(\mb X + t \mb U) - \grad f(\mb X)}{t} \\
    =& \frac{\nabla f(\mb X + t\mb U) - (\mb X + t \mb U) \ddiag((\mb X + t\mb U)^\top \nabla f(\mb X + t \mb U)) - \nabla f(\mb X) + \mb X \ddiag(\mb X^\top \nabla f(\mb X))}{t} \\
    =& \frac{\nabla f(\mb X + t \mb U) - \nabla f(\mb X)}{t} - \mb U \ddiag(\mb X^\top \nabla f(\mb X + t\mb U)) - \mb X \ddiag(\mb U^\top \nabla f(\mb X + t \mb U)) \\
    &- \mb X \ddiag\left(\mb X^\top \frac{\nabla f(\mb X + t \mb U) - \nabla f(\mb X)}{t}\right)
    - t\ \mb U \ddiag(\mb U^\top \nabla f(\mb X + t\mb U))
\end{align*}
so
\begin{align*}
    D\grad f(\mb X) [\mb U] &= \nabla^2 f(\mb X)[\mb U] - \mb U \ddiag(\mb X^\top \nabla f(\mb X)) \\
    &- \mb X \ddiag(\mb U^\top \nabla f(\mb X)) - \mb X \ddiag(\mb U^\top \nabla^2 f(\mb X)[\mb U]).
\end{align*}
Now, for $\mb U \in \mr{T}_{\mb X}\mc M$, we have $\diag(\mb U^\top \mb X) = \mb0$ so
\begin{align*}
    \Hess f(\mb X)[\mb U, \mb U] &= \left<\mb U, \Hess f(\mb X)[\mb U] \right> \\
    &= \left< \mb U, D \grad f(\mb X) - \mb X \ddiag(\mb X^\top D \grad f(\mb X))\right> \\
    &= \left< \mb U, D \grad f(\mb X) \right> \\
    &= \left< \mb U, \nabla^2 f(\mb X)[\mb U] \right> - \left< \mb U, \mb U \ddiag(\mb X^\top \nabla f(\mb X)) \right> \\
    &= \nabla^2 f(\mb X)[\mb U, \mb U] - \left<\mb U \ddiag(\mb X^\top \nabla f(\mb X)), \mb U\right>.
\end{align*}
Now let $f$ be defined as in \eqref{eq:ce-loss func}. Since $(\mb W, \mb H)$ lies on the product manifold $\mc {OB}(d,K) \times \mc {OB}(d,N) = \mc {OB}(d,K+N)$ which is also an oblique manifold, we can simply use the general result above, i.e., for $\mb \Delta \in \mr{T}_{(\mb W, \mb H)}\mc \mc {OB}(d, N+K)$,
\begin{align*}
    \Hess f(\mb W, \mb H)[\mb \Delta, \mb \Delta]=\;& \nabla^2 f(\mb W, \mb H)[\mb \Delta, \mb \Delta] - \left<\mb \Delta_{\mb W} \ddiag(\mb W^\top \nabla_{\mb W} f(\mb W, \mb H)), \mb \Delta_{\mb W}\right> \\
    &- \left<\mb \Delta_{\mb H} \ddiag(\mb H^\top \nabla_{\mb H} f(\mb W, \mb H)), \mb \Delta_{\mb H}\right>
\end{align*}
which gives \eqref{eq:bilinear Hess} after substituting the ordinary Euclidean gradient of $f$.
\subsection{Derivatives of CE Loss}\label{subsec:grad CE}
Note that the CE loss is of the form
\begin{align*}
    \mc L_{\mathrm{CE}}\paren{ \mb z, \mb y_k } \;=\; - \log \paren{  \frac{ \exp(z_k) }{ \sum_{\ell=1}^K \exp(z_{\ell})  } } \;=\; \log\left(\sum_{\ell=1}^K \exp(z_{\ell}) \right) - z_k.
\end{align*}
Then, one can verify
\begin{align*}
   \frac{ \partial \mc L_{\mathrm{CE}}\paren{ \mb z, \mb y_k } }{ \partial z_j } \;=\; \begin{cases}
   \frac{ \exp(z_j) }{ \sum_{\ell=1}^K \exp(z_{\ell})}, & j \neq k, \\
    \frac{ \exp(z_j) }{ \sum_{\ell=1}^K \exp(z_{\ell})} - 1, & j = k,
   \end{cases}
\end{align*}
for all $j \in [K]$. Thus, we have
\begin{align*}
    \nabla \mc L_{\mathrm{CE}}(\mb z,\mb y_k) \;=\; \frac{ \exp\paren{ \mb z }}{\sum_{\ell=1}^K \exp(z_{\ell}) } - \mb e_k\;=\; \eta(\mb z) - \mb e_k,
\end{align*}
where $\eta(\mb z)$ is a softmax function, with
\begin{align*}
    \eta(z_j) \;:=\; \frac{ \exp\paren{ z_j } }{\sum_{\ell=1}^K \exp(z_{\ell}) }.
\end{align*}
Furthermore, we have
\begin{align*}
    \nabla^2 \mc L_{\mathrm{CE}}(\mb z,\mb y_k) \;=\; \diag(\eta(\mb z)) - \eta(\mb z)\eta(\mb z)^\top.
\end{align*}


\section{Proof of \Cref{thm:ce optim} }\label{app:thm global}

In this section, we first simplify Problem \eqref{eq:ce-loss func} by utilizing its structure, then characterize the structure of global solutions of the simplified problem, and finally deduce the struture of global solutions of Problem \eqref{eq:ce-loss func} based on their relationship.
Before we proceed, we can first reformulate Problem \eqref{eq:ce-loss func} as follows. Let 
\begin{align*}
    \mb H \;=\; \begin{bmatrix}
    \mb H^1 & \mb H^2 & \cdots & \mb H^n
    \end{bmatrix} \in \bb R^{d \times N}, \; \mb H^i \;=\; \begin{bmatrix}
    \mb h_{1,i} & \mb h_{2,i} & \cdots & \mb h_{K,i}
    \end{bmatrix} \in \bb R^{d \times K}, \;\forall\ i \in [N],
\end{align*}
and $\bar{f}:\R^{d\times K} \times \R^{d\times K} \rightarrow \R$ be such that
\begin{align}\label{eq:f bar}
    \bar{f}(\mb W, \mb Q) = \frac{1}{K} \sum_{k=1}^K \mc L_{\mathrm{CE}} \paren{ \tau \mb W^\top \mb q_k, \mb y_k }.
\end{align}
Then, we can rewrite the objective function of Problem \eqref{eq:ce-loss func} as
\begin{align}\label{eq:f sum}
    f(\mb W, \mb H) = \frac{1}{n}\sum_{i=1}^n \bar{f}(\mb W, \mb H^i).
\end{align}

\begin{lemma}\label{lem:f bar}
Suppose that $(\mb W^*, \mb Q^*) $ is an optimal solution of 
\begin{align}\label{eq:ce-loss func1}
    \min_{\mb W \in \R^{d\times K}, \mb Q \in \R^{d\times K} }\bar{f}(\mb W, \mb Q)\quad \st\ \mb Q \in \mc {OB}(d,K),\;  \mb W \in\; \mc {OB}(d,K).
\end{align}
Then, $(\mb W^*, \mb H^*)$ with $\mb H^* = \begin{bmatrix}
   \mb Q^* & \mb Q^* & \cdots & \mb Q^*
    \end{bmatrix}$ is an optimal solution of Problem \eqref{eq:ce-loss func}. 
\end{lemma}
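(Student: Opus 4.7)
The plan is to exploit the decomposition in \eqref{eq:f sum}, namely $f(\mb W,\mb H)=\frac{1}{n}\sum_{i=1}^{n}\bar f(\mb W,\mb H^i)$, which is obtained by simply regrouping the $N=Kn$ column-wise summands of $f$ by their sample index $i$ rather than by their class index $k$. Under this reindexing, each ``slice'' $\mb H^i=[\mb h_{1,i},\dots,\mb h_{K,i}]\in\R^{d\times K}$ carries exactly one representative from each class, and all $K$ classes contribute exactly once to $\bar f(\mb W,\mb H^i)$.

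The first step I would verify is that the feasibility constraints of Problem \eqref{eq:ce-loss func} and Problem \eqref{eq:ce-loss func1} are compatible with this regrouping. Since the spherical constraint $\|\mb h_{k,i}\|_2=1$ is imposed independently on each column, $\mb H\in\mc{OB}(d,N)$ is equivalent to $\mb H^i\in\mc{OB}(d,K)$ for every $i\in[n]$. Therefore, for any feasible $(\mb W,\mb H)$ of Problem \eqref{eq:ce-loss func}, the pair $(\mb W,\mb H^i)$ is a feasible point of Problem \eqref{eq:ce-loss func1} for each $i$.

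The second step is the key inequality. Because $(\mb W^*,\mb Q^*)$ is a global minimizer of $\bar f$ over $\mc{OB}(d,K)\times\mc{OB}(d,K)$, we have $\bar f(\mb W,\mb H^i)\ge \bar f(\mb W^*,\mb Q^*)$ for every $i$. Averaging over $i$ and using \eqref{eq:f sum} yields
\begin{equation*}
f(\mb W,\mb H)\;=\;\frac{1}{n}\sum_{i=1}^{n}\bar f(\mb W,\mb H^i)\;\ge\;\bar f(\mb W^*,\mb Q^*).
\end{equation*}

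Finally, I would check that this lower bound is attained at the proposed candidate. By construction, $(\mb H^*)^i=\mb Q^*$ for every $i\in[n]$, so $f(\mb W^*,\mb H^*)=\frac{1}{n}\sum_{i=1}^{n}\bar f(\mb W^*,\mb Q^*)=\bar f(\mb W^*,\mb Q^*)$, which matches the lower bound above. Hence $(\mb W^*,\mb H^*)$ is a global minimizer of Problem \eqref{eq:ce-loss func}. There is no real obstacle here; the only thing to be careful about is keeping the two indexings (by class $k$ versus by sample index $i$) straight and confirming that the constraint set on $\mb H$ factorizes across columns so that each $\mb H^i$ is independently constrained to the oblique manifold.
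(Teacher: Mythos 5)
Your proposal is correct and follows essentially the same route as the paper: decompose $f$ via \eqref{eq:f sum}, lower-bound each summand $\bar f(\mb W,\mb H^i)$ by the optimal value of Problem \eqref{eq:ce-loss func1}, and observe that the candidate $(\mb W^*,\mb H^*)$ attains this bound. If anything, your version is slightly cleaner than the paper's, since you keep the shared $\mb W$ fixed across slices rather than formally relaxing to independent $\mb W^i$ per slice.
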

\begin{proof}
According to \eqref{eq:f sum}, we note that
\begin{align*}
    & \min\left\{f(\mb W, \mb H):\ \mb H \in \mc {OB}(d,N),\;  \mb W \in\; \mc {OB}(d,K)  \right\} \\
    \ge\ & \frac{1}{n}\sum_{i=1}^n \min\left\{\bar{f}(\mb W^i, \mb H^i):\ \mb H^i \in \mc {OB}(d,K),\;  \mb W^i \in\; \mc {OB}(d,K)  \right\}, 
\end{align*}
where equality holds if $(\mb W^i, \mb H^i) = (\mb W^*, \mb Q^*)$ for all $i \in [n]$ and $(\mb W, \mb H) = (\mb W^*, \mb Q^*)$. This, together with the fact that $(\mb W^*, \mb Q^*)$ is an optimal solution of Problem \eqref{eq:ce-loss func1}, implies the desired result. 
\end{proof}
Based on the above lemma, it suffices to consider the global optimality condition of Problem \eqref{eq:ce-loss func1}. 
\begin{prop}\label{prop:ce opti}
Suppose that the feature dimension is no smaller than the number of classes (i.e., $d \ge K$) and the training labels are balanced in each class (i.e.,  $n = n_1 = \cdots =n_K$). Then, any global minimizer $(\mb W,\mb Q) \in  \mc {OB}(d,K) \times  \mc {OB}(d,K)$ of Problem \eqref{eq:ce-loss func1} satisfies 
\begin{align}\label{rst:ce opti}
    \mb Q = \mb W,\quad \mb Q^{T}\mb Q = \frac{K}{K-1}\left(\mb I_K - \frac{1}{K}\mb 1_K \mb 1_K^\top \right).   
\end{align}
\end{prop}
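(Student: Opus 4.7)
The plan is to derive a tight lower bound on $\bar f(\mb W,\mb Q)$ in terms of only the quantities $T:=\mathrm{tr}(\mb W^\top \mb Q)$ and $S:=\mb 1^\top \mb W^\top \mb Q\mb 1$, and then to bound the combination $KT-S$ using a Young/Cauchy--Schwarz argument on the manifold constraints. Let $z_{k\ell}:=\tau\mb w_\ell^\top \mb q_k$. For each row $k$ I first write
\begin{equation*}
\mc L_{\mathrm{CE}}(\tau\mb W^\top\mb q_k,\mb y_k)=\log\Bigl(1+\sum_{\ell\neq k}\exp(z_{k\ell}-z_{kk})\Bigr),
\end{equation*}
and apply AM--GM (convexity of $\exp$) to the inner sum to get the lower bound $(K-1)\exp\bigl(\tfrac{1}{K-1}\sum_{\ell\neq k}(z_{k\ell}-z_{kk})\bigr)$. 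Letting $g(t):=\log(1+(K-1)e^{t})$ (which is convex and strictly increasing), I then apply Jensen's inequality to $\frac{1}{K}\sum_k g(\cdot)$ to pass the average inside $g$. A short algebraic computation of $\frac{1}{K}\sum_k\bigl(\frac{1}{K-1}\sum_{\ell\neq k}z_{k\ell}-z_{kk}\bigr)$ identifies it as $\tau(S-KT)/(K(K-1))$, yielding
\begin{equation*}
\bar f(\mb W,\mb Q)\;\geq\; g\!\left(\frac{\tau(S-KT)}{K(K-1)}\right).
\end{equation*}

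Because $g$ is increasing, the next step is to upper bound $KT-S$. Using the symmetry identity
\begin{equation*}
KT-S\;=\;\tfrac{1}{2}\sum_{k,\ell}\langle\mb w_k-\mb w_\ell,\,\mb q_k-\mb q_\ell\rangle,
\end{equation*}
together with Young's inequality $\langle a,b\rangle\le \tfrac12(\|a\|^2+\|b\|^2)$ and the expansions $\sum_{k,\ell}\|\mb w_k-\mb w_\ell\|^2=2K^2-2\|\mb W\mb 1\|^2$ (and analogously for $\mb Q$), I obtain
\begin{equation*}
KT-S\;\leq\; K^2-\tfrac12\bigl(\|\mb W\mb 1\|^2+\|\mb Q\mb 1\|^2\bigr)\;\leq\; K^2.
\end{equation*}
Combining with the previous display gives $\bar f\ge \log\bigl(1+(K-1)e^{-K\tau/(K-1)}\bigr)$, and since this value is achieved by any simplex-ETF configuration with $\mb W=\mb Q$, the global minimum is identified.

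The remaining step is to propagate equality backward through every inequality, which characterizes all global minimizers. Equality in the last display forces (i) $\mb W\mb 1=\mb Q\mb 1=\mb 0$ and (ii) $\mb w_k-\mb w_\ell=\mb q_k-\mb q_\ell$ for every pair, which together with (i) yield $\mb W=\mb Q$. Equality in AM--GM at each row $k$ forces the off-diagonal entries of $\mb W^\top\mb W$ in that row to be constant; symmetry of $\mb W^\top\mb W$ then forces a single common off-diagonal value $c$, and the centering condition $\mb W^\top\mb W\mb 1=\mb 0$ pins down $c=-\tfrac{1}{K-1}$, yielding $\mb W^\top\mb W=\tfrac{K}{K-1}(\mb I_K-\tfrac1K\mb 1_K\mb 1_K^\top)$ as claimed. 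Equality in Jensen is automatic once the AM--GM equality holds, so no new constraint is introduced.

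The main obstacle I anticipate is the $KT-S\leq K^2$ step: without the right symmetrization, bounding $KT-S$ directly via entrywise Cauchy--Schwarz gives only $KT-S\le 2K^2$, which is too weak by a factor of two and would not be tight at the NC solution. The key trick is the symmetrization $KT-S=\tfrac12\sum_{k,\ell}\langle\mb w_k-\mb w_\ell,\mb q_k-\mb q_\ell\rangle$, after which Young's inequality yields the tight bound with the correct equality conditions linking $\mb W$ and $\mb Q$ rigidly. Once that identity is in hand the rest of the proof is essentially bookkeeping of equality cases, and the final statement of Theorem~\ref{thm:ce optim} then follows from Lemma~\ref{lem:f bar} by replicating $\mb Q^*$ across all $n$ samples per class.
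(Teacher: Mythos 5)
Your proposal is correct, and it reaches the result by a genuinely different route from the paper. The paper's proof imports a parametrized linear lower bound on the per-sample CE loss (Lemma D.5 of the cited work of Zhu et al.), sums it over $k$ to reduce the problem to bounding $\tau\sum_{k}\mb w_k^\top(\bar{\mb q}-\mb q_k)$, and then applies a parametrized Young inequality $\mb u^\top\mb v\ge-\tfrac{c_3}{2}\|\mb u\|_2^2-\tfrac{1}{2c_3}\|\mb v\|_2^2$, tracking equality through the auxiliary constants $c_1,c_2,c_3$ (with $c_3=1$ forced by the unit-norm constraints). You instead keep the loss intact, use AM--GM inside the logarithm and Jensen on the convex increasing function $g(t)=\log(1+(K-1)e^t)$ to reduce everything to the single scalar $S-KT$ --- which is exactly $K$ times the paper's quantity, since $\sum_k\mb w_k^\top(\bar{\mb q}-\mb q_k)=(S-KT)/K$ --- and then bound $KT-S$ by the pairwise symmetrization $KT-S=\tfrac12\sum_{k,\ell}\langle\mb w_k-\mb w_\ell,\mb q_k-\mb q_\ell\rangle$ followed by unparametrized Young. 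I checked the key computations: the average of the exponents is indeed $\tau(S-KT)/(K(K-1))$, the expansion $\sum_{k,\ell}\|\mb w_k-\mb w_\ell\|_2^2=2K^2-2\|\mb W\mb 1\|_2^2$ is right, the bound $KT-S\le K^2$ is tight at the ETF (where $T=K$, $S=0$), and the equality propagation (Young equality plus $\mb W\mb 1=\mb Q\mb 1=\mb 0$ giving $\mb W=\mb Q$; AM--GM equality plus symmetry of $\mb W^\top\mb W$ and centering giving the common off-diagonal $-1/(K-1)$) is sound, with the Jensen equality indeed automatic once the rows of $\mb W^\top\mb W$ are forced to be permutations of one another. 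What your route buys is self-containment --- no external lemma and no free parameters to tune --- and a cleaner equality analysis in which $\mb W=\mb Q$ falls out directly; what the paper's route buys is that the explicit optimal value emerges from the choice of $c_1$ rather than by separate evaluation at the ETF, and it reuses machinery already established for the regularized formulation. Both are valid; the one point to make explicit in a final write-up is that achievability of the lower bound (existence of a sphere-inscribed simplex ETF) is where the hypothesis $d\ge K$ enters.
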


\begin{proof} 
According to \cite[Lemma D.5]{zhu2021geometric}, it holds for all $k \in [K]$ and any $c_1 > 0$ that
\begin{align*}
    (1+c_1)(K-1)\left( \mc L_{\mathrm{CE}}\left( \tau \mb W^\top \mb q_k, \mb y_k\right) - c_2  \right) & \ge  \tau \left( \sum_{\ell=1}^K \mb w_\ell^\top\mb q_k - K \mb w_k^\top \mb q_k  \right),
\end{align*}
where 
\begin{align*}
    c_2 = \frac{1}{1+c_1}\log\left( (1+c_1)(K-1) \right) + \frac{c_1}{1+c_1}\log\left( \frac{1+c_1}{c_1} \right)    
\end{align*}
and the equality holds when $\mb w_i^\top\mb q_k = \mb w_j^\top\mb q_k$ for all $i,j \neq k$ and 
\begin{align*}
    c_1 = \left( (K-1)\exp\left( \frac{\sum_{\ell=1}^K\mb w_\ell^\top\mb q_k - K\mb w_k^\top\mb q_k}{K-1} \right) \right)^{-1}.
\end{align*}
This, together with \eqref{eq:f bar}, implies
\begin{align*}
    (1+c_1)(K-1)\left(  \bar{f}(\mb W, \mb Q) - c_2 \right) & \ge \frac{\tau}{K}\sum_{k=1}^K\left( \sum_{\ell=1}^K \mb w_\ell^\top\mb q_k - K\mb w_k^\top\mb q_k \right) \\
    & = \frac{\tau}{K}\left( \sum_{k=1}^K\sum_{\ell=1}^K \mb w_k^\top \mb q_\ell - K\sum_{k=1}^K\mb w_k^\top\mb q_k \right) \\
    & = \tau\sum_{k=1}^K\mb w_k^\top\left( \bar{\mb q} - \mb q_k \right),
\end{align*}
where the first inequality becomes equality when $\mb w_i^\top\mb q_k = \mb w_j^\top\mb q_k$ for all $i,j \neq k$ and all $k \in [K]$ and $\bar{\mb q} = \frac{1}{K}\sum_{\ell=1}^K \mb q_\ell$ in the last equality. Note that that $\mb u^\top \mb v \ge -\frac{c_3}{2}\|\mb u\|_2^2 - \frac{1}{2c_3}\|\mb v\|_2^2$ for any $c_3 >0$, where the equality holds when $c_3\mb u = -\mb v$. Consequently, it holds for any $c_3 > 0$ that
\begin{align*}
    (1+c_1)(K-1)\left(  \bar{f}(\mb W, \mb Q) - c_2 \right) & \ge -\tau\sum_{k=1}^K \left( \frac{c_3}{2}\|\mb w_k\|_2^2 + \frac{1}{2c_3}\|\bar{\mb q} - \mb q_k\|_2^2 \right) \\
    & = -\frac{\tau}{2}\left(c_3\sum_{k=1}^K \|\mb w_k\|_2^2 + \frac{1}{c_3}\sum_{k=1}^K \|\mb q_k\|_2^2 - \frac{K}{c_3}\|\bar{\mb q}\|_2^2 \right) \\
    & \ge -\frac{\tau}{2}\left(c_3\sum_{k=1}^K \|\mb w_k\|_2^2 + \frac{1}{c_3}\sum_{k=1}^K \|\mb q_k\|_2^2 \right) = -\frac{\tau}{2}\left(c_3K+\frac{K}{c_3} \right),
\end{align*}
where the first inequality becomes equality when $c_3\mb w_k = \mb q_k - \bar{\mb q}$ for all $k \in [K]$, the second inequality becomes equality when $\bar{\mb q} = \mb 0$, and the last equality is due to $\mb Q \in \mc {OB}(d,K)$ and $\mb W \in\; \mc {OB}(d,K)$. Thus, we have
\begin{align*}
    (1+c_1)(K-1)\left(  \bar{f}(\mb W, \mb Q) - c_2 \right) \ge -\frac{\tau K}{2}\left(c_3 + \frac{1}{c_3} \right),
\end{align*}
where the equality holds when $\mb w_i^\top\mb q_k = \mb w_j^\top\mb q_k$ for all $i,j \neq k$ and all $k \in [K]$, $c_3\mb w_k = \mb q_k$ for all $k \in [K]$, and $\sum_{k=1}^K \mb q_k = \mb 0$.
This, together with $\mb Q \in \mc {OB}(d,K)$ and $\mb W \in\; \mc {OB}(d,K)$, implies $c_3 = 1$. Thus, we have $ \mb q_k = \mb w_k$ for all $k \in [K]$ and
\begin{align*}
    \bar{f}(\mb W, \mb Q)  \ge -\frac{\tau K}{(1+c_1)(K-1)} + c_2.
\end{align*}
This further implies that $\sum_{k=1}^K \mb w_k = 0$, $\mb w_i^\top\mb w_k = \mb w_j^\top\mb w_k$ for all $i,j \neq k$ and all $k \in [K]$. Then, it holds that for all $1 \le k \neq \ell \le K$ that
\begin{align*}
    \langle \mb w_\ell, \mb w_k \rangle = -\frac{1}{K-1}.
\end{align*}
These, together with $\mb Q \in \mc {OB}(d,K)$ and $\mb W \in\; \mc {OB}(d,K)$, imply \eqref{rst:ce opti}. 
\end{proof}

\begin{proof}[Proof of \Cref{thm:ce optim}]
According to \eqref{eq:f sum}, \Cref{lem:f bar}, and \Cref{prop:ce opti}, the global solutions of Problem \eqref{eq:ce-loss func} take the form of 
\begin{align*}
    \mb h_{k,i} = \mb q_k,\ \mb w_k = \mb q_k,\ \forall\ k \in [K],\ i \in [N], 
\end{align*}
and
\begin{align*}
     \mb Q^{T}\mb Q = \frac{K}{K-1}\left(\mb I_K - \frac{1}{K}\mb 1_K \mb 1_K^\top \right).
\end{align*}
Based on this and the objective function in Problem \eqref{eq:ce-loss func}, the value at an optimal solution $(\mb W^*, \mb H^*)$ is 
\begin{align*}
    f(\mb W^*, \mb H^*) =\log\left( 1 + \frac{(K-1)\exp\left( -\frac{\tau}{K-1} \right)}{\exp(\tau)}\right) = \log\left( 1 + (K-1)\exp\left( -\frac{K\tau}{K-1} \right)\right).
\end{align*}
Then, we complete the proof.
\end{proof}

\section{Proof of \Cref{thm:ce landscape}}\label{app:thm landscape}

In this section, we first analyze the first-order optimality condition of Problem \eqref{eq:ce-loss func}, then characterize the global optimality condition of Problem \eqref{eq:ce-loss func}, and finally prove no spurious local minima and strict saddle point property based on the previous optimality conditions. 
For ease of exposition, let us denote
\begin{align}\label{eq:M grad}
    \mb M := \tau\mb W^\top\mb H,\ g(\mb M) := f(\mb W, \mb H) = \frac{1}{N}  \sum_{i=1}^n \sum_{k=1}^K \mc L_{\mathrm{CE}}( \mb m_{k,i}, \mb y_k). 
\end{align}
Then we have the gradient
\begin{align*}
    \nabla f(\mb W,\mb H) \;=\; \paren{ 
    \nabla_{\mb W} f(\mb W,\mb H), \nabla_{\mb H} f(\mb W,\mb H) }
\end{align*}
with 
\begin{align}\label{grad:f}
    \nabla_{\mb W} f(\mb W,\mb H) \;=\; \tau \mb H \nabla g(\mb M)^\top,\ \nabla_{\mb H} f(\mb W,\mb H) \;=\; \tau \mb W \nabla g(\mb M), 
\end{align}
and
\begin{align}\label{eqn:gradient-CE}
    \nabla g(\mb M) \;=\; \begin{bmatrix} \eta(\mb m_{1,1}) &\cdots & \eta(\mb m_{K,n}) \end{bmatrix} - \mb I_K \otimes \mb 1_n^\top, \quad \eta(\mb m) \;=\; \frac{\exp\paren{\mb m}}{ \sum_{i=1}^K \exp\paren{m_i }  }.
\end{align}

\subsection{First-Order Optimality Condition}

By using the tools in \Cref{subsec:Rie tools}, we can calculate the Riemannian gradient at a given point  $(\mb W, \mb H) \in \mc {OB}(d,N) \times \mc {OB}(d,K)$ as in \eqref{eq:rigrad H} and \eqref{eq:rigrad W}. 
Thus, for a point $(\mb W, \mb H) \in \mc {OB}(d,N) \times \mc {OB}(d,K)$, the first-order optimality condition of Problem \eqref{eq:ce-loss func} is 
\begin{align}
   \grad_{\mb W } f(\mb W,\mb H) \;&=\;  \tau \mb W \nabla g(\mb M) - \tau\mb H  \ddiag \paren{ \mb H^\top \mb W \nabla g(\mb M)} \;=\; \mb 0, 
    \label{FONC:W}\\
    \grad_{\mb H} f(\mb W,\mb H) \;&=\; \tau \mb H \nabla g(\mb M)^\top  - \tau \mb W \ddiag \paren{ \mb W^\top \mb H \nabla g(\mb W)^\top } \;=\; \mb 0. \label{FONC:H}
\end{align}
We denote the set of all critical points  by
\begin{align*}
    \mc C \;:=\; \Brac{ (\mb W,\mb H) \in \mc {OB}(d,K) \times \mc {OB}(d,N) \; \mid\;  \grad_{\mb H} f(\mb W,\mb H) = \mb 0, \; \grad_{\mb W} f(\mb W,\mb H) = \mb 0 }.
\end{align*}

\begin{lemma}\label{lem:alpha beta}
Suppose that $\mb g_i \in \R^K$ and $\mb g^k \in \R^N$ denote the $i$-th column and $k$-th row vectors of the matrix
\begin{align*}
     \mb G : = \nabla g (\mb M) \in \bb R^{K \times N},
\end{align*}
respectively. Let $\mb \alpha \in \R^K$ and $\mb \beta \in \R^N$ be such that 
\begin{align}\label{def:alpha beta}
     \alpha_k \;=\; \innerprod{\mb w_k}{\mb H \mb g^k},\forall\ k \in [K],\quad  \beta_i \;=\; \innerprod{\mb h_i}{\mb W \mb g_i},\forall\ i \in [N].
\end{align}
Then it holds for any $(\mb W, \mb H) \in \mc C$ that
\begin{align}\label{eq:FONC}
    \mb H \mb g^k \;=\; \alpha_k \mb w_k,\ \forall\ k \in [K],\quad \mb W \mb g_i \;=\; \beta_i \mb h_i,\ \forall\ i \in [N].
\end{align}
and
\begin{align}\label{eq:alpha-beta}
|\alpha_k| = \|\mb H \mb g^k\|_2,\ k = 1,\dots K,\quad   |\beta_i| = \|\mb W \mb g_i\|_2,\ i=1,\dots,N.
\end{align}

\end{lemma}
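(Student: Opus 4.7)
The plan is to read the two first-order conditions column-by-column and interpret them using the tangent-space projector structure on the oblique manifold. First I would collect the Euclidean gradients $\nabla_{\mb W} f = \tau \mb H \mb G^\top$ and $\nabla_{\mb H} f = \tau \mb W \mb G$ from \eqref{grad:f}, and project onto $\mathrm{T}_{\mb W}\mc{OB}(d,K)$ and $\mathrm{T}_{\mb H}\mc{OB}(d,N)$ using the formula $\mc P_{\mathrm{T}_{\mb X}\mc{OB}(p,q)}(\mb Z) = \mb Z - \mb X \ddiag(\mb X^\top \mb Z)$ from \Cref{subsec:Rie tools}. The critical-point equations \eqref{FONC:W}--\eqref{FONC:H} can then be rearranged as
$$\mb H \mb G^\top \;=\; \mb W \ddiag(\mb W^\top \mb H \mb G^\top), \qquad \mb W \mb G \;=\; \mb H \ddiag(\mb H^\top \mb W \mb G),$$
after canceling $\tau > 0$.

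Next, I would extract the $k$-th column of each side of the first identity. The $k$-th column of $\mb H \mb G^\top$ is exactly $\mb H \mb g^k$ by definition of $\mb g^k$ as the $k$-th row of $\mb G$, while the $k$-th column on the right equals $(\mb W^\top \mb H \mb G^\top)_{k k}\,\mb w_k = \langle \mb w_k, \mb H \mb g^k\rangle\, \mb w_k = \alpha_k\, \mb w_k$ by the definition of $\alpha_k$ in \eqref{def:alpha beta}. This gives the first half of \eqref{eq:FONC}. Symmetrically, reading off the $i$-th column of the second identity and invoking the definition of $\beta_i$ yields $\mb W \mb g_i = \beta_i\, \mb h_i$.

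Finally, for \eqref{eq:alpha-beta}, I would just take Euclidean norms on both sides of \eqref{eq:FONC}: since $(\mb W, \mb H) \in \mc{OB}(d,K)\times\mc{OB}(d,N)$ forces $\|\mb w_k\|_2 = 1$ and $\|\mb h_i\|_2 = 1$, we obtain $\|\mb H \mb g^k\|_2 = |\alpha_k|$ and $\|\mb W \mb g_i\|_2 = |\beta_i|$.

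Overall this is a direct algebraic reading of the first-order condition rather than a substantive argument, so I do not anticipate a genuine obstacle; the only care required is to match the row/column indexing consistently between $\mb G$, $\mb g_i$, $\mb g^k$ and the diagonal-scaling factors on the appropriate side $\mb W$ or $\mb H$. The real leverage of this lemma comes later, when \eqref{eq:FONC}--\eqref{eq:alpha-beta} will be used to classify critical points and locate negative-curvature directions in the Riemannian Hessian.
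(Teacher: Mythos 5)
Your proposal is correct and follows essentially the same route as the paper's proof: identify $\ddiag(\mb W^\top \mb H\mb G^\top)=\diag(\mb\alpha)$ and $\ddiag(\mb H^\top\mb W\mb G)=\diag(\mb\beta)$, read the critical-point equations column by column to get \eqref{eq:FONC}, and take norms using the unit-norm constraints to get \eqref{eq:alpha-beta}. No gaps.
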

\begin{proof}
According to \eqref{grad:f}, we have 
\begin{align*}
     \mb H \mb G^\top = \begin{bmatrix} 
    \mb H\mb g^1 & \dots & \mb H\mb g^K 
\end{bmatrix},\quad \mb W \mb G = \begin{bmatrix} 
    \mb W\mb g_1 & \dots & \mb W\mb g_K
\end{bmatrix}
\end{align*}
Using \eqref{def:alpha beta}, we can compute
\begin{align*}
    \ddiag\left(\mb W^\top\mb H \mb G^\top\right) = \diag( \mb \alpha),\quad  \ddiag\left(\mb H^\top\mb W \mb G \right) = \diag( \mb \beta )
\end{align*}
This, together with \eqref{FONC:W} and \eqref{FONC:H}, implies \eqref{eq:FONC}.
Since $\|\mb w_k\|_2 = 1$ for all $k \in [K]$ and $\|\mb h_i\|_2 = 1$ for all $i \in [N]$, by
\begin{align*}
   \alpha_k^2 =  \langle \alpha_k\mb w_k, {\mb H \mb g^k} \rangle = \left\|\mb H\mb g^k \right\|_2^2,\quad \beta_i^2 =  \langle \beta_i\mb h_i, \mb W \mb g_i \rangle = \left\|\mb W\mb g_i \right\|_2^2
\end{align*}
which implies \eqref{eq:alpha-beta}.
\end{proof}

\subsection{Characterization of Global Optimality} 

According to \Cref{thm:ce optim}, it holds that for any global solution $(\mb W, \mb H) \in \mc {OB}(d,N) \times \mc {OB}(d,K)$ that 
\begin{align}\label{eq:opti sol}
    \mb H = \mb W \otimes \mb 1_n^\top,\ \mb W^\top \mb W = \frac{K}{K-1}\left( \mb I_K - \frac{1}{K}\mb 1_K \mb 1_K^\top\right),
\end{align}
where $\otimes$ denotes the Kronecker product.

\begin{lemma}\label{lem:global-saddle}
Given any critical point $(\mb W,\mb H) \in \mc C$, let $\mb \alpha \in \R^K$ and $\mb \beta \in \R^N$ be defined as in \eqref{def:alpha beta}. Then, $(\mb W,\mb H)$ is a global solution of Problem \eqref{eq:ce-loss func} if and only if the corresponding $(\mb \alpha, \mb \beta)$ satisfies  
\begin{align}\label{global:alpha beta}
    \alpha_k \le -\sqrt{n}\|\nabla g(\mb M)\|,\ \forall\ k \in [K],\quad \beta_i \le -\frac{\|\nabla g(\mb M)\|}{\sqrt{n}},\ \forall\ i \in [N],
\end{align}
where $\mb M = \tau\mb W^\top\mb H$.
\end{lemma}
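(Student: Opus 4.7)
The proof plan is to combine the first-order identities of \Cref{lem:alpha beta} with convexity of the CE surrogate $g$ and a Cauchy--Schwarz / operator--Frobenius bound adapted to the oblique manifold. The forward direction (global solution $\Rightarrow$ inequalities) will be a direct computation at the simplex-ETF minimizer from \Cref{thm:ce optim}, while the reverse direction will show that, under the hypothesis on $\mb \beta$, the inner product $\langle \nabla g(\mb M), \mb M'\rangle$ is minimized over the feasible set at $\mb M' = \mb M$; the subgradient inequality then yields $f(\mb W,\mb H) \le f(\mb W',\mb H')$ for every feasible $(\mb W',\mb H')$.

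For the forward direction, substitute the structure $\mb h_{k,i}=\mb w_k$, $\sum_k \mb w_k = \mb 0$, and $\mb w_j^\top \mb w_k = -1/(K-1)$ for $j\ne k$ from \Cref{thm:ce optim} into \eqref{eqn:gradient-CE}. Each column of $\mb G := \nabla g(\mb M)$ in the $k$-th block equals $q(\mb 1_K - K\mb e_k)$, where $q := e^{-\tau/(K-1)}/(e^\tau + (K-1)e^{-\tau/(K-1)})$, so $\mb G = q(\mb 1_K \mb 1_K^\top - K\mb I_K)\otimes \mb 1_n^\top$ and a short spectral computation gives $\|\mb G\| = qK\sqrt n$. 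Using $\sum_j \mb w_j = \mb 0$ one computes $\mb H \mb g^k = -nKq\,\mb w_k$ and $\mb W \mb g_{(k,i)} = -Kq\,\mb w_k$, hence $\alpha_k = -nKq = -\sqrt n\,\|\mb G\|$ and $\beta_i = -Kq = -\|\mb G\|/\sqrt n$, so both inequalities in \eqref{global:alpha beta} hold with equality.

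For the reverse direction, assume $(\mb W,\mb H) \in \mc C$ satisfies \eqref{global:alpha beta}. The FONC identity $\mb W\mb G = \mb H\diag(\mb \beta)$ from \Cref{lem:alpha beta} together with $\|\mb h_i\|_2 = 1$ yields
\begin{equation*}
    \langle \mb G, \mb M\rangle \;=\; \tau\langle \mb W\mb G, \mb H\rangle \;=\; \tau\sum_i \beta_i\|\mb h_i\|_2^2 \;=\; \tau\sum_i \beta_i \;\le\; -\tau K\sqrt n\,\|\mb G\|,
\end{equation*}
using the hypothesis on $\mb \beta$. For an arbitrary feasible $(\mb W',\mb H')$, Cauchy--Schwarz on each column, $\|\mb h'_i\|_2 = 1$, and the operator--Frobenius inequality $\|\mb W'\mb G\|_F = \|\mb G^\top(\mb W')^\top\|_F \le \|\mb G\|\,\|\mb W'\|_F = \sqrt K\,\|\mb G\|$ give
\begin{equation*}
    \langle \mb G, \mb M'\rangle \;=\; \tau\langle \mb W'\mb G, \mb H'\rangle \;\ge\; -\tau\sum_i \|\mb W'\mb g_i\|_2 \;\ge\; -\tau\sqrt N\,\|\mb W'\mb G\|_F \;\ge\; -\tau K\sqrt n\,\|\mb G\|.
\end{equation*}
Combining the two bounds gives $\langle \mb G, \mb M'\rangle \ge \langle \mb G, \mb M\rangle$, and convexity of $g$ finishes the argument:
\begin{equation*}
    g(\mb M') \;\ge\; g(\mb M) + \langle \mb G, \mb M' - \mb M\rangle \;\ge\; g(\mb M).
\end{equation*}

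The subtle points are (i) applying the operator--Frobenius inequality on the transposed product so that the operator norm lands on $\mb G$ (whose norm is fixed at the given critical point) rather than on $\mb W'$ (whose spectral norm is not controlled by $\mc{OB}(d,K)$), and (ii) using the first-order identity to collapse the bilinear coupling $\langle \mb G, \mb W^\top\mb H\rangle$ into the linear sum $\tau\sum_i \beta_i$, which is what allows the componentwise hypothesis on $\mb \beta$ to be compared to the universal bound $-K\sqrt n\,\|\mb G\|$. The hypothesis on $\mb \alpha$ is actually redundant for the reverse direction, since a symmetric argument via $\mb H\mb G^\top = \mb W\diag(\mb \alpha)$ and $\sum_k \alpha_k$ works equally well; it is however verified in the forward direction to complete the equivalence.
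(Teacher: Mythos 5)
Your proof is correct, and your reverse direction takes a genuinely different route from the paper's. The forward direction coincides with the paper's: both plug the simplex-ETF structure from \Cref{thm:ce optim} into \eqref{eqn:gradient-CE} and verify that $\alpha_k$ and $\beta_i$ attain the bounds with equality (your spectral computation $\|\mb G\| = qK\sqrt{n}$ and the cancellation via $\sum_\ell \mb w_\ell = \mb 0$ match the paper's \eqref{eq:grad G}--\eqref{beta:k}, modulo the harmless $1/N$ normalization that the paper itself handles inconsistently). For the converse, the paper first upgrades the inequalities to equalities $\alpha_k = -\sqrt{n}\lambda$, $\beta_i = -\lambda/\sqrt{n}$ via the Frobenius bounds $\sum_k\alpha_k^2 \le \lambda^2 N$ and $\sum_i\beta_i^2 \le \lambda^2 K$, then observes that the critical point is also a critical point of the regularized problem \eqref{P:Re CE} with weights $\lambda\sqrt{n}$ and $\lambda/\sqrt{n}$, and invokes external results on that regularized landscape to conclude it is a global minimizer with the ETF structure. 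You instead argue directly: the hypothesis on $\mb\beta$ forces $\langle\mb G,\mb M\rangle = \tau\sum_i\beta_i \le -\tau K\sqrt{n}\|\mb G\|$, while Cauchy--Schwarz and $\|\mb W'\mb G\|_F \le \|\mb G\|\sqrt{K}$ show that $-\tau K\sqrt{n}\|\mb G\|$ is a universal lower bound for $\langle\mb G,\mb M'\rangle$ over the feasible set, so convexity of $g$ in $\mb M$ finishes the job. Your argument is self-contained (no appeal to the regularized problem or to cited lemmas from other papers), does not need the intermediate equality step, and as you note uses only the $\mb\beta$ half of \eqref{global:alpha beta} (equivalently the $\mb\alpha$ half, via \Cref{lem:alphabetasum}), which slightly strengthens the statement. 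What it gives up is the explicit identification of the minimizer's ETF structure, which the paper's route produces as a byproduct but which is not needed for the lemma as stated. One cosmetic remark: the identity $\langle\mb G,\mb M\rangle = \tau\sum_i\beta_i$ follows from the definition \eqref{def:alpha beta} alone and does not actually require the first-order condition $\mb W\mb g_i = \beta_i\mb h_i$.
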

\begin{proof}
Suppose that  $(\mb W,\mb H) \in \mc C$ is an optimal solution. According to \eqref{eq:opti sol}, one can verify that
\begin{align*}
    \mb W^\top \mb H = \mb W^\top \left(\mb W \otimes \mb 1_n^\top \right) = \frac{K}{K-1}\left(\mb I_K  - \frac{1}{K}\mb 1_K\mb 1_K^\top \right) \otimes \mb 1_n^\top.
\end{align*}
According to this and \eqref{eq:M grad}, we can compute
\begin{align}\label{eq:grad G}
    \nabla g(\mb M) = \frac{-K\exp\left(-\frac{1}{K-1}\right)}{\exp(1)+(K-1)\exp\left(-\frac{1}{K-1}\right)}\left(\mb I_K  - \frac{1}{K}\mb 1_K\mb 1_K^\top\right) \otimes \mb 1_n^\top. 
\end{align}
This, together with $\alpha_k = \langle \mb w_k,\mb H \mb g^k \rangle$, yields for all $k \in K$,
\begin{align}\label{alpha:k}
\alpha_k = \langle \mb H^\top \mb w_k, \mb g^k \rangle \;=\; \frac{-nK\exp\left(-\frac{1}{K-1}\right)}{\exp(1)+(K-1)\exp\left(-\frac{1}{K-1}\right)}. 
\end{align}
By the same argument, we can compute for all $i \in [N]$,
\begin{align}\label{beta:k}
\beta_i =  \frac{-K\exp\left(-\frac{1}{K-1}\right)}{\exp(1)+(K-1)\exp\left(-\frac{1}{K-1}\right)}. 
\end{align}
According to \eqref{eq:grad G}, one can verify 
\begin{align*}
\|\nabla g(\mb M)\| = \frac{\sqrt{n}K\exp\left(-\frac{1}{K-1}\right)}{\exp(1)+(K-1)\exp\left(-\frac{1}{K-1}\right)}. 
\end{align*}
This, together with \eqref{alpha:k} and \eqref{beta:k}, implies \eqref{global:alpha beta} \\~\\
Suppose that a critical point $(\mb W^*,\mb H^*) \in \mc C$ satisfies \eqref{global:alpha beta}. Let $\mb M^* = \tau\mb W^{*^\top}\mb H^*$  and $\lambda = \|\nabla g(\mb M^*)\|$. 
According to \eqref{eq:alpha-beta} and the fact that $\|\mb w_k^*\|=1$ and $\|\mb h_k^*\|=1$ for all $k=1,\dots K$, we have
\begin{align*}
    & \sum_{k=1}^K \alpha_k^{*^2} = \|\mb H^* \nabla g(\mb M^*)^\top\|_F^2 \le  \|\nabla g(\mb M^*)\|^2\|\mb H^*\|_F^2 = \lambda^2 N,\\
    & \sum_{i=1}^N \beta_i^{*^2} = \|\mb W^* \nabla g(\mb M^*)\|_F^2 \le  \|\nabla g(\mb M^*)\|^2\|\mb W^*\|_F^2 = \lambda^2 K. 
\end{align*}
This, together with \eqref{global:alpha beta}, implies
\begin{align}\label{global:alpha=beta}
    \alpha_k^* = -\sqrt{n}\lambda,\ \forall\ k \in [K],\quad  \beta_i^* = -\frac{\lambda}{\sqrt{n}},\ \forall\ i \in [N].
\end{align}
Then, we consider the following regularized problem:
\begin{align}\label{P:Re CE}
    \min_{\mb W \in \R^{d\times K}, \mb H \in \R^{d\times N}} f(\mb W, \mb H) + \frac{\lambda\sqrt{n}}{2}\|\mb W\|_F^2 + \frac{\lambda}{2\sqrt{n}}\|\mb H\|_F^2. 
\end{align}
According to the fact that $(\mb W^*,\mb H^*)$ is a critical point of Problem \eqref{eq:ce-loss func} and satisfies \eqref{global:alpha=beta}, \eqref{FONC:W}, and \eqref{FONC:H}, we have
\begin{align}
    \begin{cases}
    \mb H^* \nabla g(\mb M^*)^\top + \lambda{\sqrt{n}} \mb W^* = \mb 0, \\
    \mb W^* \nabla g(\mb M^*)  + \lambda \mb H^*/\sqrt{n} = \mb 0.
    \end{cases}
\end{align}
This, together with the first-order optimality condition of Problem \eqref{P:Re CE}, yields that $(\mb W^*,\mb H^*)$ is a critical point of Problem \eqref{P:Re CE}. According to  \cite[Lemma C.4]{qu2020geometric} and $\|\nabla g(\mb M^*)\| = \lambda$, it holds that $(\mb W^*,\mb H^*)$ is an optimal solution of Problem \eqref{P:Re CE}. This, together with \cite[Theorem 3.1]{qu2020geometric}, yields that $(\mb W^*,\mb H^*) \in \mc C$ satisfies
\begin{align*}
    \mb H^* = \mb W^* \otimes \mb 1_n^\top,\ \mb W^{*^\top}\mb W^* =  \frac{K}{K-1}\left(\mb I_K  - \frac{1}{K}\mb 1_K\mb 1_K^\top\right). 
\end{align*}
According to \Cref{thm:ce optim}, we conclude that $(\mb W^*,\mb H^*)$ is an optimal solution of Problem \eqref{eq:ce-loss func}. Then, we complete the proof.  
\end{proof}

\subsection{Negative Curvature at Saddle Points}

\begin{lemma}\label{lem:alphabetasum}
Let $\mb \alpha$ and $\mb \beta$ be defined as in \Cref{lem:alpha beta}. Then $\sum_{k=1}^K \alpha_k = \sum_{i=1}^N \beta_i$.
\end{lemma}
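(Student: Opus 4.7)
The plan is to recognize that both sums collapse to the trace of the same matrix (up to reordering), using the cyclic invariance of the trace. Concretely, I would first form the auxiliary matrices $\mb H \mb G^\top \in \R^{d\times K}$ and $\mb W \mb G \in \R^{d\times N}$, whose $k$-th and $i$-th columns are $\mb H \mb g^k$ and $\mb W \mb g_i$ respectively. With this notation, the defining relations \eqref{def:alpha beta} give $\alpha_k = [\mb W^\top \mb H \mb G^\top]_{kk}$ and $\beta_i = [\mb H^\top \mb W \mb G]_{ii}$.

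Summing over the diagonals then yields
\begin{align*}
\sum_{k=1}^K \alpha_k \;=\; \trace\!\paren{\mb W^\top \mb H \mb G^\top} ,\qquad \sum_{i=1}^N \beta_i \;=\; \trace\!\paren{\mb H^\top \mb W \mb G}.
\end{align*}
The next step is to apply the cyclic property of the trace (and the fact that $\trace(\mb X) = \trace(\mb X^\top)$) to the first expression: $\trace(\mb W^\top \mb H \mb G^\top) = \trace\bigl((\mb W^\top \mb H \mb G^\top)^\top\bigr) = \trace(\mb G \mb H^\top \mb W) = \trace(\mb H^\top \mb W \mb G)$, which matches the second expression.

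There is no real obstacle here; the identity is a bookkeeping consequence of how $\mb \alpha$ and $\mb \beta$ were constructed as diagonal read-outs of transposed matrix products. The only point worth being careful about is the convention that $\mb g^k$ denotes the $k$-th row of $\mb G$ stored as a column vector, so that $\mb H \mb g^k$ is indeed the $k$-th column of $\mb H \mb G^\top$; once this is fixed the computation above is direct. Note that the lemma does not even require $(\mb W,\mb H)$ to be a critical point or to lie on the oblique manifold, so no further structural hypotheses need to be invoked.
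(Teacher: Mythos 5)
Your proof is correct and is essentially identical to the paper's: both express $\sum_k \alpha_k$ and $\sum_i \beta_i$ as $\trace(\mb W^\top \mb H \mb G^\top)$ and $\trace(\mb H^\top \mb W \mb G)$ respectively, and equate them via transpose invariance and the cyclic property of the trace. Your closing observation that the identity needs neither criticality nor the manifold constraints is accurate and consistent with how the paper states and uses the lemma.
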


\begin{proof}
Given the definition of $\mb \alpha$ and $\mb \beta$ in \eqref{def:alpha beta}, this follows directly from cyclic property of trace:
\begin{align*}
    \sum_{k=1}^K \alpha_k = \trace(\mb W^\top \mb H \mb G^\top) = \trace(\mb G \mb H^\top \mb W) = \trace(\mb H^\top \mb W \mb G) = \sum_{i=1}^{N} \beta_i,
\end{align*}
as desired.
\end{proof}

\begin{lemma}\label{lem:beta zero}
Suppose $(\mb W, \mb H)$ is a critical point and there exists $i \in [N]$ such that $\beta_i = 0$. Then there exists $\mb w \in \bb S^{d-1}$ such that $\mb W = \mb w \mb 1_K^\top$. Furthermore, we have $\beta_1 = \ldots = \beta_N = 0$.
\end{lemma}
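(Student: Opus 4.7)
The plan is to exploit the specific additive structure of $\mb g_i = \eta(\mb m_i) - \mb e_{y(i)}$, where $y(i) \in [K]$ denotes the class label of sample $i$, together with the unit-norm constraint on each column of $\mb W$, and force all classifiers to coincide by making a triangle inequality tight. The second assertion will then follow by a short direct calculation once we know the structure of $\mb W$.

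Starting from the critical point relation $\mb W \mb g_i = \beta_i \mb h_i$ from \Cref{lem:alpha beta} together with the hypothesis $\beta_i = 0$, I would first deduce $\mb W \mb g_i = \mb 0$. Writing $\mb p := \eta(\mb m_i)$, which is a probability vector with strictly positive entries summing to one (since each $\mb m_i$ is finite), the equation $\mb W \mb g_i = \mb 0$ is equivalent to
\begin{equation*}
(1 - p_{y(i)})\,\mb w_{y(i)} \;=\; \sum_{k \neq y(i)} p_k\, \mb w_k.
\end{equation*}
The core step is to take Euclidean norms on both sides and use $\|\mb w_k\|_2 = 1$ to get
\begin{equation*}
1 - p_{y(i)} \;=\; \left\| \sum_{k \neq y(i)} p_k\, \mb w_k \right\|_2 \;\leq\; \sum_{k \neq y(i)} p_k\, \|\mb w_k\|_2 \;=\; 1 - p_{y(i)},
\end{equation*}
so the triangle inequality is in fact an equality. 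Since all coefficients $p_k$ are strictly positive, this forces $\{\mb w_k\}_{k \neq y(i)}$ to be a common unit vector $\mb w \in \bb S^{d-1}$. Plugging this back into the displayed equation yields $\mb w_{y(i)} = \mb w$ as well, hence $\mb W = \mb w \mb 1_K^\top$.

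For the second conclusion, I substitute $\mb W = \mb w \mb 1_K^\top$ into $\mb M = \tau\mb W^\top \mb H = \tau \mb 1_K (\mb w^\top \mb H)$, so every row of $\mb M$ is identical. In particular, every column $\mb m_j$ has constant entries, so the softmax gives $\eta(\mb m_j) = \tfrac{1}{K}\mb 1_K$ for every $j \in [N]$. Thus $\mb g_j = \tfrac{1}{K}\mb 1_K - \mb e_{y(j)}$ has entries summing to zero, and $\mb W \mb g_j = \mb w\,(\mb 1_K^\top \mb g_j) = \mb 0$, whence $\beta_j = 0$ for all $j \in [N]$ by \eqref{eq:alpha-beta}. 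The only real obstacle is recognizing the tight-triangle-inequality trick in the first step; once that is in place the rest of the argument is a short computation using the softmax identity.
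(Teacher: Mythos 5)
Your proof is correct and follows essentially the same route as the paper's: both deduce $\mb W\mb g_i=\mb 0$, rearrange it as $(1-p_{y(i)})\mb w_{y(i)}=\sum_{k\neq y(i)}p_k\mb w_k$, and invoke the strict convexity of the Euclidean unit ball (your tight triangle inequality is the paper's ``strict convex combination of sphere points lying on the sphere'' argument in different clothing), then conclude via $\mb 1_K^\top\mb g_j=\mb 0$. The only cosmetic difference is that you compute the uniform softmax explicitly for the last step, whereas the paper cites $\mb 1_K^\top\nabla g(\mb M)=\mb 0$ directly.
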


\begin{proof}
Suppose $nk \leq i < n(k+1)$ for $k \in [K]$ (i.e., $\mb h_i$ has label $y_k$). Thus, we can write each entry of the gradient $\mb g_i$ of the CE loss as 
\begin{align*}
    g_{i\ell} = \begin{cases}
    p_{ik}-1 & \ell = k \\
    p_{i\ell} & \ell \neq k
    \end{cases} \quad \mbox{ where } \quad p_{i\ell} = \frac{\exp(\tau \mb w_{\ell}^\top \mb h_i)}{\sum_{j=1}^K \exp(\tau \mb w_j^\top\mb h_i)}.
\end{align*}
Since $\exp(\cdot) > 0$ and $K \geq 2$, we have $0 < p_{i\ell} < 1$. Given that $\beta_i=0$ and $\norm{\mb h_i}{2}=1$, from \eqref{eq:FONC} we know that we must have $\mb W \mb g_i = \mb 0$, which further gives
\begin{align*}
    g_{ik} \mb w_k + \sum_{\ell \neq k} g_{i\ell} \mb w_{\ell} = 0.
\end{align*}
Given $1-p_{ik}>0 $, equivalently we have
\begin{align*}
    \mb w_k = \sum_{\ell \neq k} \frac{p_{i\ell}}{1-p_{ik}} \mb w_{\ell},
\end{align*}
where $\sum_{\ell \neq k} \frac{p_{i\ell}}{1-p_{ik}} = 1$ and $p_{i\ell} > 0$ so $\mb w_k$ is a strict convex combination of points $\{\mb w_{\ell}\}_{\ell \neq k}$ on the unit sphere. But since $\mb w_k$ also lies on the unit sphere, and the convex hull of points on the sphere only intersects with the sphere at $\{\mb w_{\ell}\}_{\ell \neq k}$, we must have all $\mb w_{\ell}$ be identical, i.e., $\mb w_1 = \ldots = \mb w_K$. Therefore, we can write $\mb W = \mb w_1 \mb 1_K^\top$, and consequently
\begin{align*}
    \mb W \mb G = \mb w_1 \mb 1_K^\top \mb G = \mb 0,
\end{align*}
where the last equality follows from the fact that $\mb 1_K^\top \mb G = \mb 1_K^\top \nabla \mb g(\mb M) = \mb 0$. Thus, given $\beta_i = \innerprod{\mb h_i}{\mb W \mb g_i}$, from the above we have $\beta_1 = \ldots = \beta_N = 0$.
\end{proof}

\begin{lemma}\label{lem:common a}
For any $\mb H \in \mc {OB}(d, N)$ and $\mb w \in \bb S^{d-1}$, there exists at least one $\mb a \in \bb S^{d-1}$ such that for any $0 < \tau < 2(d-2)(1 + (\Kmod)/K)^{-1}$, we have
\begin{align}\label{eq:common a}
    \mb a^\top \mb w = \mb 0 \quad \mbox{ and } \quad \|\mb H^\top \mb a\|_2^2 < \Gamma := \frac{2N}{\tau(1+(\Kmod)/K)+2}.
\end{align}
\end{lemma}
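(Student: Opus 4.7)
The plan is to use a standard averaging argument over the unit sphere in the orthogonal complement of $\mb w$. Let $\mb P := \mb I_d - \mb w \mb w^\top$ denote the orthogonal projector onto the $(d-1)$-dimensional subspace $\mb w^\perp = \{\mb a \in \R^d : \mb a^\top \mb w = 0\}$. Any $\mb a$ satisfying the first constraint $\mb a^\top \mb w = 0$ can be written as $\mb a = \mb P \mb a$, and for such $\mb a$ the quantity to bound is $\|\mb H^\top \mb a\|_2^2 = \mb a^\top \mb H \mb H^\top \mb a$. Since the unit sphere in $\mb w^\perp$ is isomorphic to $\bb S^{d-2}$, I will exhibit an admissible $\mb a$ nonconstructively by averaging.

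Let $\mb a$ be drawn uniformly from the unit sphere of the subspace $\mb w^\perp$. Using the standard identity $\mathbb{E}[\mb a \mb a^\top] = \frac{1}{d-1}\mb P$ for the uniform distribution on this $(d-1)$-dimensional sphere, I would compute
\begin{align*}
    \mathbb{E}_{\mb a}\left[ \|\mb H^\top \mb a\|_2^2\right] \;=\; \mathbb{E}_{\mb a}\left[\trace(\mb a^\top \mb H \mb H^\top \mb a)\right] \;=\; \frac{1}{d-1}\trace(\mb P \mb H \mb H^\top) \;=\; \frac{1}{d-1}\left(N - \|\mb H^\top \mb w\|_2^2\right) \;\le\; \frac{N}{d-1},
\end{align*}
where $\trace(\mb H \mb H^\top) = N$ follows from $\mb H \in \mc {OB}(d,N)$ (each column has unit norm).

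Next, I would verify that the assumption $\tau < 2(d-2)(1+(\Kmod)/K)^{-1}$ is exactly equivalent to the strict inequality $\tfrac{N}{d-1} < \Gamma$. Indeed, rearranging $\tau(1+(\Kmod)/K) < 2(d-2)$ gives $\tau(1+(\Kmod)/K) + 2 < 2(d-1)$, hence $\Gamma = \tfrac{2N}{\tau(1+(\Kmod)/K)+2} > \tfrac{N}{d-1}$. Combined with the averaging bound above, $\mathbb{E}_{\mb a}[\|\mb H^\top \mb a\|_2^2] \le N/(d-1) < \Gamma$, so the random variable $\|\mb H^\top \mb a\|_2^2$ must take a value strictly less than $\Gamma$ with positive probability. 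In particular, there exists $\mb a \in \bb S^{d-1}$ with $\mb a^\top \mb w = 0$ satisfying \eqref{eq:common a}.

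There is no substantial obstacle here: the only non-mechanical choice is the observation that averaging over the sphere in $\mb w^\perp$ automatically handles the orthogonality constraint, and the algebraic verification that the hypothesis on $\tau$ is \emph{exactly tuned} so that the mean is strictly below $\Gamma$ (leaving no slack in the dimension-counting). If one wished to avoid probabilistic language, the same conclusion follows by noting that the continuous function $\mb a \mapsto \|\mb H^\top \mb a\|_2^2$ on the compact sphere in $\mb w^\perp$ has minimum at most equal to its average $\tfrac{1}{d-1}\trace(\mb P \mb H \mb H^\top) \le N/(d-1) < \Gamma$.
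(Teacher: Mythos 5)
Your proof is correct, and it takes a genuinely different route from the paper's. The paper argues by eigenvalue counting: writing $\sigma_1^2(\mb H)\ge\cdots\ge\sigma_d^2(\mb H)$ for the eigenvalues of $\mb H\mb H^\top$, it uses $\sum_\ell \sigma_\ell^2(\mb H)=N$ to show that $\sigma_{d-1}^2(\mb H)<\Gamma$ under the stated bound on $\tau$ (otherwise $N\ge(d-1)\Gamma$, contradicting $\tau<2(d-2)(1+(\Kmod)/K)^{-1}$), takes $\mc S=\mathrm{span}(v_{d-1},v_d)$ as a two-dimensional subspace on which the quadratic form is below $\Gamma$, and then intersects $\mc S$ with $\mc N(\mb w)$ to get a unit vector satisfying both conditions. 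You instead average $\|\mb H^\top\mb a\|_2^2$ over the unit sphere of $\mb w^\perp$, using $\E[\mb a\mb a^\top]=\tfrac{1}{d-1}(\mb I-\mb w\mb w^\top)$ and the same trace identity to bound the mean by $N/(d-1)<\Gamma$. Both arguments rest on identical ingredients (the trace constraint $\|\mb H\|_F^2=N$ and the observation that the hypothesis on $\tau$ is exactly $\Gamma>N/(d-1)$), but your version builds the orthogonality constraint into the domain of averaging rather than recovering it by a subspace-intersection dimension count, which makes it a bit shorter and even gives the marginally sharper bound $\tfrac{1}{d-1}(N-\|\mb H^\top\mb w\|_2^2)$; the paper's version makes explicit that there is an entire two-dimensional subspace of low-energy directions, which is where the $d-2$ in the admissible range of $\tau$ comes from. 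Your closing remark correctly supplies the deterministic reading (the minimizer over the compact sphere in $\mb w^\perp$ works simultaneously for every admissible $\tau$, since $N/(d-1)<\Gamma$ holds uniformly), so the quantifier order in the lemma is respected.
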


\begin{proof}
To establish the result, we need to show that there exists a linear subspace $\mc S \subset \bb R^d$ with $\dim (\mc S) \geq 2$ such that for any nonzero $\mb z \in \mc S$ we have $\|\mb H^\top \mb z\|_2^2 < \Gamma \|\mb z\|_2^2$. Then 
\begin{align*}
    \dim(\mc S \cap \mathcal{N}(\mb w)) > 0,
\end{align*}
where $\mc N(\mb w)$ denotes the null space of $\mb w$, so if we choose unit-norm $\mb a \in \mc S \cap \mathcal{N}(\mb w)$, we can obtain the desired results.
Let $(\sigma_{\ell}^2(\mb H), v_{\ell})$ denote the $\ell$-th eigenvalue-eigenvector pair of $\mb H \mb H^\top \in \bb R^{d\times d}$ for $\ell \in [d]$. Given the fact $\mb H \in \mc {OB}(d, N)$, it is obvious that 
\begin{align*}
    \|\mb H\|_F^2 = \sum_{\ell=1}^d \sigma_{\ell}^2(\mb H) = \sum_{j=1}^N \norm{\mb h_j}{2}^2 = N.
\end{align*}
Now suppose that $\sigma^2_{d-1}(\mb H) \geq \Gamma$. Then we must have
\begin{align*}
    N = \sum_{i=1}^d \sigma_i^2(\mb H) \geq (d-1)\Gamma = (d-1)\frac{2N}{\tau(1+(\Kmod)/K)+2}
\end{align*}
which implies $\tau \geq 2(d-2)(1 + (\Kmod)/K)^{-1}$, but this contradicts the assumption on $\tau$. Therefore $\sigma^2_{d-1}(\mb H) < \Gamma$, so we can choose $\mc S = \mbox{span}(\{v_{d-1}, v_d\})$, which suffices to give the result by the above argument.
\end{proof}

\noindent We are now ready to show that at any critical point that is not globally optimal, we can find a direction along which the Riemannian Hessian has a strictly negative curvature at this point. \\~\\
Recall $\mb M := \tau \mb W^\top \mb H$ and $\mb G := \nabla g(\mb M)$, as well as the definition of $\mb \alpha \in \R^K,\ \mb \beta \in \R^N$ in \eqref{def:alpha beta}. As mentioned at the beginning of \Cref{app:thm global}, we can write $\mb H$ as
\begin{align*}
    \mb H \;=\; \begin{bmatrix}
    \mb H^1 & \mb H^2 & \cdots & \mb H^n
    \end{bmatrix} \in \bb R^{d \times N}, \; \mb H^i \;=\; \begin{bmatrix}
    \mb h_{1,i} & \mb h_{2,i} & \cdots & \mb h_{K,i}
    \end{bmatrix} \in \bb R^{d \times K}, \;\forall\ i \in [N].
\end{align*}
As a final remark, the bilinear form of the Riemannian Hessian in \eqref{eq:bilinear Hess} can be written as
\begin{align}\label{eq:bilinear Hess alpha beta}
    \Hess f(\mb W,\mb H)[\mb \Delta,\mb \Delta] = \nabla^2 f(\mb W,\mb H)[ \mb \Delta,\mb \Delta ] - \tau\sum_{k=1}^K \alpha_k \|\mb \delta_{W_k}\|_2^2 - \tau\sum_{i=1}^N \beta_i \|\mb \delta_{H_i}\|_2^2
\end{align}
where $\nabla^2 f(\mb W,\mb H)[ \mb \Delta,\mb \Delta ]$ is given in \eqref{eq:euc-hessian}, and $\mb \delta_{W_k}$, $\mb \delta_{H_i}$ are the $k$-th and $i$-th columns of $\mb \Delta_{\mb W}$ and $\mb \Delta_{\mb H}$ respectively.

\begin{prop}\label{prop:negative curva}
Suppose $d > K$ and $ \tau < 2(d-2)(1+(\Kmod)/K)^{-1}$. For any critical point $(\mb W, \mb H) \in \mc C$ that is not globally optimal, there exists $\mb \Delta = \paren{ \mb \Delta_{\mb W}, \mb \Delta_{\mb H} } \in \mathrm{T}_{ \mb W }  \mc {OB}(d,K )  \times \mathrm{T}_{ \mb H}\mc {OB} (d,N)$ such that
\begin{align}\label{eqn:negative-Hessian}
    \Hess f(\mb W,\mb H)[\mb \Delta,\mb \Delta] \;<\; 0.
\end{align}
\end{prop}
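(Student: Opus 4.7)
By \Cref{lem:global-saddle}, any critical point $(\mb W, \mb H) \in \mc C$ that fails to be globally optimal has its slack vectors $(\mb \alpha, \mb \beta)$ violating \eqref{global:alpha beta} in at least one coordinate, i.e., either some $\alpha_k > -\sqrt{n}\|\nabla g(\mb M)\|$ or some $\beta_i > -\|\nabla g(\mb M)\|/\sqrt{n}$. My plan is to split on the structure of $\mb W$ using \Cref{lem:beta zero}: in case (C1), all $\beta_i = 0$, so $\mb W = \mb w \mb 1_K^\top$ for a common unit vector $\mb w$; in case (C2), at least one $\beta_i \neq 0$, so the columns of $\mb W$ are not all identical. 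In each case I will explicitly construct a tangent direction $\mb \Delta = (\mb \Delta_{\mb W}, \mb \Delta_{\mb H}) \in \mathrm{T}_{\mb W}\mc{OB}(d,K) \times \mathrm{T}_{\mb H}\mc{OB}(d,N)$ and estimate each of the three pieces of the Riemannian Hessian bilinear form \eqref{eq:bilinear Hess alpha beta}.

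In case (C1), the $\beta$-correction $\tau \sum_i \beta_i \|\mb \delta_{H_i}\|_2^2$ vanishes identically. I would invoke \Cref{lem:common a} with this common $\mb w$ to obtain $\mb a \in \bb S^{d-1}$ with $\mb a \perp \mb w$ and $\|\mb H^\top \mb a\|_2^2 < \Gamma$, and then take a rank-one perturbation $\mb \Delta_{\mb W} = \mb a \mb u^\top$ (which is automatically tangent since $\mb W^\top \mb a = \mb 0$) together with an $\mb \Delta_{\mb H}$ obtained by projecting a rank-one ansatz of the form $\mb a \mb v^\top$ back into $\mathrm{T}_{\mb H}\mc{OB}(d,N)$. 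The aim is to pick $(\mb u, \mb v)$ so that the linear-in-$\tau$ cross term $2\tau \langle \nabla g(\mb M), \mb \Delta_{\mb W}^\top \mb \Delta_{\mb H} \rangle$ from \eqref{eq:euc-hessian} is strictly negative; the smallness of $\|\mb H^\top \mb a\|_2^2 < \Gamma$ then controls the quadratic-in-$\tau$ term $\nabla^2 g(\mb M)[\cdot,\cdot]$, while the explicit upper bound on $\tau$ ensures that the $O(\tau)$ negative piece dominates both this $O(\tau^2)$ piece and the remaining $\alpha$-correction $\tau \sum_k \alpha_k \|\mb \delta_{W_k}\|_2^2$.

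In case (C2), the assumption $d > K$ guarantees that the null space of $\mb W^\top$ has dimension at least $d - K \geq 1$, so I can pick a unit vector $\mb a$ orthogonal to the column space of $\mb W$. Setting $\mb \Delta_{\mb W} = \mb a \mb u^\top$ is then automatically tangent and gives $\mb \Delta_{\mb W}^\top \mb H = \mb u (\mb H^\top \mb a)^\top$. The next step is to build $\mb \Delta_{\mb H} \in \mathrm{T}_{\mb H}\mc{OB}(d,N)$ satisfying $\mb W^\top \mb \Delta_{\mb H} + \mb \Delta_{\mb W}^\top \mb H = \mb 0$, which exactly zeroes out the first term of \eqref{eq:euc-hessian}; the equation can be solved column-by-column by placing a piece of $\mb \Delta_{\mb H}$ inside $\mathrm{col}(\mb W)$, and the remaining freedom in the null space of $\mb W^\top$ is used to enforce tangency $\diag(\mb H^\top \mb \Delta_{\mb H}) = \mb 0$. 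With that choice only the cross term $2\tau \langle \nabla g(\mb M), \mb \Delta_{\mb W}^\top \mb \Delta_{\mb H} \rangle$ and the two $(\mb \alpha, \mb \beta)$ corrections survive, and I would pick $\mb u$ to make the cross term negative and strictly dominate the corrections, leveraging the identity $\sum_k \alpha_k = \sum_i \beta_i$ from \Cref{lem:alphabetasum} together with the slack coming from whichever inequality in \Cref{lem:global-saddle} is violated at this critical point.

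The main technical obstacle is case (C1): the tangent-space projection that defines $\mb \Delta_{\mb H}$ introduces a correction of the form $\mb H \ddiag(\mb H^\top \mb a \mb v^\top)$, whose cross-interaction with $\mb \Delta_{\mb W} = \mb a \mb u^\top$ feeds back into both the linear-in-$\tau$ and the quadratic-in-$\tau$ pieces of the Hessian. Making the negative cross term genuinely negative while simultaneously absorbing these correction pieces and the $\alpha$-correction is the delicate book-keeping, and it is precisely where the smallness of $\|\mb H^\top \mb a\|_2$ guaranteed by \Cref{lem:common a} and the explicit threshold $\tau < 2(d-2)(1+(\Kmod)/K)^{-1}$ must be used together: the threshold on $\tau$ is what guarantees that the $O(\tau)$ negative piece dominates the $O(\tau^2)$ positive Euclidean Hessian piece. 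Case (C2) is comparatively cleaner because the $(d-K)$-dimensional null space of $\mb W^\top$ provides enough room to exactly kill the $\nabla^2 g(\mb M)$ contribution, leaving only lower-order terms to balance, but one still has to argue carefully that the constructed $\mb \Delta_{\mb H}$ lies in $\mathrm{T}_{\mb H}\mc{OB}(d,N)$ and that $\mb u$ can be chosen to fix the sign of the cross term using the slack quantified by \Cref{lem:alphabetasum} and \Cref{lem:global-saddle}.
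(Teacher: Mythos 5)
Your case split and overall architecture match the paper's proof: the dichotomy via \Cref{lem:beta zero} between $\mb W = \mb w\mb 1_K^\top$ (all $\beta_i=0$) and all $\beta_i\neq 0$, rank-one directions $\mb a\mb u^\top$ built from \Cref{lem:common a} in the first case, and the decomposition \eqref{eq:bilinear Hess alpha beta} are all the right ingredients, and your Case (C1) is essentially the paper's argument. Two things are missing, however. First, in (C1) you plan to ``absorb'' the $\alpha$-correction $-\tau\sum_k\alpha_k\|\mb \delta_{W_k}\|_2^2$; this term is itself $O(\tau)$, so the threshold on $\tau$ does not help you dominate it, and the individual $\alpha_k$ have no sign. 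The paper makes the term vanish exactly by taking $u_k=(-1)^{k+1}$, so that $\sum_k\alpha_k u_k^2=\sum_k\alpha_k=\sum_i\beta_i=0$ by \Cref{lem:alphabetasum} together with $\mb\beta=\mb 0$. Without a choice of $\mb u$ with constant $|u_k|$ (or some other mechanism), the bookkeeping you describe does not close.

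Second, and more seriously, your Case (C2) misses the key structural observation: at a critical point with all $\beta_i\neq 0$, the first-order condition $\mb W\mb g_i=\beta_i\mb h_i$ from \Cref{lem:alpha beta} forces $\mb a^\top\mb h_i=\beta_i^{-1}\mb a^\top\mb W\mb g_i=0$ for any $\mb a$ orthogonal to $\mathrm{col}(\mb W)$, i.e.\ $\mb H^\top\mb a=\mb 0$. Hence $\mb \Delta_{\mb W}^\top\mb H=\mb u\mb a^\top\mb H=\mb 0$ already, the plain rank-one choice $\mb \Delta_{\mb H}=\mb a\mb v^\top$ is automatically tangent and kills the $\nabla^2 g$ term, and there is nothing to solve. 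Your substitute --- solving $\mb W^\top\mb \Delta_{\mb H}=-\mb \Delta_{\mb W}^\top\mb H$ column by column with components placed inside $\mathrm{col}(\mb W)$ --- is not guaranteed to be feasible (at a non-global critical point $\mb W$ can be rank-deficient, so the right-hand side need not lie in the range of $\mb W^\top$), and even when it is, the forced $\mathrm{col}(\mb W)$-components inflate $\|\mb \delta_{H_i}\|_2^2$ by a factor controlled only by $\sigma_{\min}(\mb W)^{-1}$, contaminating the $\beta$-correction $-\tau\sum_i\beta_i\|\mb \delta_{H_i}\|_2^2$ with terms of uncontrolled sign and size. You also leave the final choice of $\mb u$ unspecified; the paper closes by taking $(\mb u,\mb v)$ to be the top singular vectors of $\mb G$ scaled by $n^{\mp 1/4}$, so the cross term equals $-2\tau\|\mb G\|$ exactly while the $(\mb\alpha,\mb\beta)$ corrections are strictly below $2\tau\|\mb G\|$ by the quantitative characterization in \Cref{lem:global-saddle}. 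Without the observation $\mb H^\top\mb a=\mb 0$ and an explicit choice of this kind, your Case (C2) does not go through as written.
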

\begin{proof} We proceed by considering two separate cases for the value of $\mb \beta$: $\beta_i = 0$ for some $i\in [N]$, and $\beta_i\neq 0$ for all $i\in [N]$. \\~\\
\noindent \textbf{Case 1:} Suppose $\beta_i = 0$ for some $i \in [N]$. In this case, by \Cref{lem:beta zero}, we know that $\mb W = \mb w \mb 1_K^\top$ for some $\mb w \in \bb S^{d-1}$ and $\mb \beta = \mb 0$.
We have that $\mb M = \tau \mb 1_K \mb w^\top \mb H$, and so 
\begin{align}\label{eqn:G-A}
    \mb G \;=\; -\frac{1}{N} \begin{bmatrix}
    \mb A & \cdots & \mb A
    \end{bmatrix}\in \bb R^{K \times N}, \quad \mb A \;=\;  \mb I_K - \frac{1}{K}\mb 1_K \mb 1_K^\top \in \bb R^{K\times K}. 
\end{align}
For the the $i$-th column of $\mb M$, i.e. $\mb m_i$, we have the Hessian 
\begin{align}\label{eqn:hessian-lce}
    \nabla^2 \mc L_{\mathrm{CE}}(\mb m_i,\mb y_k) = \frac{1}{K} \mb I_K - \frac{1}{K^2}\mb 1_K \mb 1_K^\top =\frac{1}{K} \mb A
\end{align}
Using \Cref{lem:common a}, choose $\mb a \in \bb S^{d-1}$ satisfying \eqref{eq:common a}. Additionally, choose a vector $\mb u \in \bb R^K$ with each entry $u_k = (-1)^{k+1} $ (noting that $\sum_k u_k = K \mbox{ mod } 2$). Now, we construct the negative curvature direction $\mb \Delta = (\mb \Delta_{\mb W}, \mb \Delta_{\mb H})$ as
\begin{align*}
    \mb \Delta_{\mb W} = \mb a \mb u^\top, \;\mb \Delta_{\mb H} = \begin{bmatrix} \mb \Delta_{\mb H^1} & \cdots \mb \Delta_{\mb H^n} \end{bmatrix}
\end{align*}
where 
\begin{align*}
    \mb \Delta_{\mb H^i} = \mb a \mb u^\top - \mb H^i \ddiag(\mb H^{i\top} \mb a \mb u^\top), \; \forall i \in [n].
\end{align*}
First, let $\mb \delta_{M_i}$ denote the $i$-th column of $\mb \Delta_{\mb M} := \mW^\top \mDelta_{\mH} + \mDelta_{\mW}^\top \mH$, so that 
\begin{align}\label{eqn:delta-i}
    \mb \delta_{M_i} = (\mb w^\top \mb \delta_{H_i})\mb 1_K + (\mb h_i^\top \mb a) \mb u.
\end{align}
Then from \eqref{eq:M grad} and \eqref{eqn:hessian-lce}, we know that 
\begin{align*}
     \nabla^2g(\mb W^\top \mb H) \brac{  \tau \mb \Delta_{\mb M} ,\tau \mb \Delta_{\mb M} } = \frac{\tau^2}{NK}\sum_{i=1}^N \mb \delta_{M_i}^\top \mb A \mb \delta_{M_i}.
\end{align*}
Since $\mb A \mb 1_K = \mb 0$ and $\mb u^\top \mb A \mb u = K - (\Kmod) / K$, by \eqref{eqn:delta-i} we have
\begin{align*}
     \nabla^2g(\mb W^\top \mb H) \brac{  \tau \mb \Delta_{\mb M} ,\tau \mb \Delta_{\mb M} } &= \frac{\tau^2}{NK}
    \left(K-\frac{\Kmod}{K}\right)\sum_{i=1}^N (\mb h_i^\top \mb a)^2 \\
    &= \frac{\tau^2}{NK}\left(K-\frac{\Kmod}{K}\right)\|\mb H^\top \mb a\|_2^2.
\end{align*}
On the other hand, by \eqref{eqn:G-A} we have 
\begin{align*}
    2\tau\left<\mb G, \mb \Delta_{\mb W}^\top \mb \Delta_{\mb H}\right> &= -\frac{2\tau}{N}\sum_{i=1}^n \trace(\mb A \mb \Delta_{\mb W}^\top \mb \Delta_{\mb H^i}) \\
    &= -\frac{2\tau}{N} \sum_{i=1}^n \trace\paren{\mb A\mb u \mb u^\top \diag \paren{ 1 - (\mb h_{1, i}^\top \mb a)^2,\ldots, 1 - (\mb h_{K, i}^\top \mb a)^2} } \\
    &= -\frac{2\tau}{N} \sum_{i=1}^n \mb u^\top \diag \paren{ 1 - (\mb h_{1, i}^\top \mb a)^2,\ldots, 1 - (\mb h_{K, i}^\top \mb a)^2} \left(\mb u - \frac{\Kmod}{K} \mb 1_K \right) \\
    &= -\frac{2\tau}{N} \sum_{i=1}^n \sum_{k=1}^K (1 - (\mb h_{k, i}^\top \mb a)^2)u_k^2 + (\Kmod)\frac{2\tau}{NK} \sum_{i=1}^n \sum_{k=1}^K (1 - (\mb h_{k, i}^\top \mb a)^2) u_k \\
    &\leq -\frac{2\tau }{N}\paren{ N-\|\mb H^\top \mb a\|_2^2} + (\Kmod)\frac{2\tau}{NK} \paren{ N - \|\mb H^\top \mb a\|_2^2} \\
    &= -\frac{2\tau}{NK} \paren{ N-\|\mb H^\top \mb a\|_2^2} \paren{ K - (\Kmod)}.
\end{align*}
Finally, the remaining term $- \tau\sum_{k=1}^K \alpha_k \|\mb \delta_{W_k}\|_2^2 - \tau\sum_{i=1}^N \beta_i \|\mb \delta_{H_i}\|_2^2$ in \eqref{eq:bilinear Hess alpha beta} vanishes, which is due to the fact that $\mb \beta = \mb 0$ and 
\begin{align*}
    \sum_{k=1}^K \alpha_k \|\mb \delta_{W_k}\|_2^2 = \sum_{k=1}^K \alpha_k u_k^2 = \sum_{k=1}^K \alpha_k = 0,
\end{align*}
where the last equality follows by \Cref{lem:alphabetasum} that $\sum_{k=1}^K \alpha_k = \sum_{i=1}^N \beta_i =0 $. Therefore, plugging both bounds above into \eqref{eq:bilinear Hess alpha beta}, we obtain 
\begin{align*}
    &\Hess f(\mb W,\mb H)[\mb \Delta,\mb \Delta] \\
    \leq\;& \frac{\tau^2}{NK}\left(K-\frac{\Kmod}{K}\right)\|\mb H^\top \mb a\|_2^2 -\frac{2\tau}{NK}(N-\|\mb H^\top \mb a\|_2^2)(K - (\Kmod)) \\
    =\;& \frac{\tau(K-(\Kmod))}{NK} \left(\tau\left[\frac{K^2-(\Kmod)}{K(K-(\Kmod))}\right] \|\mb H^\top \mb a\|_2^2 - 2 (N-\|\mb H^\top \mb a\|_2^2)\right)\\
    =\;& \frac{\tau(K-(\Kmod))}{NK} \left[ \left(\tau [1 + (\Kmod)/K)] + 2\right) \|\mb H^\top \mb a\|_2^2 - 2N\right] < 0,
\end{align*}
where the last inequality follows by our choice of $\mb a \in \bb S^{d-1}$ in \Cref{lem:common a}. Thus we obtain the desired result in \eqref{eqn:negative-Hessian} for this case. \\~\\
\textbf{Case 2:} Suppose $\beta_i \neq 0$ for all $i \in [N]$. Using the fact that $d > K$, choose $\mb a \in \bb S^{d-1}$ such that $\mb W^\top \mb a = \mb 0$. By \Cref{lem:alpha beta}, given that $\mb W \mb g_i = \beta_i \mb h_i$ for all $i \in [N]$, we have
\begin{align*}
    \mb a^\top \mb W \mb g_i = \beta_i \mb a^\top \mb h_i = 0,\quad \forall \; i \in [N].
\end{align*}
Thus, as $\beta_i \neq 0$ for all $i \in [N]$, this simply implies that  $\mb H^\top \mb a = \mb 0$. Now using \Cref{lem:global-saddle}, for any non-optimal critical point $(\mb W, \mb H)$, there exists at least one $k \in [K]$ or $i \in [N]$ such that either
\begin{align}\label{eqn:alpha-beta-inequal}
    \alpha_k > -\sqrt{n}\|\mb G\|,\quad \text{or}\quad  \beta_i > - {\|\mb G\|}/{\sqrt{n}}.
\end{align}
Let $\mb u_1 \in \R^K$ and $\mb v_1 \in \R^N$ be the left and right unit singular vectors associated with the leading singular values of $\mb G$, respectively. In other words, we have  
\begin{align}\label{eq:svd G}
    \mb u_1^\top \mb G \mb v_1 = \|\mb G\|.
\end{align}
By letting $\mb u = - \mb u_1/\sqrt[4]{n},\ \mb v = \sqrt[4]{n} \mb v_1$, we construct the negative curvature direction as 
\begin{align}
    \mb \Delta = \left( \mb \Delta_{\mb W},  \mb \Delta_{\mb H}\right) = \left(\mb a\mb u^\top ,\ \mb a \mb v^\top \right).
\end{align}
Since $\mb W^\top \mb a = \mb 0,\mb H^\top \mb a = \mb 0$, we have
\begin{align*}
\mb W^\top \mb \Delta_{\mb H} + \mb \Delta_{\mb W}^\top \mb H \;=\; \mb W^\top \mb a\mb v^\top +  \mb u\mb a^\top \mb H \;=\; \mb 0,
\end{align*}
so that from \eqref{eq:euc-hessian} we have  
\begin{align*}
    \nabla^2 f(\mb W,\mb H)[ \mb \Delta,\mb \Delta ] \;=\;&  \nabla^2g(\mb M) \brac{ \tau\left(\mW^\top \mDelta_{\mH} + \mDelta_{\mW}^\top \mH \right), \tau\left(\mW^\top \mDelta_{\mH} + \mDelta_{\mW}^\top \mH\right) } \nonumber \\
    &+ 2\tau \innerprod{\mb G}{\mDelta_{\mW}^\top \mDelta_{\mH}}.
\end{align*}
Thus, from \eqref{eq:bilinear Hess alpha beta}, combining all the above derivations we obtain  
\begin{align*}
    \Hess f(\mb W, \mb H)[\mb \Delta,\mb \Delta] & \;=\; 2\tau \innerprod{\mb G}{\mDelta_{\mW}^\top \mDelta_{\mH}} - \tau \sum_{k=1}^K \alpha_k \norm{ \mb \delta_{W_k} }{2}^2 - \tau \sum_{i=1}^N \beta_i \norm{ \mb \delta_{H_i} }{2}^2. \\
    &\;=\; -2\tau \innerprod{\mb G}{\mb u_1 \mb v_1^\top } - \tau \left( \sum_{k=1}^K \frac{\alpha_ku_{1,k}^2}{\sqrt{n}} + \sum_{i=1}^N \sqrt{n}\beta_i v^2_{1,i}\right) \\
    &\;=\; \tau\left(-2\|\mb G\| - \sum_{k=1}^K \frac{\alpha_ku_{1,k}^2}{\sqrt{n}} - \sum_{i=1}^N \sqrt{n}\beta_i v^2_{1,i}\right)
\end{align*}
where the last equality follows from \eqref{eq:svd G}. On the other hand, by \Cref{lem:global-saddle}, the fact we derived in \eqref{eqn:alpha-beta-inequal} that there exists $k \in [K]$ such that $\alpha_k > -\sqrt{n}\|\mb G\|$ or there exists $i \in [N]$ such that $\beta_i > - {\|\mb G\|}/{\sqrt{n}}$, and that $\|\mb u_1\|_2=\|\mb v_1\|_2=1$, we obtain
\begin{align*}
     - \sum_{k=1}^K \frac{\alpha_ku_{1,k}^2}{\sqrt{n}} - \sum_{i=1}^N \sqrt{n}\beta_i v^2_{1,i} \;< \; \norm{\mb G}{} \paren{ \sum_{k=1}^K u_{1,k}^2 +   \sum_{i=1}^N v_{1,i}^2  } \;=\; 2 \norm{\mb G}{}.
\end{align*}
Therefore, we have
\begin{align*}
    \Hess f(\mb W, \mb H)[\mb \Delta,\mb \Delta] \;<\;  \tau\left(-2\|\mb G\| + 2\|\mb G\|\right) \;=\; 0,
\end{align*}
as desired.
\end{proof}

\begin{proof}[Proof of \Cref{thm:ce landscape}]
Let $(\mb W,\mb H) \in \mc {OB}(d,K) \times \mc {OB}(d,N)$ be a local minimizer of Problem \eqref{eq:ce-loss func}. Suppose that it is not a global minimizer. This implies $(\mb W,\mb H)$ is a critical point that is not a global minimizer. According to \Cref{prop:negative curva}, the Riemannian Hessian at  $(\mb W,\mb H)$ has negative curvature. This contradicts with the fact that $(\mb W,\mb H)$ is a local minimizer. Thus, we concludes that any local minimizer of Problem \eqref{eq:ce-loss func} is a global minimizer in \Cref{thm:ce optim}. Moreover, according to \Cref{prop:negative curva}, any critical point of Problem \eqref{eq:ce-loss func} that is not a local minimizer is a Riemmannian strict saddle point with negative curvature. 
\end{proof}

\end{document}